\theoremstyle{plain}
\newtheorem{theorem}{Theorem}[section]
\newtheorem{proposition}[theorem]{Proposition}
\newtheorem{lemma}[theorem]{Lemma}
\newtheorem{corollary}[theorem]{Corollary}
\theoremstyle{definition}
\newtheorem{definition}[theorem]{Definition}
\theoremstyle{remark}
\newtheorem{remark}[theorem]{Remark}
\newtheorem{example}{Example}
\icmltitlerunning{Questioning the Coverage-Length Metric in Conformal Prediction: When Shorter Intervals Are Not Better}
\newcommand{\cA}{\mathcal{A}}
\newcommand{\cC}{\mathcal{C}}
\newcommand{\cD}{\mathcal{D}}
\newcommand{\cG}{\mathcal{G}}
\newcommand{\cI}{\mathcal{I}}
\newcommand{\cL}{\mathcal{L}}
\newcommand{\cN}{\mathcal{N}}
\newcommand{\cP}{\mathcal{P}}
\newcommand{\cU}{\mathcal{U}}
\newcommand{\cV}{\mathcal{V}}
\newcommand{\bE}{\mathbb{E}}
\newcommand{\bI}{\mathbb{I}}
\newcommand{\bP}{\mathbb{P}}
\newcommand{\bR}{\mathbb{R}}
\begin{document}

\twocolumn[
  \icmltitle{Questioning the Coverage-Length Metric in Conformal Prediction: \\ When Shorter Intervals Are Not Better}



  \icmlsetsymbol{equal}{*}

  \begin{icmlauthorlist}
    \icmlauthor{Yizhou Min}{equal,sufe}
    \icmlauthor{Yizhou Lu}{equal,fdu}
    \icmlauthor{Lanqi Li}{equal,sufe}
    \icmlauthor{Zhen Zhang}{sufe}
    \icmlauthor{Jiaye Teng$^\dag$}{sufe,ids}
  \end{icmlauthorlist}

  \icmlaffiliation{sufe}{School of Statistics and Data Science, Shanghai University of Finance and Economics, Shanghai, China}
  \icmlaffiliation{fdu}{Department of Statistics and Data Science, Fudan University, Shanghai, China}
  \icmlaffiliation{ids}{Institute of Data Science and Statistics, Shanghai University of Finance and Economics, Shanghai, China}

  \icmlcorrespondingauthor{Jiaye Teng}{tengjiaye@sufe.edu.cn}

  \icmlkeywords{Conformal Prediction, Uncertainty Quantification, Evaluation Metrics, Prediction Sets}

  \vskip 0.3in
]



\printAffiliationsAndNotice{\icmlEqualContribution}


\begin{abstract}
\label{sec:abs}
Conformal prediction~(CP) has become a cornerstone of distribution-free uncertainty quantification, conventionally evaluated by its coverage and interval length.
This work critically examines the sufficiency of these standard metrics.
We demonstrate that \emph{the interval length might be deceptively improved through a counter-intuitive approach} termed Prejudicial Trick~(PT), while the coverage remains valid.
Specifically, for any given test sample, PT probabilistically returns an interval, which is either null or constructed using an adjusted confidence level, thereby preserving marginal coverage.
While PT potentially yields a deceptively lower interval length, it introduces practical vulnerabilities: the same input can yield completely different prediction intervals across repeated runs of the algorithm. We formally derive the conditions under which PT achieves these misleading improvements and provide extensive empirical evidence across various regression and classification tasks. 
Furthermore, we introduce a new metric \emph{interval stability} which helps detect whether a new CP method implicitly improves the length based on such PT-like techniques. Code is available at \url{https://github.com/benben-cd/PT-Conformal-Prediction}

\end{abstract}

\section{Introduction}
\label{sec1}

Machine learning has been successfully applied in numerous fields.  However, machine learning models often suffer from overconfidence issues \citep{guo2017calibration,minderer2021revisiting}, making them unreliable for deployment in high-stakes areas such as medicine and finance~\citep{finance}. Therefore, it is crucial to develop techniques for uncertainty quantification and calibrate the original model to enhance the reliability of predictions~\citep{sullivan2015introduction,minderer2021revisiting,smith2024uncertainty}.


Among all the uncertainty quantification methods, conformal prediction~(CP) stands out due to its simplicity and distribution-free characteristics~\citep{vovk2005algorithmic,shafer2008tutorial,angelopoulos2021gentle}. 
CP is a post hoc approach for constructing prediction intervals, based on a non-conformity score calculated on a hold-out calibration set (Algorithm~\ref{alg:vcp}). 
CP and its variants have demonstrated promising performances in numerous applications~\citep{lei2021conformal,angelopoulos2022image}.


Generally, researchers evaluate the intervals returned by CP via two criteria: \emph{coverage} and \emph{interval length}. Firstly, a valid coverage ensures that the actual response value has a high probability of falling within the interval. Secondly, the interval is encouraged to be as short as possible, as a shorter interval provides more precise information about prediction uncertainties. These two evaluation metrics are commonly used in the literature \citep{tibshirani2019conformal,tengpredictive,angelopoulos2023conformal,he2024statistically} and a branch of works improves the length with several different meaningful approaches~\citep{romano2019conformalized,Flexible,tengpredictive,guan2023localized,stutz2022learningoptimalconformalclassifiers}. 
This raises a question on the potential risk of evaluating the CP methods narrowly on these standard metrics: 
\begin{mdframed}[frametitle={\colorbox{white}{\space The key question:\space}},
frametitleaboveskip=-\ht\strutbox,
frametitlealignment=\center]
If a new algorithm outperforms existing ones on coverage and length, should it automatically be considered superior for practical deployment?
\end{mdframed}

Specifically, can a CP method maintain valid coverage and \emph{deceptively improve interval length metrics} through counter-intuitive constructions, while \emph{introducing practical risks}?
Notably, this differs from the existing literature that proposes a new method that falls short in the length metric while performing better in the new metric (\emph{e.g.}, conditional coverage). 
To illustrate how it happens, we next consider the following Example~\ref{example: PatientRecovery}:



\begin{algorithm}[t]
\caption{Prejudicial Trick (PT)}
\label{alg:pcp}
\begin{algorithmic}[1]
\STATE \textbf{Input:} conformal prediction algorithm~(base) $\cA_{1-\alpha}(\cdot;\hat{\mu})$, test point $\bm{x}^\prime$, probability $p$.
\STATE Generate a uniform random variable $U \sim \text{Unif}([0,1])$; \\
\IF{ $U > p$:}
\STATE Interval $\cC_{1-\alpha}(\bm{x}^\prime)= [\hat{\mu}(\bm{x}^\prime),\hat{\mu}(\bm{x}^\prime)]$ (regression tasks) or $\cC_{1-\alpha}(\bm{x}^\prime)=\varnothing$ (classification tasks); \\
\ELSE
\STATE Calculate the adjusted miscoverage rate $\alpha^\prime = 1 - \frac{1-\alpha}{p}$;
\STATE Interval $\cC_{1-\alpha}(\bm{x}^{\prime})=\cA_{1-\alpha^\prime}(\bm{x}^\prime;\hat{\mu});$
\ENDIF
\STATE \textbf{Output:} Interval $\cC_{1-\alpha}(\bm{x}^{\prime}).$
\end{algorithmic}
\end{algorithm}
\begin{example}[The Pitfalls of Length.]
\label{example: PatientRecovery}
Two doctors, Alice and Bob, are estimating recovery time for patients after treatment. CP with historical data reveals that $60\%$ of patients recover within $4$ years, and $80\%$ within $5$ years.
When a new patient asks for an estimated recovery time, Alice and Bob adopt distinct strategies:
\begin{itemize}[nosep]
    \item Alice: Assign recovery time interval $[0, 4]$ years;
    \item Bob: Assign recovery time interval $[0, 5]$ with probability $0.75$, while $[0, 0]$ with probability $0.25$.\vspace{-0.5em}
\end{itemize}
\end{example}

For both strategies in Example~\ref{example: PatientRecovery}, $60\%$ of patients fall in the estimated interval in expectation, thus satisfying the criteria for valid marginal coverage.
Besides, Bob's approach yields a shorter average interval length $5 \times 75\% = 3.75 \ll 4$.
Overall, Bob achieves a shorter interval while achieving the same coverage as Alice.
However, Bob's strategy is flawed in its practical application, since (a) from the micro-level, Bob provides different intervals for the same patient if queried multiple times, and (b) from the macro-level, Bob randomly informs $25\%$ of patients that they will recover immediately after treatment regardless of their actual condition. 
The example is illustrated in Figure~\ref{fig:patient}.

In this paper, inspired by the motivating example (Example~\ref{example: PatientRecovery}), the \emph{Prejudicial Trick} (PT) emerges as a practically invalid method that artificially shortens prediction intervals in CP~(see Algorithm~\ref{alg:pcp} and Figure~\ref{fig:enter-label}).
Instead of providing consistent intervals, PT assigns null intervals with a fixed probability to any test sample, and assigns confidence intervals with lower miscoverage rates in other cases to maintain the marginal coverage. 
While PT preserves marginal coverage and potentially reduces the average interval length, its rationale is less sound compared to standard methods like Vanilla Conformal Prediction~(VCP).
Specifically, PT suffers from two limitations:
\begin{itemize}[nosep]
\vspace{-0.2em}
    \item \emph{Instability issues}: Repeated runs of PT produce different intervals for the same input;
    \item \emph{Unfairness issues}: PT provides informative predictions for only a subset of test samples, while assigning uninformative null intervals to the rest\footnote{In this paper, unfairness stems from the fact that a portion of samples are assigned null intervals in a run, even though each has an equal probability of being prejudiced. This differs from the unfairness concept grounded in conditional coverage \citep{zhao2020individual}, where individuals are prejudiced based on their features. }. 
\end{itemize}

From the theoretical perspective, we offer several theoretical results to provide deep understandings of PT regarding both coverage and length, and informally summarize them in Theorem~\ref{thm:summary}. 

\begin{theorem}[Theoretical Results Summary]
\label{thm:summary}
We term \emph{base} as the base CP algorithm, term \emph{PT} as the base algorithm with PT, and omit mild assumptions for clarity. 
\textbf{For coverage}, it holds that,
\begin{itemize}[nosep, leftmargin=*]
\vspace{-0.3em}
    \item PT satisfies marginal coverage guarantees and further inherits conditional coverage guarantees of base CP (Theorem~\ref{thm:coverage_guarantee_pt});
    \item PT outperforms its base regarding the conditional coverage under some conditions, even if the base does not satisfy conditional coverage guarantees (Remark~\ref{rem:better_conditional_coverage}). 
    \vspace{-0.3em}
\end{itemize}
\textbf{For length}, it holds that:
\begin{itemize}[nosep, leftmargin=*]
\vspace{-0.3em}
    \item PT achieves shorter average intervals than its base under some general conditions (Lemma~\ref{thm:general sufficient condition});
    \item We provide sufficient conditions under which PT reduces the average interval length for both differentiable (Theorem~\ref{thm:FOSC}) and non-differentiable (Theorem~\ref{thm:non-smooth-length}) length functions. Notably, these conditions are often satisfied in the common scenario of model misspecification~(Remark~\ref{rmk:misspecification}).
    \item These results lead to corollaries for specific cases, such as when the length function is locally concave (Corollary~\ref{cor: local concave}) or when the base algorithm is VCP (Corollary~\ref{cor:VCP FOSC}).
    \item We also provide a success case (Example~\ref{ex:continuous_success}) and failure case where PT cannot decrease the average length (Example~\ref{example:failure case}).\vspace{-0.5em}
    \end{itemize}    
    \vspace{-0.3em}
\end{theorem}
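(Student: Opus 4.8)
The plan is to prove each clause by isolating the one structural fact that drives it: PT is a mixture, governed by a randomizer $U\sim\mathrm{Unif}[0,1]$ that is independent of the calibration data and of $(\bm{x}',y')$, between a degenerate output of length (or set size) $0$ with probability $1-p$ and the base interval $\cA_{1-\alpha'}(\cdot;\hat\mu)$ run at the inflated level $1-\alpha'=(1-\alpha)/p$ with probability $p$. The standing assumption $p\ge 1-\alpha$ (equivalently $\alpha'\in[0,\alpha]\subseteq[0,1)$), together with $p\le 1$ which forces $1-\alpha'=(1-\alpha)/p\ge 1-\alpha$, is used throughout.

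For the coverage clauses I would condition on $U$. Since $U$ is independent of everything else, $\mathbb{P}\big(y'\in\cC_{1-\alpha}(\bm{x}')\big)=p\,\mathbb{P}\big(y'\in\cA_{1-\alpha'}(\bm{x}';\hat\mu)\big)+(1-p)\,\mathbb{P}\big(y'\in\{\hat\mu(\bm{x}')\}\big)$, and under exchangeability the first term is $\ge 1-\alpha'=(1-\alpha)/p$ while the second is $\ge 0$, so the sum is $\ge 1-\alpha$; this gives marginal coverage. Redoing the same decomposition after conditioning on $\bm{x}'$ (still legitimate because $U\perp\bm{x}'$) shows PT inherits any conditional-coverage guarantee of its base, with the same rescaling $1-\alpha'\mapsto p(1-\alpha')=1-\alpha$. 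For the clause that PT can strictly improve conditional coverage even when the base fails it, write $c(\bm{x})$ and $c'(\bm{x})$ for the base's conditional coverage at levels $1-\alpha$ and $1-\alpha'$, so that PT's conditional coverage is $\bm{x}\mapsto p\,c'(\bm{x})$; I would then compare a miscalibration functional — e.g. $\mathbb{E}_{\bm{x}'}[(\cdot-(1-\alpha))^2]$ or $\sup_{\bm{x}'}|\cdot-(1-\alpha)|$ — between $c(\cdot)$ and $p\,c'(\cdot)$, and extract the condition on the base's over-coverage profile (enough mass on which $c$ substantially exceeds $1-\alpha$) under which the pointwise-larger but downscaled family $p\,c'$ contracts toward the target.

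For the length clauses the unifying device is the reparametrization $\beta:=1-\alpha'=(1-\alpha)/p\in[1-\alpha,1]$, i.e. $p=(1-\alpha)/\beta$. Because the degenerate branch contributes length $0$, the expected PT length is $L_{\mathrm{PT}}(\beta)=p\,M(\beta)=(1-\alpha)\,M(\beta)/\beta$, where $M(\beta)$ is the expected length of the base interval at coverage level $\beta$ and $L_{\mathrm{base}}=M(1-\alpha)$. Hence PT is strictly shorter for a suitable $p$ iff $\inf_{\beta\in(1-\alpha,1]}M(\beta)/\beta<M(1-\alpha)/(1-\alpha)$, i.e. iff the length-per-unit-coverage ratio is not locally minimized at $1-\alpha$ (Lemma~\ref{thm:general sufficient condition}). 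Differentiating this ratio gives the first-order sufficient condition $M'((1-\alpha)^+)<M(1-\alpha)/(1-\alpha)$ (Theorem~\ref{thm:FOSC}), and replacing the derivative by a one-sided Dini derivative or a subgradient yields the non-differentiable version (Corollary~\ref{thm:non-smooth-length}). If $M$ is concave with $M(0^+)=0$ — which holds whenever the base interval degenerates at zero coverage — the secant inequality forces $M'((1-\alpha)^+)\le M(1-\alpha)/(1-\alpha)$, giving the concave corollary (Corollary~\ref{cor: local concave}); for VCP I would substitute $M(\beta)=2\hat q_\beta$ (twice the empirical residual quantile, $\hat f$ the residual density) and use $\hat q_\beta'=1/\hat f(\hat q_\beta)$ to turn the first-order condition into a lower bound on the residual density at the $(1-\alpha)$-quantile (Corollary~\ref{cor:VCP FOSC}), which inflated residuals under a misspecified $\hat\mu$ tend to satisfy (Remark~\ref{rmk:misspecification}). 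The failure example is produced by exhibiting an $M$ with $M(\beta)/\beta$ nondecreasing on $[1-\alpha,1]$ — e.g. uniform residuals, where $M(\beta)\propto\beta$ and the ratio is constant, or any convex $M$ with $M(0)=0$ — so that no choice of $p$ can help (Example~\ref{example:failure case}).

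The main obstacle I expect is the third coverage clause: the remaining statements all collapse to the one-line mixture identity plus a shape inequality for the curve $M$, whereas ``PT improves conditional coverage'' requires first fixing the right notion of improvement and then pinning down a checkable, genuinely non-vacuous hypothesis on the base's conditional-coverage profile under which $p\,c'(\cdot)$ — pointwise larger than $c(\cdot)$ yet scaled by $p<1$ — is provably closer to $1-\alpha$; keeping that hypothesis clean without its being trivially true only for pathologically over-covering bases is where the real work lies. A secondary nuisance is that $M(1)$ can be $+\infty$ for unbounded responses, so when analyzing length I would stay with interior $\beta$ and the local (first-order) criteria rather than the endpoint $p=1-\alpha$.
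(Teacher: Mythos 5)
Your mixture-decomposition argument for the first two coverage clauses (condition on $U\perp(\bm{x}',y',\mathcal{D}_\mathrm{ca})$, use $p(1-\alpha')=1-\alpha$) is exactly the paper's proof of Theorems~\ref{prop3.1 1-a} and~\ref{thm: conditional coverage}, and your ratio reparametrization $L_{\mathrm{PT}}(\beta)=(1-\alpha)M(\beta)/\beta$ is precisely the paper's substitution $\cF(p)=p\,\cG(c/p)$ in Appendix~\ref{appendix:theorem fosc}: the first-order condition $\cF'(1)>0$ reduces to $\cG(c)/c>\cG'(c)$, the secant version replaces this with a finite-difference quotient, and concavity of $\cG$ on $[0,c]$ together with $\cG(0)\ge 0$ (you wrote $M(0^+)=0$, which is sufficient but a touch stronger than needed) forces the first-order inequality. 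So on length, you have essentially the paper's proof.

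The real gap is your treatment of the third coverage clause (Theorem~\ref{prop:altered cc}), which you correctly flag as the hardest piece but then attack in the wrong direction. You propose to quantify ``PT outperforms the base on conditional coverage'' via a miscalibration functional ($L^2$ or $\sup$ distance of the coverage curve from $1-\alpha$), and then look for conditions under which the rescaled, pointwise-inflated curve $p\,c'(\cdot)$ is closer to nominal. That is a genuinely harder question than the one the paper answers. The paper fixes a subset $\cA$, writes the base's and PT's conditional coverage on $\cA$ as $1-f_\cA(\alpha)$ and $p\bigl(1-f_\cA(\alpha')\bigr)$ with $\alpha'=1-(1-\alpha)/p$, and simply asks when the latter dominates the former; after introducing $\cF(p)=p\,f_\cA\bigl(1-(1-\alpha)/p\bigr)$, the inequality $p(1-f_\cA(\alpha'))\ge 1-f_\cA(\alpha)$ rearranges to $\cF(1)-\cF(p)\ge 1-p$, which is then \emph{stated as} the sufficient condition. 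In other words, the ``condition'' is essentially the conclusion in disguise, and the proof is a two-line algebraic rearrangement; no calibration functional, no over-coverage-profile argument. Your version would need to resolve a trade-off (pointwise increase versus global shrinkage by $p$) that the paper deliberately sidesteps. If you wanted to follow the paper's route, compare the two conditional coverage probabilities on $\cA$ directly and read off the algebraic hypothesis.

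A minor secondary issue: in the failure case you offer uniform residuals ($M(\beta)\propto\beta$) as one concrete instance, but that gives equality $\bE|\cC^{\text{PT}}|=\bE|\cC^{\text{CP}}|$ for all $p$, not the strict inequality claimed in Example~\ref{example:failure case}. Your alternative remark ``any convex $M$ with $M(0)=0$'' is the right family and does contain the paper's actual choice, a Gaussian score, whose quantile map $\beta\mapsto 2\Phi^{-1}\bigl((1+\beta)/2\bigr)$ is strictly convex on $(0,1)$ and gives a strict failure via the tangent-line argument of Lemma~\ref{lem:technique lemma}.
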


From the experimental perspective, we verify our findings on various real-world datasets, regarding marginal and conditional coverage (Figure~\ref{fig:marginal&conditional coverage}), and interval length (Table~\ref{table:reg}). 
Besides, we validate our findings under different settings, including different tasks (classification regimes in Table~\ref{table:class}) and other CP algorithms (Conformalized Quantile Regression in Table~\ref{table:CQR}).

However, the improvement on length is vacuous, as discussed in Section~\ref{sec:vacuous randomness}. To make this failure mode explicit, we further introduce \emph{Interval Stability}, a complementary diagnostic that quantifies the run-to-run variation of the prediction set for the same input. Rather than replacing coverage, length, or downstream utility-based evaluation, \emph{Interval Stability} helps contextualize reported length improvements when the returned prediction sets depend on stochastic components of the algorithm (Remark~\ref{rmk:why IS}).

\begin{remark}[PT Hacks the Coverage-Length Metric]
We present PT not as a practical solution, despite its theoretical advantages on the coverage-length metric. 
Instead, it serves as a \textbf{cautionary example that highlights issues such as instability and unfairness} during deployment.
PT represents a simple way to \emph{hack} the coverage-length metric. 
This reminds us of identifying where the actual improvements come from within a potentially complex algorithm, since they may implicitly contain PT-like tricks. 
\end{remark}

\begin{remark}[Practical Relevance of the PT with Current CP Methods]
    \label{rmk: practical relevance}
     PT clarifies a possible risk in stochastic CP pipelines that when the returned prediction set depends on randomized components, average coverage and average length alone may hide run-to-run variation for the same input. For example, localized CP~\citep{hore2024conformalpredictionlocalweights} often relies on an estimated local scale function, which may vary across training runs when learned by randomized procedures such as neural networks or random forests. Proposition~\ref{example:special case of localized cp} shows that PT can be viewed as a degenerate limiting case of such scale-based normalization. This connection is not a criticism of localized CP itself but a motivation of introducing stability-related diagnostics whenever randomness in model training or score construction can affect the returned prediction sets. Another example is in classification task where randomized tie-breaking procedures are used in methods such as APS/RAPS~\citep{aps,angelopoulos2020uncertainty} which may also introduce mild algorithmic randomness. Our point is not that such procedures are PT-like, but that their stochasticity should be made explicit when interpreting coverage and length. We empirically revisit these issues in Section~\ref{sec:interval stability} by comparing interval stability for deterministic baselines and stochastic CP pipelines.
\end{remark}

\begin{remark}[Why Study PT In Conformal Prediction]
    \label{rmk:extension}
The relevance between PT and CP lies in the model misspecification (Remark~\ref{rmk:misspecification}).
Specifically, model misspecification serves as a potential sufficient condition where PT works, and it generally appears in real-world applications of CP.
However, the existence of model misspecification does not always hold for other interval estimation tasks, and therefore limits the extensions of PT beyond CP. 
\end{remark}





\section{Related Work}
\label{sec2}

Conformal prediction~\citep{vovk2005algorithmic} is mainly evaluated by the coverage-length metric. 
For coverage, CP provides finite sample guarantees under exchangeability assumptions~\citep{vovk2005algorithmic, tibshirani2019conformal, barber2023conformal}, ensuring that prediction sets achieve the expected marginal coverage. 
Another related metric is {conditional coverage}, which is unachievable in finite sample settings without further assumptions~\citep{pmlr-v25-vovk12}. 
Therefore, recent work focuses on various relaxations of conditional coverage~\citep{foygel2021limits, gibbs2025conformal}.

Another metric is the average \emph{interval length}, as shorter intervals are generally more informative~\citep{lei2018distribution, Sadinle_2018}. 
Numerous methods aim to construct adaptive intervals that reduce length while maintaining valid coverage. 
One line of work designs alternative non-conformity score functions: for example, \citet{romano2019conformalized} integrate quantile regression, while \citet{guan2023localized} propose a localized method that adapts to test-time information. 
Other score functions have been proposed in~\citep{feldman2021improving, alaa2023conformalized, han2022split, tengpredictive}. 
Of particular interest is the method of \citet{Flexible}, which estimates the asymptotic conditional distribution of the non-conformity scores and constructs prediction intervals based on high-density regions. 
Another line of work involves a training procedure for interval-length optimization, such as CPL~\citep{kiyani2024length}, CP-Gen~\citep{bai2022efficientdifferentiableconformalprediction}, ConfTr~\citep{stutz2022learningoptimalconformalclassifiers}, and BoostedCP~\citep{xie2024boosted}. 
Different from existing methods that provide principled and meaningful advances in CP, our proposed prejudicial trick achieves efficiency gains through an illusory construction, making the length improvement non-substantive.

Besides coverage and length, several auxiliary metrics have been introduced. 
These include \textit{excess} and \textit{deficit}~\citep{seedat2023improving}, which measures the extent to which the prediction intervals are unnecessarily wide or insufficiently narrow; \textit{false positive rate}~\citep{pmlr-v162-fisch22a}, which improves precision by limiting the number of incorrect labels in classification settings; and \textit{conditional weighted coverage}~\citep{Jensen_2024}---a hybrid metric that takes both coverage and length into account. Among them, a particularly important evaluation criterion is \textit{group coverage}~\citep{cauchois2021knowing}, which assesses the coverage and interval length across population subgroups defined by features or response magnitudes. 
However, in practice, people still tend to prioritize the coverage and length metrics~\citep{lei2018distribution,cresswell2024conformal, zhang2024evaluating, xu2025two}.
Notably, this paper introduces a new metric distinct from existing approaches. Unlike existing studies that design metrics mainly to showcase favorable performance but often struggle with length, we show that PT potentially surpasses its base model in length while performing poorly under the new metric.
We provide additional related works on CP and interval regression in Appendix~\ref{appendix: related work}.

\begin{table*}[t]
\caption{Results for the synthetic datasets (motivating example in Section~\ref{sec:PT}). Comparison between VCP and PT-VCP under different $\alpha$ levels, regarding length and coverage.}
\label{tab:motivating_example}
\begin{center}
\begin{small}
\begin{tabular}{lcccccc}
\toprule
$\alpha$ & $p$ & \multicolumn{2}{c}{VCP} & \multicolumn{2}{c}{PT-VCP} \\
\cmidrule(lr){3-4} \cmidrule(lr){5-6}
         &     & Coverage & Length & Coverage & Length \\
\midrule
\multirow{2}{*}{0.10} 
  & 0.96 & 0.906 \scriptsize{$\pm 0.004$} & 22.894 \scriptsize{$\pm 0.138$} & 0.909 \scriptsize{$\pm 0.005$} & \textbf{22.614} \scriptsize{$\pm 0.254$} \\
  & 0.98 & 0.906 \scriptsize{$\pm 0.004$} & 22.894 \scriptsize{$\pm 0.138$} & 0.904 \scriptsize{$\pm 0.004$} & \textbf{22.714} \scriptsize{$\pm 0.165$} \\
\midrule
\multirow{2}{*}{0.20} 
  & 0.96 & 0.792 \scriptsize{$\pm 0.011$} & 21.886 \scriptsize{$\pm 0.125$} & 0.799 \scriptsize{$\pm 0.011$} & \textbf{21.255} \scriptsize{$\pm 0.136$} \\
  & 0.98 & 0.792 \scriptsize{$\pm 0.011$} & 21.886 \scriptsize{$\pm 0.125$} & 0.796 \scriptsize{$\pm 0.010$} & \textbf{21.589} \scriptsize{$\pm 0.149$} \\
\bottomrule
\end{tabular}
\end{small}
\end{center}
\end{table*}

\newcommand{\tr}{{\text{tr}}}
\newcommand{\ca}{{\text{ca}}}

\section{Prejudicial Trick with Deceptive Improvement}
\label{sec3}



In this section, we challenge the coverage-length metric in CP by constructing a trick in Section~\ref{sec:PT}.
Specifically, this trick potentially improves the interval length while maintaining the coverage, yet it introduces instability and unfairness issues. 
We further investigate how this trick influences the coverage~(Section~\ref{sec:coverage}) and the length~(Section~\ref{sec:length}) theoretically and empirically.
Finally, we discuss more details on the deceptive improvements of this trick in Section~\ref{sec:vacuous randomness}.


\subsection{Preliminary}
\label{sec3.1}

\textbf{Conformal Prediction.}
CP creates statistically rigorous uncertainty sets for any predictive model. Given $\bm{X}$ as the input and $\alpha\in(0,1)$ as the miscoverage rate, CP returns an uncertainty set\footnote{CP algorithms either return a set measured by its size (for classification tasks), or return an interval measured by its length (for regression tasks). We do not distinguish between these terms throughout the paper.} $\cC_{1-\alpha}(\bm{X})$ that satisfies
\begin{equation}
    \bP(y\in\cC_{1-\alpha}(\bm{X})) \ge 1-\alpha,
\end{equation}
where $y$ denotes the true response of feature $\bm{X}$. We omit the detailed discussion in Appendix~\ref{appendix:prelim}.

\textbf{Notations.}
Let $\{Z_i\}_{i=1}^n$ denote $n$ i.i.d.~samples drawn from the distribution $\cP_Z$, where $Z \in \bR$. Denote $\{Z_{(i)}\}_{i=1}^n$ as the order statistics of $\{Z_i\}_{i=1}^n$ arranged in decreasing order, \emph{i.e.},
$
Z_{(1)} \geq Z_{(2)} \geq \dots \geq Z_{(n)}.
$
The empirical $\tau$-th quantile with $n$ samples is defined as
$
\hat{Q}_{\tau}(\{Z_i\}_{i=1}^n) := Z_{(\lceil (n+1)(1-\tau)\rceil)}.
$
Let $\varnothing$ denote the empty set, and $\bI(\cdot)$ denote the indicator function. 
For a given set $\cC$, let $|\cC|$ denote the measure of the set. In this paper, we denote a trained CP algorithm that directly outputs the $1-\alpha$ confidence interval given a test point as $\cA_{1-\alpha}(\cdot;\hat{\mu})$ for simplicity, where $\hat{\mu}$ is the machine learning algorithm used in the CP algorithm.

\subsection{Prejudicial Trick}
\label{sec:PT}


In this section, we propose a trick~(Algorithm~\ref{alg:pcp}) used in CP with the intuition from Example~\ref{example: PatientRecovery} and deploy this trick in a motivating example on a synthetic dataset. The experiment results in Table~\ref{tab:motivating_example} demonstrate that this trick improves the interval length while maintaining the marginal coverage compared to its base. We begin with the construction process of this trick:

\textbf{Construction Process.}
For each test point, assign a null set\footnote{The null set represents a set of measure zero. It can be an empty set, or a single-point set in regression. } with probability $1-p$, and assign the interval with an adjusted miscoverage rate $\alpha^{\prime} = 1 - \frac{1-\alpha}{p}$ for the remaining $p$ portion of test points, where $p\in(1-\alpha, 1)$.
Overall, for a new test point $\bm{x}^\prime$, the interval is constructed as Equation~\eqref{eqt C(X)}:
\begin{equation}
\label{eqt C(X)}
    \cC^{\text{PT}}_{1-\alpha}(\bm{x}^{\prime})=
    \begin{cases}
    \text{null set} & \text{with the probability}\ 1-p,\\
    \cC^{\text{CP}}_{1-\alpha^\prime}(\bm{x}^\prime) & \text{with the probability} \ p.
    \end{cases}
\end{equation}

We call this process Prejudicial Trick~(PT, see in Algorithm~\ref{alg:pcp}). Note that PT can be directly applied to any base CP algorithm.
To illustrate that PT improves length without sacrificing marginal coverage, we empirically consider a motivating example in Example~\ref{example: synthetic}.


\begin{figure*}[t]
    \centering
    \begin{subfigure}[t]{0.24\linewidth}
        \centering
        \includegraphics[width=\linewidth]{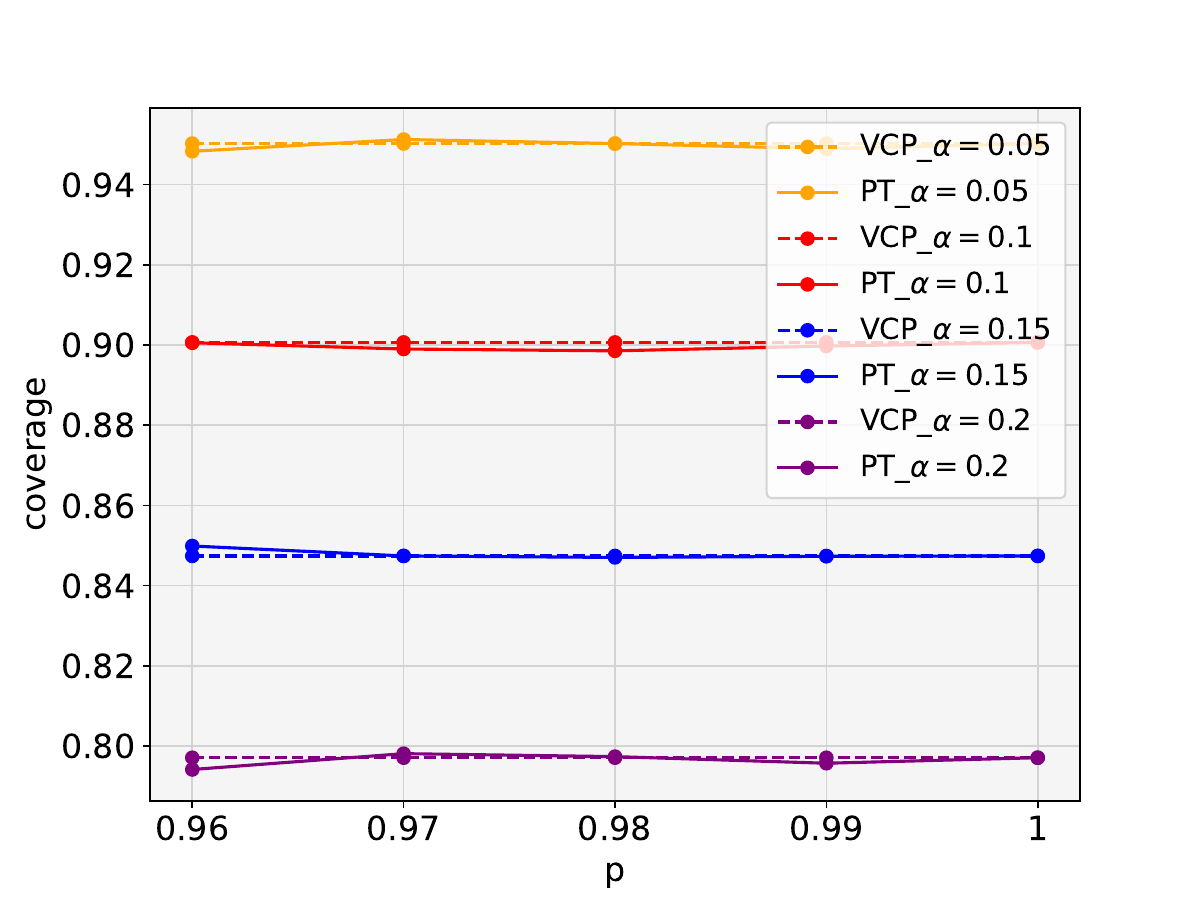}
        \caption{MC}
        \label{fig:marginal coverage}
    \end{subfigure}
    \hfill
    \begin{subfigure}[t]{0.24\linewidth}
        \centering
        \includegraphics[width=\linewidth]{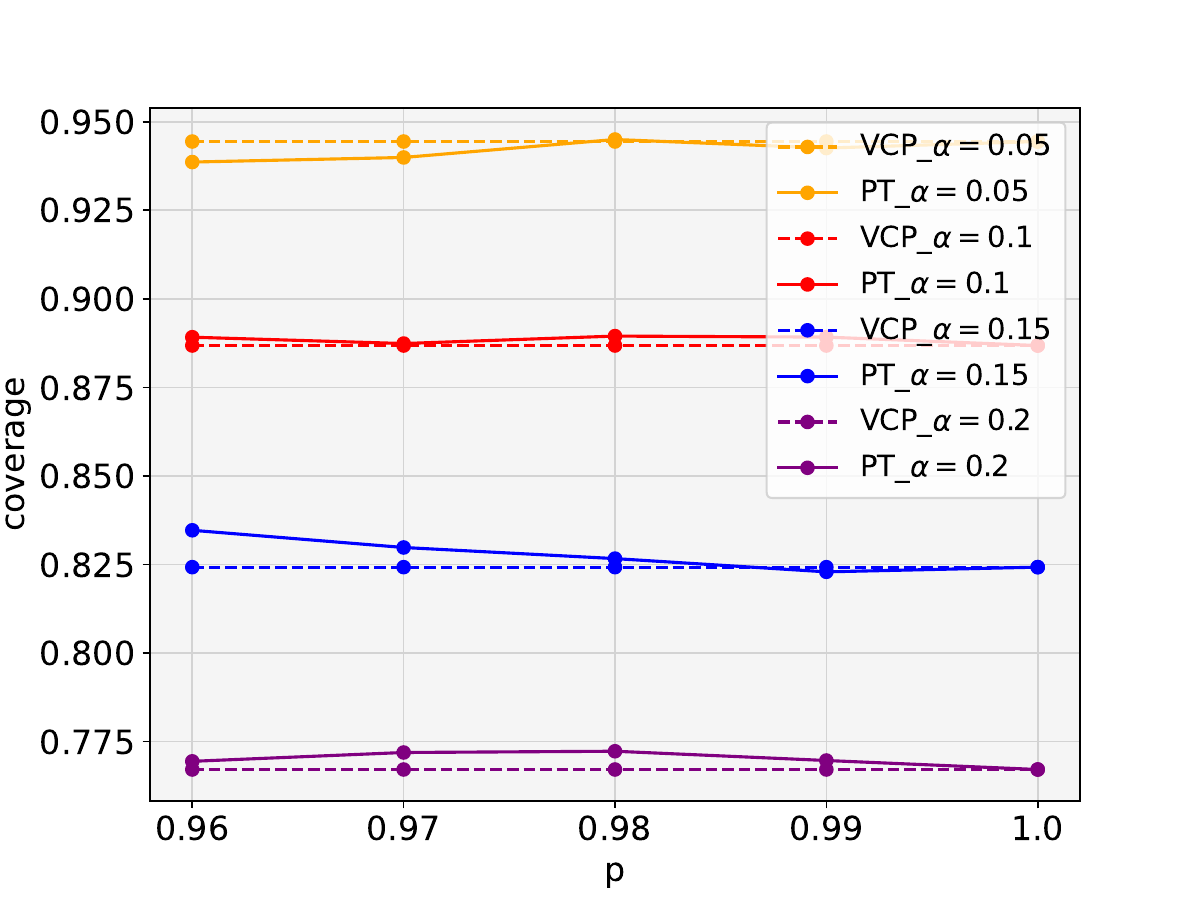}
        \caption{CC, Group Day}
        \label{fig:conditional coverage day}
    \end{subfigure}
    \hfill
    \begin{subfigure}[t]{0.24\linewidth}
        \centering
        \includegraphics[width=\linewidth]{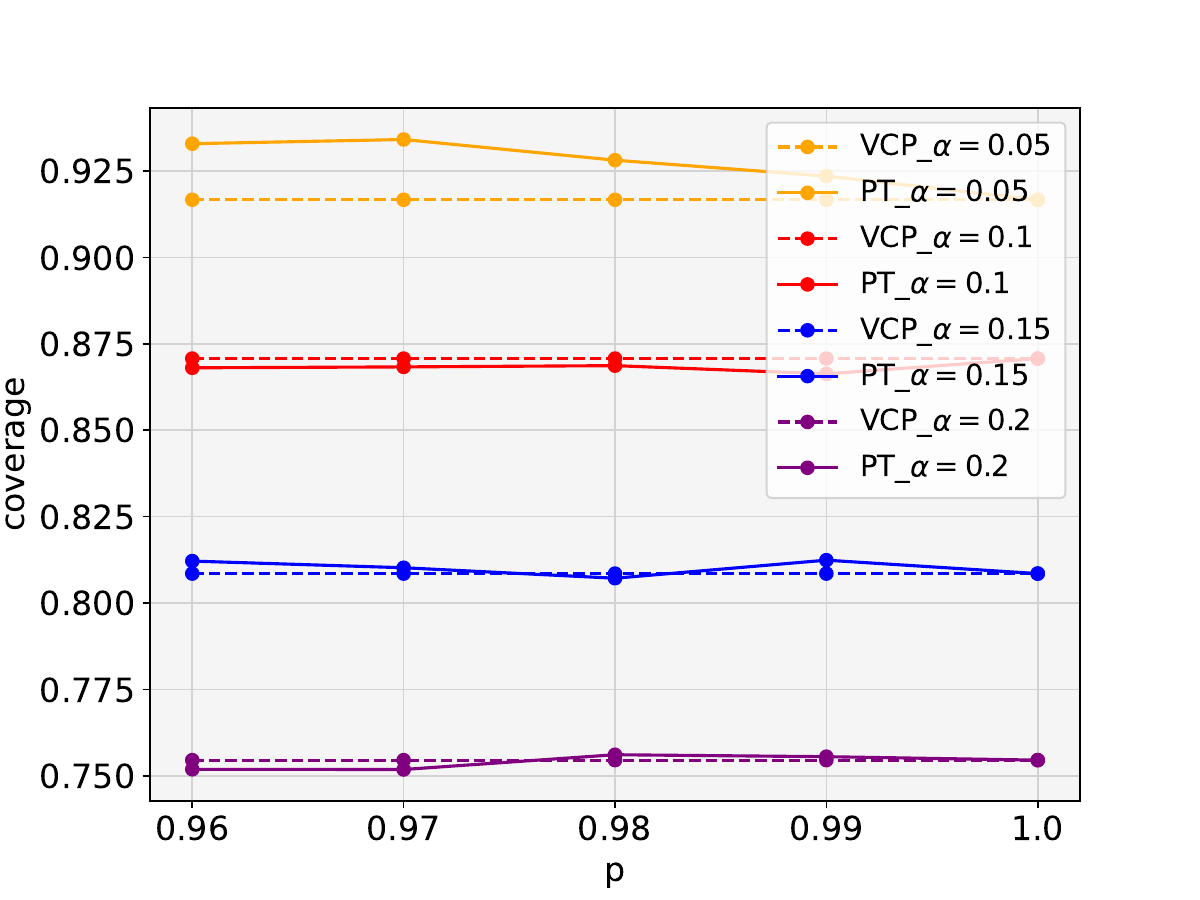}
        \caption{CC, Group Month}
        \label{fig:conditional coverage month}
    \end{subfigure}
    \hfill
    \begin{subfigure}[t]{0.24\linewidth}
        \centering
        \includegraphics[width=\linewidth]{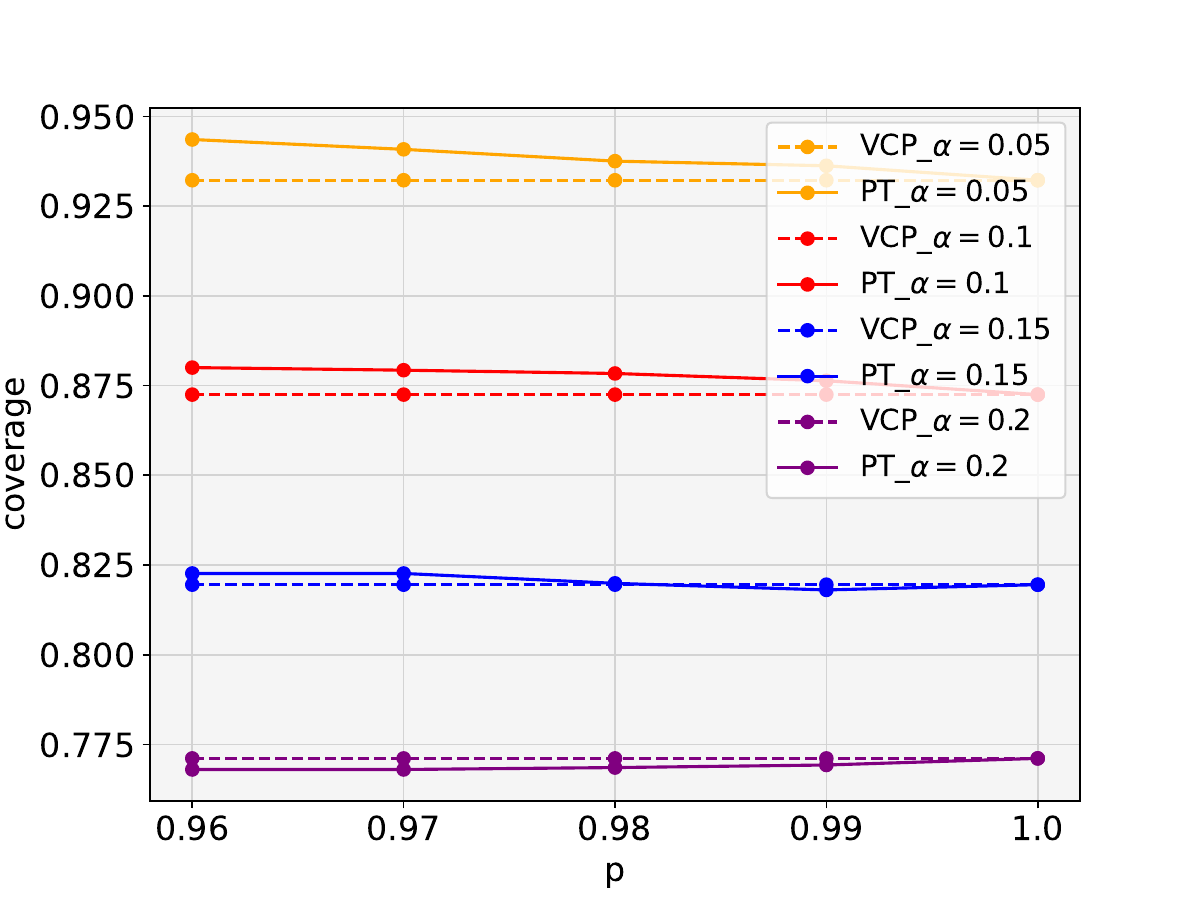}
        \caption{CC, Group Year}
        \label{fig:conditional coverage year}
    \end{subfigure}
    \caption{Comparing the (a)~marginal coverage and (b, c, d)~conditional coverage between VCP with and without PT. Results demonstrate that PT would not significantly change the marginal coverage; and PT has better conditional coverage compared to the base algorithm. }
    \label{fig:marginal&conditional coverage}
\end{figure*}

\begin{example}[Synthetic Dataset]
\label{example: synthetic}
    \emph{Consider a regression setting where the true underlying model is a linear model with the Gaussian mixture noise, given by $Y = \bm{X}^{\top} \bm{\beta} + \epsilon$, with $\bm{X} \sim \cN(\bm{0}, I_2)$.
The noise term $\epsilon$ follows $\cN(\mu, 1)$ with probability $0.5$, and $\cN(-\mu, 1)$ with probability $0.5$. 
The training fold, calibration fold, and test fold are generated based on this underlying distribution.
We deploy VCP~(Algorithm~\ref{alg:vcp}) and PT-VCP under such regimes. We refer to Appendix~\ref{appendix:motivating detail} for more details.}
\end{example}

\textbf{Results and Discussions of Example~\ref{example: synthetic}.}
Tabel~\ref{tab:motivating_example} illustrates the results of Example~\ref{example: synthetic}, where \textit{PT-VCP improves the length while maintaining the marginal coverage compared to VCP}. 
The results validate that the coverage-length metric could be hacked by invalid tricks like PT. 
The insights are as follows: the construction of $\epsilon$ guarantees $\cC^{\text{CP}}_{1-\alpha^\prime}$ close to $\cC^{\text{CP}}_{1-\alpha}$ regarding the length when choosing proper $\alpha$ and $p$. 
Therefore, PT potentially improves the average length when averaging with those null sets.

Notably, \citet{foygel2021limits} propose a method similar to PT. 
Unlike PT which emphasizes the potential length improvement, \citet{foygel2021limits} mainly center on conditional coverage metrics. 
Moreover, we extend the scope of PT in Remark~\ref{rmk:extreme example} by relaxing the notion of the null set.

\begin{remark}[The extension of PT]
    \label{rmk:extreme example}
PT in Algorithm~\ref{alg:pcp} heavily relies on the notion of null sets. Fortunately, one can extend this null set with an interval returned by CP with a small coverage rate. For example, PT can be constructed as follows:
\begin{equation}
\label{eqt C(X)2}
    \cC^{\text{PT}}_{1-\alpha}(\bm{x}^{\prime})=
    \begin{cases}
    \cC^{\text{CP}}_{1-\alpha_1^\prime}(\bm{x}^{\prime}) & \text{with the probability}\ 1-p,\\
    \cC^{\text{CP}}_{1-\alpha_2^\prime}(\bm{x}^\prime) & \text{with the probability} \ p,
    \end{cases}
\end{equation}
where $(1-p) \alpha_1^\prime + p \alpha_2^\prime = \alpha$ which guarantees the marginal coverage and $\alpha_1^\prime$ is sufficiently small. 
We mainly consider the null set in this paper to simplify the related discussions. 
\end{remark}

Although PT appears to be an artificially constructed special case, we find that it strongly correlates with localized CP.
We demonstrate that PT is a special case of localized CP in Proposition~\ref{example:special case of localized cp}.

\begin{proposition}[Relation Between PT And Localized CP]
    \label{example:special case of localized cp}
    In localized CP, we use the normalized nonconformity score to construct prediction sets, which is defined as
    \begin{equation}
        \hat{S}^\text{norm}(x,y) = \hat{S}(x,y)/{\hat{\sigma}(x)},
    \end{equation}
    where $\hat{S}(x,y)$ is the nonconformity score and $\hat{\sigma}(\cdot)$ is a trained estimator of local scale. Consider an extreme case that $\hat{\sigma}$ can only output $0^+$ and $\frac{q_{1-\alpha^\prime}}{q_{1-\alpha}}$  with probability $1-p$ and $p$ respectively\footnote{If we train $\hat{\sigma}$ at each run, the randomness during training might lead to different outputs given a fixed $x$.}, where $q_{1-\alpha^\prime}$ and $q_{1-\alpha}$ are quantiles on calibration data. In this case, intervals returned by localized CP are equivalent to the form in Equation~\eqref{eqt C(X)}. 
\end{proposition}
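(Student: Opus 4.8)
# Proof Proposal for Proposition~\ref{example:special case of localized cp}

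The plan is to show that under the specified extreme choice of $\hat{\sigma}$, the localized CP prediction set reduces exactly to the two-case construction in Equation~\eqref{eqt C(X)}. First I would recall the mechanics of localized CP: one computes normalized scores $\hat{S}^{\text{norm}}(x_i,y_i) = \hat{S}(x_i,y_i)/\hat{\sigma}(x_i)$ on the calibration set, takes the appropriate empirical quantile $\hat{q}$ of these normalized scores at level $1-\alpha$, and then forms $\cC_{1-\alpha}(\bm{x}') = \{y : \hat{S}(\bm{x}',y) \le \hat{\sigma}(\bm{x}')\,\hat{q}\}$. Because $\hat{\sigma}$ is a randomized estimator taking only the values $0^+$ and $1$, the calibration-side normalized scores and the test-side threshold both depend on the realized draws; the key step is to track what happens conditionally on the test point's realized $\hat{\sigma}(\bm{x}')$ value.

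The core argument splits on the test realization. When $\hat{\sigma}(\bm{x}') = 0^+$, the threshold $\hat{\sigma}(\bm{x}')\,\hat{q}$ collapses to (essentially) zero, so the set $\{y : \hat{S}(\bm{x}',y) \le 0^+\}$ contains only the point(s) achieving the minimal nonconformity score — i.e., the null set in the sense of this paper (a single-point set $\hat{\mu}(\bm{x}')$ in regression, or $\varnothing$ in classification when no label attains the floor). This occurs with probability $1-p$ by construction. When $\hat{\sigma}(\bm{x}') = 1$, the set becomes $\{y : \hat{S}(\bm{x}',y) \le \hat{q}\}$, which is exactly a vanilla-style CP interval — but at an adjusted level, because the quantile $\hat{q}$ was computed from calibration scores that were themselves inflated to $+\infty$ on the $1-p$ fraction of calibration points with $\hat{\sigma} = 0^+$. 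I would make precise that dividing a positive score by $0^+$ sends it to $+\infty$, so effectively only the $p$ fraction of calibration points with $\hat{\sigma}=1$ contribute finite normalized scores; matching the order statistics, the $1-\alpha$ quantile over all $n$ normalized scores coincides with the $1-\alpha'$ quantile over the finite-valued subset, where $1 - \alpha' = (1-\alpha)/p$ — precisely the adjusted rate in Algorithm~\ref{alg:pcp}. Hence on this branch the localized set equals $\cC^{\text{CP}}_{1-\alpha'}(\bm{x}')$.

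Combining the two branches yields exactly Equation~\eqref{eqt C(X)}: null set with probability $1-p$, and $\cC^{\text{CP}}_{1-\alpha'}(\bm{x}')$ with probability $p$, with $\alpha' = 1 - \frac{1-\alpha}{p}$. I expect the main technical obstacle to be handling the $0^+$ limit rigorously rather than sloppily: one must argue either via a genuine limiting argument (let $\hat{\sigma}$ take value $\varepsilon$ and send $\varepsilon \to 0$) or by adopting the paper's convention that $\hat{S}/0^+ = +\infty$ and that $+\infty$-valued calibration scores are discarded from the quantile computation in the standard way, then checking that the ceiling-based empirical quantile indices line up so that the quantile over $n$ points with $\lceil p n\rceil$ finite values matches the quantile one would use with that many calibration points at level $1-\alpha'$. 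A secondary subtlety is the boundary behavior on the null branch — whether the minimal-score set is genuinely a single point or could be slightly larger — which is why the paper's footnote defining the null set as "a set of measure zero" is invoked; I would state explicitly that generically (e.g., for continuous score functions in regression) this set is $\{\hat{\mu}(\bm{x}')\}$, matching line 4 of Algorithm~\ref{alg:pcp}.
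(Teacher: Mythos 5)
The paper's appendix does not include a proof of this proposition, so there is no authorial argument to compare against; I can only assess your reasoning on its own. Your approach is the natural one and is essentially correct: split on the realization of $\hat{\sigma}(\bm{x}')$, observe that $\hat{\sigma}(\bm{x}')=0^+$ collapses the threshold $\hat{\sigma}(\bm{x}')\,\hat{q}$ to a measure-zero set (the null set of Algorithm~\ref{alg:pcp}), and that $\hat{\sigma}(\bm{x}')=1$ yields a finite set whose effective quantile level is $1-\alpha'$. The key non-obvious step is the index-shift computation: roughly $(1-p)n$ calibration points get normalized score $+\infty$, so the $\lceil(n+1)\alpha\rceil$-th largest normalized score (the $1-\alpha$ quantile) has rank $\lceil(n+1)\alpha\rceil - (1-p)n$ among the finite ones; as a fraction of the subsample size $\sim pn$, this rank is $\approx (\alpha+p-1)/p = \alpha'$, so $\hat{q}$ is the $(1-\alpha')$-quantile of the finite subsample. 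You identified this correctly, and your remarks on handling $0^+$ (via a vanishing $\varepsilon$ or the convention $\hat{S}/0^+ = +\infty$) are the right way to make that step rigorous.

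One subtlety deserves more care than you give it: in the non-null branch, $\hat{q}$ is the $(1-\alpha')$-quantile of the raw scores over the \emph{random subsample} of calibration points with $\hat{\sigma}(\bm{x}_i)=1$, whereas $\cC^{\text{CP}}_{1-\alpha'}(\bm{x}')$ in Equation~\eqref{eqt C(X)} is, for a VCP base, computed over the full calibration set. These are different random quantities; they agree in distribution only asymptotically, and the subsample is redrawn whenever $\hat{\sigma}$ is retrained, so the informative interval is itself stochastic across runs rather than the fixed $\cC^{\text{CP}}_{1-\alpha'}(\bm{x}')$ of Algorithm~\ref{alg:pcp}. This is a mild mismatch with a literal reading of Equation~\eqref{eqt C(X)}, though it does not undermine the proposition's qualitative point --- the PT-like decomposition (null set with probability $1-p$, enlarged informative interval with probability $p$, marginal coverage preserved) is faithfully reproduced, and the extra randomness in the non-null branch only reinforces the paper's instability message. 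You should flag this distinction explicitly rather than assert exact equality.
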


In practice, beyond the two-point distribution in Proposition~\ref{example:special case of localized cp}, the interval length returned by localized CP  across retraining runs might become a random variable following a more general distribution with a lower expectation. Consequently, even if one fixes a single trained $\hat{\sigma}$, retraining $\hat{\sigma}$ multiple times and reporting a favorable run amounts to cherry-picking, yielding shorter length without leveraging task information, while marginal coverage remains valid.

\subsection{Coverage}
\label{sec:coverage}

This section investigates the coverage guarantee of CP with PT. We prove that PT maintains the marginal coverage and conditional coverage guarantees in Theorem~\ref{thm:coverage_guarantee_pt} and discuss the conditional coverage of PT when the conditional coverage of base CP is not valid in Remark~\ref{rem:better_conditional_coverage}.
The empirical validation in Figure~\ref{fig:marginal&conditional coverage} supports the theoretical findings.

We begin with the formal guarantee of marginal and conditional coverage of PT in Theorem~\ref{thm:coverage_guarantee_pt}. We prove that PT keeps marginal coverage and further keeps conditional coverage if the base interval satisfies the conditional guarantee.

\begin{theorem}[Coverage Guarantee of PT]
\label{thm:coverage_guarantee_pt}
Assume that the exchangeability assumption holds~(see Proposition~\ref{def:exc}). 
Then the interval returned by Algorithm~\ref{alg:pcp} with the adjusted miscoverage rate $\alpha^\prime = 1 - \frac{1-\alpha}{p}$ guarantees that 
\begin{equation}
    \bP(y^\prime \in \cC^{\text{PT}}_{1-\alpha}(\bm{X}^\prime)) \geq 1-\alpha, 
\end{equation}
where $(\bm{X}^{\prime}, y^\prime)$ denotes a new test point. If for any $\alpha$, $\bP(y\in \cC^{\text{CP}}_{1-\alpha}(\bm{X}^\prime)\mid \bm{X}^\prime) \ge 1-\alpha$ holds for $\bm{X}^\prime$ almost surely, then 
\begin{equation}
\bP(y\in \cC^{\text{PT}}_{1-\alpha}(\bm{X}^\prime)\mid \bm{X}^\prime)\ge 1-\alpha
\end{equation}
holds for any $\alpha$ and for $\bm{X}^\prime$ almost surely.
\end{theorem}

The key intuition behind Theorem~\ref{thm:coverage_guarantee_pt} is that, for marginal coverage, the null set (with probability $1-p$) and the enlarged interval set with miscoverage $\alpha^\prime$ (with probability $p$) reach the marginal coverage guarantees $p (1-\alpha^\prime) = 1-\alpha$; and for conditional coverage, the randomness within PT is independent of the specific input. Therefore, such randomness is averaged out given a specific input, thus keeping the conditional coverage unchanged. 

\begin{remark}[Comparison to the Tradeoffs between Conditional Coverage and Interval Length]
   Existing works on CP have analyzed the potential tradeoffs between conditional coverage and interval length~\citep{foygel2021limits, gibbs2025conformal}.
    Our work differs from this line, since PT does not operate by creating such a trade-off.
    Specifically, Theorem~\ref{thm:coverage_guarantee_pt} validates that PT does not violate the conditional coverage whenever the base algorithm satisfies it. 
    See Section~\ref{sec:vacuous randomness} for further discussions.
\end{remark}

The conditional coverage guarantee in Theorem~\ref{thm:coverage_guarantee_pt} requires that the base algorithm satisfies the conditional coverage guarantees. 
However, this requirement does not always hold in practice. 
We next discuss in Remark~\ref{rem:better_conditional_coverage} that PT still exhibits the potential to outperform the base algorithm even when the requirement does not hold.

\begin{remark}[When Base CP Violates Conditional Coverage]
\label{rem:better_conditional_coverage}
Theorem~\ref{thm:coverage_guarantee_pt} shows that PT preserves conditional coverage whenever the base algorithm already satisfies it. Without this assumption, the theorem gives no conditional coverage guarantee for PT. However, its conditional coverage can still exceed that of the base algorithm on subset $\cA$.
Let
$
\mathbb{P}\!\left(y\in C^{\mathrm{CP}}_{1-\alpha}(X)\mid X\in \cA\right)
=
1-f_{\cA}(\alpha),
$
where $f_{\cA}(\alpha)$ denotes the true conditional miscoverage rate of the base method on $\cA$. For PT, let
$
\alpha'=1-(1-\alpha)/p.
$
For the null-set version of PT, the corresponding subset-level coverage is
$
\mathbb{P}\!\left(y\in C^{\mathrm{PT}}_{1-\alpha}(X)\mid X\in \cA\right)
=
p\left(1-f_{\cA}(\alpha')\right).
$
Hence, the coverage difference between PT and the base CP on $\cA$ is
\begin{align}
\left[f_{\cA}(\alpha)-f_{\cA}(\alpha')\right]
-
(1-p)\left(1-f_{\cA}(\alpha')\right).
\end{align}
Thus, PT exceeds the coverage of base method on $\cA$ when the miscoverage reduction from using the more conservative level $\alpha'$ outweighs the coverage loss caused by the null-set.
\end{remark}

\textbf{Experiments.} We compare marginal and conditional coverage rates returned with and without PT on BIKE dataset~\citep{bike_sharing_275} using VCP~(Algorithm~\ref{alg:vcp}) as the base algorithm. We omit the implementation details here and refer to Appendix~\ref{appendix: marginal&group coverage} for details. The results in Figure~\ref{fig:marginal&conditional coverage} demonstrate that (a) PT preserves the marginal coverage~(Figure~\ref{fig:marginal coverage}); (b) When VCP fails to guarantee the group coverage, PT-VCP fails as well. Experimental results demonstrate that PT achieves comparable group coverage with its base models~(Figure~\ref{fig:conditional coverage day}, Figure~\ref{fig:conditional coverage month}, Figure~\ref{fig:conditional coverage year}). We provide more experiments on group coverage with different real-world datasets in Appendix~\ref{appendix:Omitted Experimental Results}.

\subsection{Length}
\label{sec:length}
This section investigates the sufficient conditions under which PT improves interval length while keeping the coverage unchanged, and further conducts experiments to validate the theoretical findings.
We first propose Lemma~\ref{thm:general sufficient condition} as a weak sufficient condition. 
We then derive a more informative condition under both differentiable regimes~(Theorem~\ref{thm:FOSC}) and non-differentiable regimes~(Theorem~\ref{thm:non-smooth-length}).
We further discuss the special cases on the local concave assumption~(Corollary~\ref{cor: local concave}) and VCP regimes~(Corollary~\ref{cor:VCP FOSC}), and find that model misspecification provides one practical mechanism through which these sufficient conditions may hold (Remark~\ref{rmk:misspecification}).  
We finally present a failure case in Example~\ref{example:failure case} when PT cannot outperform its base regarding the interval length. 

Experiment results on various datasets in Table~\ref{table:reg} align closely with the theoretical results. 
Besides, we conduct experiments on classification tasks (Table~\ref{table:class}), different base algorithms (Table~\ref{table:CQR}), and conduct ablation studies on different hyperparameters~(Figure~\ref{fig:ablation_bike}-Figure~\ref{fig:ablation_star}).

\textbf{Additional Notations.} 
We introduce the following notations to facilitate the discussions in this section. 
Let $\alpha$ denote the miscoverage rate and $s(\bm{x},y;\hat{\mu})$ denote the score function, where $\hat{\mu}(\cdot)$ denotes the learned model.
Let $\cC^{\text{CP}}_{1-\alpha}(\bm{x})$ denote the interval returned by CP at point $\bm{x}$, and $\cC^{\text{PT}}_{1-\alpha}(\bm{x})$ denote the interval returned by its PT-variant (Algorithm~\ref{alg:pcp}). 
Let $\cL(\bm{x},1-\alpha;s)$ denote the length of the returned interval at point $\bm{x}$ with miscoverage $\alpha$, \emph{i.e.}, $|\cC^{\text{CP}}_{1-\alpha}(\bm{x})|$. Furthermore, we denote $\cG(u;s)=\bE_X(\cL(\bm{X},u;s))$, where $u$ is coverage level.

We next prove a series of sufficient conditions under which PT improves the interval length, starting from Lemma~\ref{thm:general sufficient condition} which provides a straightforward sufficient condition. 

\begin{lemma}[General Sufficient Condition]
    \label{thm:general sufficient condition}
If  exists $p \in (1-\tilde{\alpha}, 1)$ such that 
    \begin{equation}
         p\cG((1-\tilde{\alpha})/p;s) < \cG(1-\tilde{\alpha};s),
    \end{equation}
where $\tilde{\alpha}$ denotes the miscoverage rate. 
Then the interval length returned by PT (with parameter $p$) outperforms that of its base algorithm, 
    \begin{equation}
        \bE|\cC^{\text{PT}}_{1-\tilde{\alpha}}(\bm{X}^\prime)| < \bE|\cC^{\text{CP}}_{1-\tilde{\alpha}}(\bm{X}^\prime)|,
    \end{equation}
    where the expectation is taken over the testing point $\bm{X}^\prime$.
\end{lemma}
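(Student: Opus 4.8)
The plan is to reduce the statement to a one-line identity by unwinding the definition of $\cC^{\text{PT}}$ and averaging over the internal coin flip of Algorithm~\ref{alg:pcp}. Write $\alpha^\prime = 1 - (1-\tilde{\alpha})/p$ for the adjusted miscoverage rate; since the hypothesis supplies a $p \in (1-\tilde{\alpha}, 1)$, we have $1 - \alpha^\prime = (1-\tilde{\alpha})/p \in (1-\tilde{\alpha}, 1) \subset (0,1)$, so the level $1-\alpha^\prime$ is legitimate and $\cC^{\text{CP}}_{1-\alpha^\prime}$ is well defined.

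First I would condition on the uniform variable $U \sim \mathrm{Unif}([0,1])$ generated inside PT, which is drawn independently of the test point $\bm{X}^\prime$. By the tower rule,
\begin{equation*}
    \bE\bigl|\cC^{\text{PT}}_{1-\tilde{\alpha}}(\bm{X}^\prime)\bigr| = \bE_{\bm{X}^\prime}\,\bE_U\bigl[\,\bigl|\cC^{\text{PT}}_{1-\tilde{\alpha}}(\bm{X}^\prime)\bigr| \,\big|\, \bm{X}^\prime\,\bigr].
\end{equation*}
On the event $\{U > p\}$, which has probability $1-p$, Algorithm~\ref{alg:pcp} returns the null set, whose measure (equivalently, size) is zero by the convention fixed in the paper, so it contributes nothing. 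On the event $\{U \le p\}$, which has probability $p$, it returns $\cC^{\text{CP}}_{1-\alpha^\prime}(\bm{X}^\prime)$, whose length is $\cL(\bm{X}^\prime, 1-\alpha^\prime; s) = \cL\bigl(\bm{X}^\prime, (1-\tilde{\alpha})/p; s\bigr)$ by definition of $\cL$. Hence the inner expectation equals $p\,\cL\bigl(\bm{X}^\prime, (1-\tilde{\alpha})/p; s\bigr)$, and taking the outer expectation gives $\bE|\cC^{\text{PT}}_{1-\tilde{\alpha}}(\bm{X}^\prime)| = p\,\bE\bigl(\cL(\bm{x}, (1-\tilde{\alpha})/p; s)\bigr)$.

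Next, by the very definition $\cL(\bm{x}, 1-\tilde{\alpha}; s) = |\cC^{\text{CP}}_{1-\tilde{\alpha}}(\bm{x})|$, we have $\bE|\cC^{\text{CP}}_{1-\tilde{\alpha}}(\bm{X}^\prime)| = \bE\bigl(\cL(\bm{x}, 1-\tilde{\alpha}; s)\bigr)$. Combining the two displays, the hypothesized inequality $p\,\bE(\cL(\bm{x}, (1-\tilde{\alpha})/p; s)) < \bE(\cL(\bm{x}, 1-\tilde{\alpha}; s))$ is \emph{literally} the conclusion $\bE|\cC^{\text{PT}}_{1-\tilde{\alpha}}(\bm{X}^\prime)| < \bE|\cC^{\text{CP}}_{1-\tilde{\alpha}}(\bm{X}^\prime)|$, which finishes the argument.

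The proof has no real obstacle; the only points needing care are bookkeeping. One must (i) justify the use of the tower rule / Tonelli, which is immediate since the length is nonnegative and finite almost surely; and (ii) invoke the paper's convention that the null set --- the empty set in classification, a single point in regression --- has measure zero, so that the $(1-p)$-fraction of runs genuinely contributes $0$ to the expected length. The actual content of the lemma is thus not the computation but the repackaging: it translates ``PT beats its base in expected length'' into the analytic, checkable condition on $\cL$ that the subsequent results (Theorem~\ref{thm:FOSC}, Corollary~\ref{thm:non-smooth-length}) verify under concrete assumptions.
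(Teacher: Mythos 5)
Your proof is correct and follows essentially the same route as the paper's: unwind Algorithm~\ref{alg:pcp}, observe that the null branch contributes zero measure, compute $\bE|\cC^{\text{PT}}_{1-\tilde{\alpha}}(\bm{X}^\prime)| = p\,\bE\bigl(\cL(\bm{x},(1-\tilde{\alpha})/p;s)\bigr)$, and compare against $\bE(\cL(\bm{x},1-\tilde{\alpha};s))$. Your explicit conditioning on $U$ via the tower rule and the justification via Tonelli are slightly more careful bookkeeping than the paper's terse version, but the underlying argument is identical.
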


The intuition behind Lemma~\ref{thm:general sufficient condition} is pretty simple: PT assigns null sets with a fixed probability whose measure is zero, thus potentially reducing the average length. 
Although the sufficient condition in Lemma~\ref{thm:general sufficient condition} is general, the absence of additional assumptions makes it uninformative in practice. 
To obtain more insights, we next introduce a differentiable assumption in Theorem~\ref{thm:FOSC}.

\begin{theorem}[First-order Condition]
\label{thm:FOSC}
 Assume $\cG$ is first-order differentiable and satisfies
\begin{equation}
\label{eqn: first-order condition}
\frac{\cG(1-\tilde{\alpha};s)}{1-\tilde{\alpha}} >  
\left.\frac{\partial}{\partial u}\cG(u;s)\right|_{u=1-\tilde{\alpha}},
\end{equation}
where $\tilde{\alpha}$ denotes the miscoverage rate and the expectation is taken over $\bm{x}$. 
Then there exists a parameter $p$ in Algorithm~\ref{alg:pcp}, such that the interval length returned by PT outperforms that of its base algorithm, namely
\begin{equation}
    \bE|\cC^{\text{PT}}_{1-\tilde{\alpha}}(\bm{X}^\prime)| < \bE|\cC^{\text{CP}}_{1-\tilde{\alpha}}(\bm{X}^\prime)|,
\end{equation}
where the expectation is taken over the testing point $\bm{X}^\prime$.
\end{theorem}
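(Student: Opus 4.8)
The plan is to reduce the theorem to the general sufficient condition in Lemma~\ref{thm:general sufficient condition}, which already does the work of translating a pointwise-averaged length inequality into the conclusion $\bE|\cC^{\text{PT}}_{1-\tilde\alpha}(\bm X')| < \bE|\cC^{\text{CP}}_{1-\tilde\alpha}(\bm X')|$. Concretely, define the single-variable function
\begin{equation}
g(p) := p\,\bE\!\left(\cL\!\left(\bm x, \frac{1-\tilde\alpha}{p}; s\right)\right), \qquad p \in (1-\tilde\alpha, 1].
\end{equation}
Note $g(1) = \bE(\cL(\bm x, 1-\tilde\alpha; s))$, which is exactly the right-hand side appearing in Lemma~\ref{thm:general sufficient condition}. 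So it suffices to show that $g(p) < g(1)$ for some $p \in (1-\tilde\alpha, 1)$; by continuity of $g$ (inherited from continuity of $\cL$ in its second argument), this follows if $g'(1) > 0$, i.e., if $g$ is strictly increasing as $p$ approaches $1$ from the left, then decreasing $p$ slightly below $1$ strictly decreases $g$.

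The main computation is therefore to differentiate $g$ at $p=1$. Writing $\alpha(p) = 1 - \tilde\alpha$ rescaled, i.e. the confidence level inside $\cL$ is $\frac{1-\tilde\alpha}{p}$, the chain rule gives
\begin{equation}
g'(p) = \bE\!\left(\cL\!\left(\bm x, \tfrac{1-\tilde\alpha}{p}; s\right)\right) + p \cdot \bE\!\left(\left.\frac{\partial}{\partial c}\cL(\bm x, c; s)\right|_{c = (1-\tilde\alpha)/p}\right)\cdot\left(-\frac{1-\tilde\alpha}{p^2}\right).
\end{equation}
Here I would need to justify interchanging the derivative $\tfrac{d}{dp}$ with the expectation over $\bm x$ (dominated convergence / a local integrability bound on $\partial_c \cL$), which I would state as part of the mild regularity accompanying the differentiability hypothesis. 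Evaluating at $p = 1$, the confidence level collapses to $1-\tilde\alpha$, and we get
\begin{equation}
g'(1) = \bE(\cL(\bm x, 1-\tilde\alpha; s)) - (1-\tilde\alpha)\,\bE\!\left(\left.\frac{\partial}{\partial \alpha}\cL(\bm x, \alpha; s)\right|_{\alpha = 1-\tilde\alpha}\right).
\end{equation}
Dividing by $1-\tilde\alpha > 0$, the condition $g'(1) > 0$ is precisely inequality~\eqref{eqn: first-order condition}. (One subtlety: the variable inside $\cL$ is the confidence level $c = 1-\alpha$, while~\eqref{eqn: first-order condition} differentiates with respect to the miscoverage $\alpha$; since $c = 1 - \alpha$, $\partial_c \cL = -\partial_\alpha \cL$, and the two minus signs from the chain rule cancel correctly — I would track this sign carefully, as it is the one place an error could creep in.)

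Having established $g'(1) > 0$, a first-order Taylor expansion (or simply the definition of the derivative) yields $p_0 \in (1-\tilde\alpha, 1)$ with $g(p_0) < g(1)$, which is the hypothesis of Lemma~\ref{thm:general sufficient condition} with $p = p_0$; invoking that lemma gives the desired length inequality and completes the proof. I expect the main obstacle to be bookkeeping rather than conceptual: correctly handling the confidence-level-versus-miscoverage sign convention when applying the chain rule, and cleanly stating the regularity needed to differentiate under the expectation. The structural idea — "perturbing $p$ down from $1$ trades a genuine interval for a null set at first-order cost $\partial_\alpha \cL$ against first-order gain proportional to the interval's own length" — is exactly the economic intuition already flagged after Lemma~\ref{thm:general sufficient condition} and in Remark~\ref{rmk:misspecification}.
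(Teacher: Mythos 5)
Your proof is correct and follows essentially the same route as the paper: define $g(p) = p\,\bE\!\left(\cL\!\left(\bm x, (1-\tilde\alpha)/p; s\right)\right)$, compute $g'(1)$ via the chain rule, show $g'(1)>0$ is equivalent to the stated first-order condition, and then invoke Lemma~\ref{thm:general sufficient condition} together with a first-order Taylor argument. One clarification on your sign parenthetical: in the theorem's expression $\left.\partial_\alpha \cL(\bm x,\alpha;s)\right|_{\alpha=1-\tilde\alpha}$, the symbol $\alpha$ is simply a dummy name for the second slot of $\cL$ (the confidence level), so it is literally your $\partial_c\cL$ with no sign flip involved — your worry that ``two minus signs must cancel'' is a misreading, but it is immaterial because your preceding chain-rule computation already produced the correct condition.
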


Theorem~\ref{thm:FOSC} follows the insights of Lemma~\ref{thm:general sufficient condition}, and further utilizes Equation~\ref{eqn: first-order condition} as the sufficient condition, which characterizes the local behavior of the interval length function. 
Theorem~\ref{thm:FOSC} provides more insights on when and how PT outperforms its base algorithm regarding the length metrics. 
We next derive a localized concave condition in Corollary~\ref{cor: local concave} based on Theorem~\ref{thm:FOSC}. 

\begin{corollary}[Localized Concave Conditions]
    \label{cor: local concave}
    Under the settings in Theorem~\ref{thm:FOSC}, the sufficient condition in Equation~\ref{eqn: first-order condition} holds if $\cG(u;s)$ is strictly concave on $u \in [0,1-\tilde{\alpha}]$.
\end{corollary}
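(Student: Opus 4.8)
The plan is to show that local concavity of $g(\alpha) := \bE(\cL(\bm{x},\alpha;s))$ on the interval $[0, 1-\tilde\alpha]$ forces the pointwise inequality
$$
g(1-\tilde\alpha) \;\ge\; (1-\tilde\alpha)\, g'(1-\tilde\alpha),
$$
which, after dividing by the positive quantity $1-\tilde\alpha$, is exactly the sufficient condition in Equation~\eqref{eqn: first-order condition} (using that expectation and differentiation in $\alpha$ commute under the differentiability hypothesis of Theorem~\ref{thm:FOSC}, so that $g'(1-\tilde\alpha) = \bE\bigl(\tfrac{\partial}{\partial\alpha}\cL(\bm{x},\alpha;s)\big|_{\alpha=1-\tilde\alpha}\bigr)$). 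Once this inequality is established, Theorem~\ref{thm:FOSC} applies verbatim and yields the conclusion $\bE|\cC^{\text{PT}}_{1-\tilde\alpha}(\bm{X}^\prime)| < \bE|\cC^{\text{CP}}_{1-\tilde\alpha}(\bm{X}^\prime)|$.

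First I would record the basic normalization fact that $\cL(\bm{x},\alpha;s)$ at $\alpha = 0$ corresponds to an interval of zero measure (a null/degenerate set, consistent with the PT construction and the convention that full-miscoverage intervals have vanishing length), hence $g(0) = 0$. Then the argument is a one-line convexity estimate: for a concave function $g$ on $[0, 1-\tilde\alpha]$ with $g(0) = 0$, the chord from $0$ to $1-\tilde\alpha$ lies below the graph, and by concavity the tangent line at $1-\tilde\alpha$ lies above the graph; evaluating the tangent-line inequality at $\alpha = 0$ gives $0 = g(0) \le g(1-\tilde\alpha) + (0 - (1-\tilde\alpha))\,g'(1-\tilde\alpha)$, i.e. $(1-\tilde\alpha)\,g'(1-\tilde\alpha) \le g(1-\tilde\alpha)$, which is the desired inequality. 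Rearranging and dividing by $1-\tilde\alpha > 0$ recovers Equation~\eqref{eqn: first-order condition}.

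If one does not wish to invoke $g(0)=0$, an alternative is to use concavity directly with the slope-monotonicity characterization: for concave $g$, the difference quotient $\alpha \mapsto \frac{g(1-\tilde\alpha) - g(\alpha)}{(1-\tilde\alpha) - \alpha}$ is nondecreasing, so taking $\alpha \uparrow 1-\tilde\alpha$ shows this quotient is bounded below by $g'(1-\tilde\alpha)$ for all $\alpha < 1-\tilde\alpha$; letting $\alpha \downarrow 0$ then gives $\frac{g(1-\tilde\alpha) - g(0)}{1-\tilde\alpha} \ge g'(1-\tilde\alpha)$, and since $g(0) \ge 0$ (length is nonnegative), this again yields $\frac{g(1-\tilde\alpha)}{1-\tilde\alpha} \ge g'(1-\tilde\alpha)$. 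I expect the only subtlety — and the place to be careful — is making precise what "locally concave on $[0,1-\tilde\alpha]$" means and ensuring it suffices: one needs concavity on a neighborhood reaching down to (or the relevant tangent/chord comparison valid on) the full segment $[0, 1-\tilde\alpha]$, or else the normalization $g(0) \ge 0$ plus one-sided concavity near $1-\tilde\alpha$; I would state the hypothesis as concavity on the closed interval $[0, 1-\tilde\alpha]$ to make the chord-tangent comparison clean. Everything else is an immediate consequence of Theorem~\ref{thm:FOSC}.
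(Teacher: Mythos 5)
The appendix of the paper (Section~\ref{appendix:Proof}) does not contain a proof of Corollary~\ref{cor: local concave} --- proofs are given for the theorems, Lemma~\ref{thm:general sufficient condition}, Corollary~\ref{thm:non-smooth-length}, and Example~\ref{example:failure case}, but this corollary is stated without proof --- so there is no paper proof to compare against. Your argument is the natural one and is essentially sound: set $g(c) := \bE(\cL(\bm{x},c;s))$, note $g(0)=0$ because a coverage-$0$ set has measure zero, and use the tangent-line-above-graph property of a concave differentiable function at $c=1-\tilde\alpha$ evaluated at $c=0$ to obtain $(1-\tilde\alpha)\,g'(1-\tilde\alpha) \le g(1-\tilde\alpha)$, then invoke Theorem~\ref{thm:FOSC}.

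Two issues worth flagging. First, a sign slip: in your alternative argument you claim the chord-slope map $\alpha \mapsto \tfrac{g(1-\tilde\alpha)-g(\alpha)}{(1-\tilde\alpha)-\alpha}$ is \emph{nondecreasing}; for concave $g$ it is \emph{nonincreasing} (chord slopes fall as the left endpoint moves right). The inference you draw --- that the quotient is $\ge g'(1-\tilde\alpha)$ for every $\alpha<1-\tilde\alpha$, since it decreases to the derivative as $\alpha\uparrow 1-\tilde\alpha$ --- is nevertheless correct, so this is cosmetic. Second, and more substantively: both of your arguments produce the non-strict inequality $g(1-\tilde\alpha)/(1-\tilde\alpha) \ge g'(1-\tilde\alpha)$, whereas Equation~\ref{eqn: first-order condition} requires strict $>$, which is what drives $\cF'(1)>0$ in the proof of Theorem~\ref{thm:FOSC}. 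If $g$ is affine on $[0,1-\tilde\alpha]$ with $g(0)=0$ (weakly concave, satisfying the hypothesis), equality holds and the conclusion fails. To close the gap one needs strict concavity somewhere on $[0,1-\tilde\alpha]$, or equivalently that $g$ is not linear there. Since the paper states the corollary with ``locally concave'' and gives no proof, this looseness is arguably inherited from the statement itself, but your proof should explicitly acknowledge that the inequality obtained is a priori only $\ge$, and note what additional hypothesis (strictness of the concavity) upgrades it to $>$.
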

Corollary~\ref{cor: local concave} provides a condition under which PT outperforms its base algorithm regarding length. 
Consider a regression problem with additive noise $y = f^*(x) + \epsilon$.
If the noise distribution exhibits local concavity and the base model approximates the true function $f^*$ well, then the length function $\cG(u;s)$ generally satisfies the localized concavity property.
Consequently, the performance improvement of PT is guaranteed by Corollary~\ref{cor: local concave}.
This inspires the construction of Example~\ref{example: synthetic}.

Besides, Corollary~\ref{cor:VCP FOSC} focuses on the settings of deploying VCP.
Since VCP returns the same length for each individual, the expectation operator in Theorem~\ref{thm:FOSC} degenerates.

\begin{table*}[t]
\caption{Comparison of performance between VCP and PT-VCP in regression tasks across different datasets ($\alpha=0.1$).}
\begin{center}
\begin{small}
\begin{sc}
\begin{tabular}{lccccc}
\toprule
Method &  & \multicolumn{2}{c}{VCP} & \multicolumn{2}{c}{PT-VCP} \\
\cmidrule(lr){3-4} \cmidrule(lr){5-6}
Dataset & Bias & Coverage & Length & Coverage & Length \\
\midrule
meps-19     & 20 & 0.90 \scriptsize{$\pm 0.000$} & 42.34 \scriptsize{$\pm 0.228$} & 0.90 \scriptsize{$\pm 0.000$} & \textbf{41.92} \scriptsize{$\pm 0.389$} \\
meps-20     & 20 & 0.90 \scriptsize{$\pm 0.000$} & 41.98 \scriptsize{$\pm 0.116$} & 0.90 \scriptsize{$\pm 0.000$} & \textbf{41.41} \scriptsize{$\pm 0.241$} \\
meps-21     & 20 & 0.90 \scriptsize{$\pm 0.004$} & 42.28 \scriptsize{$\pm 0.112$} & 0.90 \scriptsize{$\pm 0.000$} & \textbf{41.90} \scriptsize{$\pm 0.300$} \\
bike        & 10 & 0.90 \scriptsize{$\pm 0.000$} & 20.46 \scriptsize{$\pm 0.018$} & 0.90 \scriptsize{$\pm 0.004$} & \textbf{19.59} \scriptsize{$\pm 0.018$} \\
blog-data   & 20 & 0.90 \scriptsize{$\pm 0.004$} & 41.67 \scriptsize{$\pm 0.336$} & 0.90 \scriptsize{$\pm 0.000$} & \textbf{41.13} \scriptsize{$\pm 0.416$} \\
bio         & 10 & 0.90 \scriptsize{$\pm 0.004$} & 21.13 \scriptsize{$\pm 0.336$} & 0.90 \scriptsize{$\pm 0.000$} & \textbf{20.44} \scriptsize{$\pm 0.031$} \\
facebook-1  & 10 & 0.90 \scriptsize{$\pm 0.000$} & 20.81 \scriptsize{$\pm 0.036$} & 0.90 \scriptsize{$\pm 0.000$} & \textbf{20.80} \scriptsize{$\pm 0.179$} \\
facebook-2  & 10 & 0.90 \scriptsize{$\pm 0.000$} & \textbf{20.97} \scriptsize{$\pm 0.067$} & 0.90 \scriptsize{$\pm 0.000$} & 21.01 \scriptsize{$\pm 0.179$} \\
concrete    & 5  & 0.90 \scriptsize{$\pm 0.013$} & 10.32 \scriptsize{$\pm 0.009$} & 0.89 \scriptsize{$\pm 0.009$} & \textbf{9.87} \scriptsize{$\pm 0.031$} \\
star        & 5  & 0.91 \scriptsize{$\pm 0.004$} & 10.14 \scriptsize{$\pm 0.004$} & 0.91 \scriptsize{$\pm 0.004$} & \textbf{9.63} \scriptsize{$\pm 0.027$} \\
\bottomrule
\end{tabular}
\end{sc}
\end{small}
\end{center}
\label{table:reg}
\end{table*}

\begin{corollary}[Deterministic Case]
\label{cor:VCP FOSC}
Under the settings in Theorem~\ref{thm:FOSC}, if applying VCP~(Algorithm~\ref{alg:vcp}) as the base algorithm, the sufficient condition in Equation~\ref{eqn: first-order condition} holds if 
    \begin{equation}
        \frac{\cL(\bm{x},1-\tilde{\alpha};s)}{1-\tilde{\alpha}} > \left.\frac{\partial}{\partial u} \cL(\bm{x},u;s)\right|_{u=1-\tilde{\alpha}},
    \end{equation}
    where the expectation operator degenerates due to the characteristics of VCP. 
\end{corollary}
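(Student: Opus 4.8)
The plan is to specialize Theorem~\ref{thm:FOSC} to the VCP setting, where the interval length function $\cL(\bm{x}, \alpha; s)$ does not actually depend on $\bm{x}$. Recall that VCP (Algorithm~\ref{alg:vcp}) computes a single threshold from the calibration scores and returns an interval of the same length for every test point; call this common length $\ell(\alpha) := \cL(\bm{x}, 1-\alpha; s)$ (abusing notation, suppressing $\bm{x}$ since it is constant in $\bm{x}$). Under this degeneracy, the expectation $\bE(\cdot)$ over $\bm{x}$ in Equation~\eqref{eqn: first-order condition} acts on a deterministic quantity and can simply be dropped from both sides.

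The key steps, in order, are as follows. First, I would invoke the structural property of VCP: for a fixed calibration set, $\cL(\bm{x}, 1-\alpha; s)$ is independent of $\bm{x}$, so both $\frac{\cL(\bm{x},1-\tilde\alpha;s)}{1-\tilde\alpha}$ and $\left.\frac{\partial}{\partial \alpha}\cL(\bm{x},\alpha;s)\right|_{\alpha=1-\tilde\alpha}$ are constants (as functions of $\bm{x}$). Second, I would observe that for a constant random variable $c$, $\bE(c) = c$, so the left-hand side of Equation~\eqref{eqn: first-order condition} equals $\frac{\cL(\bm{x},1-\tilde\alpha;s)}{1-\tilde\alpha}$ and the right-hand side equals $\left.\frac{\partial}{\partial\alpha}\cL(\bm{x},\alpha;s)\right|_{\alpha=1-\tilde\alpha}$. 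Third, I would conclude that the hypothesis displayed in the corollary is literally identical to the hypothesis \eqref{eqn: first-order condition} of Theorem~\ref{thm:FOSC} in this special case; hence Theorem~\ref{thm:FOSC} applies directly and yields the existence of $p$ with $\bE|\cC^{\text{PT}}_{1-\tilde\alpha}(\bm{X}^\prime)| < \bE|\cC^{\text{CP}}_{1-\tilde\alpha}(\bm{X}^\prime)|$, which is the desired conclusion.

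The only genuine content — and the step I would be most careful about — is justifying why the length function of VCP is independent of the test point $\bm{x}$. This requires recalling from the VCP construction (Algorithm~\ref{alg:vcp} / Appendix) that the prediction interval has the form $[\hat\mu(\bm{x}) - q, \hat\mu(\bm{x}) + q]$ for a calibration-derived quantile $q = q(\alpha)$ that depends on $\alpha$ (through the order statistic $\hat Q$) but not on $\bm{x}$, so that $|\cC^{\text{CP}}_{1-\alpha}(\bm{x})| = 2q(\alpha)$ is a function of $\alpha$ alone, and differentiability in $\alpha$ is inherited from the assumption in Theorem~\ref{thm:FOSC}. Once this is noted, everything else is the trivial ``expectation of a constant is the constant'' observation, and the corollary follows immediately by citing Theorem~\ref{thm:FOSC}. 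There is no real obstacle; the work is purely in stating the specialization cleanly.
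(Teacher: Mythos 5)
Your proposal is correct and follows exactly the route the paper intends (the paper does not write out a separate proof for this corollary, treating the degeneration of the expectation as immediate from the constancy of the VCP interval length in $\bm{x}$). Your observation that this constancy relies on the VCP interval being $\{y : s(\bm{x},y;\hat\mu)\le \hat Q_{1-\tilde\alpha}(\cV)\}$ with the absolute-residual score, giving length $2\hat Q_{1-\tilde\alpha}(\cV)$ independent of $\bm{x}$, is precisely the ``characteristics of VCP'' the corollary alludes to, and the rest is the trivial ``expectation of a constant'' step.
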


Unfortunately, real-world applications may not satisfy the differentiability assumption in Theorem~\ref{thm:FOSC}.
Therefore, we relax this assumption and obtain the secant condition from Lemma~\ref{thm:general sufficient condition} directly in~Theorem~\ref{thm:non-smooth-length}.

\begin{theorem}[Secant Sufficient Condition]
    \label{thm:non-smooth-length}
     If there exists $u\in(1-\tilde{\alpha}, 1)$, such that
    \begin{equation}
    \frac{\cG(1-\tilde{\alpha};s)}{1-\tilde{\alpha}} > \frac{\cG(u;s) - \cG(1-\tilde{\alpha};s)}{u - (1-\tilde{\alpha})},
    \end{equation}
    where $\tilde{\alpha}$ denotes the miscoverage rate and the expectation is taken over $\bm{x}$. Then there exists a parameter $p=(1-\tilde{\alpha})/u$ in Algorithm~\ref{alg:pcp}, such that the interval length returned by PT outperforms its base algorithm, namely,
    \begin{equation}
        \bE|\cC^{\text{PT}}_{1-\tilde{\alpha}}(\bm{X}^\prime)| < \bE|\cC^{\text{CP}}_{1-\tilde{\alpha}}(\bm{X}^\prime)|,
    \end{equation}
    where the expectation is taken over the testing point $\bm{X}^\prime$.
\end{theorem}

Theorem~\ref{thm:non-smooth-length} shares similar intuitions with Theorem~\ref{thm:FOSC}, and further relaxes the differentiability assumption by comparing the secant slopes.
Informally, PT achieves smaller average lengths than its base algorithm when the length function does not grow extremely fast within the region $(1 - \tilde{\alpha}, 1)$.

Theorem~\ref{thm:non-smooth-length} shows that PT reduces average length whenever the averaged length curve grows sublinearly over some interval to the right of the target coverage level. Importantly, this phenomenon is not restricted to discrete prediction sets. The following Example~\ref{ex:continuous_success} gives a simple continuous setting where the averaged length function satisfies the condition exactly, and hence PT strictly improves the average length.

\begin{example}[Success Case]
\label{ex:continuous_success}
If the values of the non-conformity score satisfy
$\mathbb{P}(S\le r)=r^2,\; r\in[0,1]$, it holds that
 PT strictly reduces the expected length:
\begin{align}
\mathbb{E}\left|C^{\mathrm{PT}}_{1-\tilde{\alpha}}(X')\right|
<
\mathbb{E}\left|C^{\mathrm{CP}}_{1-\tilde{\alpha}}(X')\right|.
\end{align}
\end{example}

\begin{remark}[Relationship Between Misspecification and Sufficient Condition]
    \label{rmk:misspecification}
Model misspecification is a common practical scenario that aligns with our theoretical analysis, as it typically satisfies the sufficient conditions in Theorem~\ref{thm:FOSC} and Theorem~\ref{thm:non-smooth-length}~\citep{wang2020variationalbayesmodelmisspecification,huang2023learningrobuststatisticssimulationbased}. \textbf{Specifically, misspecification could lead to a residual with a non-zero mean, resulting in a non-convex length function. This outcome is closely related to the local concavity condition in Corollary~\ref{cor: local concave}.} For this reason, we employ misspecification regimes in most of our experiments.
\end{remark}

However, the aforementioned sufficient conditions are not always satisfied. 
We present a failure case in Example~\ref{example:failure case} and illustrate the empirical validation in Figure~\ref{fig:failure case}.

\begin{example}[Failure Case]
    \label{example:failure case}
    If the values of the non-conformity score in VCP follow a Gaussian distribution over randomness in $\bm{x}$, 
    then for all $\alpha \in (0, 1)$, and all $p\in (1-\alpha, 1)$, it holds that
    \begin{equation}
        \bE|\cC^{\text{PT}}_{1-\alpha}(\bm{X}^\prime)| > \bE|\cC^{\text{CP}}_{1-\alpha}(\bm{X}^\prime)|.
    \end{equation}
\end{example}

Example~\ref{example:failure case} demonstrates that PT does not always outperform its base algorithm regarding the coverage-length metric. In this case, the quantile curve grows fast enough near the tail, so increasing the coverage costs more length than the outer factor $p$ can compensate.
However, our goal is not to present PT as a universally applicable method, but to demonstrate a risk in the coverage-length evaluation. 
To achieve this, the existence of any realistic scenarios where PT can create deceptively shorter intervals is sufficient. 

\textbf{Experiment.}
We conduct several experiments comparing the length returned with and without PT using VCP~(Algorithm~\ref{alg:vcp}) on MEPS19-21~\citep{Cohen2009TheME}, BIKE~\citep{bike_sharing_275}, BLOG-DATA~\citep{buza2014feedback}, BIO~\citep{physicochemical_properties_of_protein_tertiary_structure_265}, FACEBOOK1-2~\citep{facebook_comment_volume_363}, CONCRETE~\citep{concrete_compressive_strength_165}, STAR~\citep{DVN/SIWH9F_2008}. To simulate model misspecification, we manually add bias to the label~(as shown in the bias column in Table~\ref {table:reg}). We refer to Appendix~\ref{appendix:real world datasets} for more details of the task settings. The results in Table ~\ref{table:reg} demonstrate that PT generally achieves smaller average lengths compared to its base algorithm in most cases (9 out of 10) while maintaining the coverage. 
Besides, we conduct more experiments and ablations:
\begin{itemize}[itemsep=0.2em, parsep=0em, topsep=0em, partopsep=0em, leftmargin=*]
    \item We compare the length of RAPS and PT-RAPS under classification tasks in Table~\ref{table:class};
    \item We use CQR~\citep{romano2019conformalized} as the base algorithm on regression tasks in Table~\ref{table:CQR};
    \item We evaluate a relaxed PT variant that replaces exactly null prediction sets with small nonempty prediction sets in Table~\ref{table:relaxed_pt};
    \item We evaluate PT-VCP under a larger bias setting in Table~\ref{table:stronger_bias}, where the interval-length contrast becomes more pronounced under stronger misspecification;
    \item We conduct ablation studies on hyperparameter $p$ and misspecification level $\mu$ in Appendix~\ref{appendix:More Ablation Studies}.
\end{itemize}

\subsection{Deceptive Improvement}
\label{sec:vacuous randomness}

We prove that PT preserves (conditional) coverage guarantees in Section~\ref{sec:coverage} and achieves shorter prediction intervals under certain conditions in Section~\ref{sec:length}.
Despite these theoretical benefits, PT is poorly suited for practical deployment.
The primary issue is that PT introduces randomness, causing prediction intervals to vary across different runs. 
This inherent instability undermines the method's reliability. 
Besides, the problem becomes more dramatic in the scenario of Remark~\ref{rmk:extreme example}, where the individuals are grouped with different miscoverage rates. 
This randomness makes it impossible for a user to identify their assigned group, making the confidence interval meaningless.
Therefore,\textbf{ while PT may appear superior based on the traditional coverage-length metric, its practical instability makes it unsuitable for real-world deployment.} This discrepancy challenges the sufficiency of the coverage-length metric itself, suggesting it is not a complete measure of a method's practical utility.

\textbf{Beyond coverage-length.} Although our main analysis focuses on average interval length, the same mixture mechanism can affect other expectation-based efficiency criteria. To examine whether the phenomenon is specific to average length, we additionally evaluate RAPS and PT-RAPS under the p-value based efficiency criteria proposed by ~\citep{10.1007/978-3-319-33395-3_2}. Across the ten criteria considered in that framework, PT-RAPS appears favorable on seven of them. These results suggest that the failure mode is not limited to average set size: whenever an efficiency criterion averages a per-instance functional and does not sufficiently penalize degenerate or low-information prediction sets, the PT mixture can improve the reported value without improving per-instance reliability. The full results are reported in Table~\ref{table:pvalue_main} and Table~\ref{table:pvalue_observed} in Appendix~\ref{appendix:Omitted Experimental Results}.

\section{Interval Stability}
\label{sec:interval stability}
In this section, we propose \emph{interval stability}~(Definition~\ref{def interval stability}) which measures the randomness in each run of CP.
We begin with the definition of interval stability.

\begin{definition}[Interval Stability]
\label{def interval stability}
Let $X$ denote a data point with returned confidence interval $C_{1-\alpha}(X)$, and let $|\cdot|$ denote a certain measure of the interval (e.g., its length). Let $\mathcal{A}$ denote the CP algorithm, and $\mathcal{D}_{ca}$ the calibration dataset. The interval stability is defined as
\begin{equation}
\text{IS}(\cC_{1-\alpha}(X)) \triangleq \mathbb{E}_{X}\!\left[ \operatorname{Var}_{\mathcal{A}\mid X,\mathcal{D}_{\text{ca}}}\!\left( \,|\cC_{1-\alpha}(X)| \,\right) \right].
\end{equation}
\end{definition}


The interval stability captures the expected variability of the interval size conditional on the test point and calibration randomness.
Intuitively, it captures the inconsistency of the returned intervals when the algorithm is run multiple times on the same test point and calibration dataset.


Due to the stochastic nature of PT, it tends to produce a large interval stability, implying the practical instability issues. 
We prove in Proposition~\ref{prop: stability of PT} that PT indeed introduces a non-zero interval stability. 

\begin{proposition}
\label{prop: stability of PT}
    Following the notations in Section~\ref{sec:length}, it holds that the interval stability is larger than zero:
    \begin{equation}
        \text{IS}(\cC^{PT}_{1-\alpha}(X)) = p(1-p)\left(\bE\left(\cL(\bm{x},\frac{1-\tilde{\alpha}}{p};s)\right)^2\right) > 0.
    \end{equation}
\end{proposition}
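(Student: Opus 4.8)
The plan is to compute the inner conditional variance directly from the definition of PT in Equation~\eqref{eqt C(X)}, and then take the outer expectation over $X$. First I would fix a test point $\bm{x}$ and the calibration dataset $\cD_{\text{ca}}$, and observe that conditionally on these the only remaining randomness in the algorithm $\cA$ is the uniform draw $U \sim \text{Unif}([0,1])$ in Algorithm~\ref{alg:pcp}. Under this randomness, the interval measure $|\cC^{\text{PT}}_{1-\tilde\alpha}(\bm{x})|$ is a two-valued random variable: it equals $0$ (the null set has measure zero) with probability $1-p$, and it equals $\cL(\bm{x}, \frac{1-\tilde\alpha}{p}; s) = |\cC^{\text{CP}}_{1-\tilde\alpha'}(\bm{x})|$ with probability $p$, where $\alpha' = 1 - \frac{1-\tilde\alpha}{p}$ so that $1-\alpha' = \frac{1-\tilde\alpha}{p}$.

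Next I would apply the elementary variance formula for a Bernoulli-type random variable taking value $a$ with probability $q$ and $0$ with probability $1-q$: its variance is $q(1-q)a^2$. Here $q = p$ and $a = \cL(\bm{x}, \frac{1-\tilde\alpha}{p}; s)$, so
\begin{equation}
\operatorname{Var}_{\cA\mid \bm{x},\cD_{\text{ca}}}\!\left(|\cC^{\text{PT}}_{1-\tilde\alpha}(\bm{x})|\right) = p(1-p)\left(\cL\!\left(\bm{x}, \tfrac{1-\tilde\alpha}{p}; s\right)\right)^2.
\end{equation}
Taking the expectation over $\bm{x}$ then gives $\text{IS}(\cC^{\text{PT}}_{1-\tilde\alpha}(X)) = p(1-p)\,\bE_{\bm{x}}\!\left[\left(\cL(\bm{x}, \frac{1-\tilde\alpha}{p}; s)\right)^2\right]$. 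Finally, positivity follows because $p \in (1-\tilde\alpha, 1)$ forces $p(1-p) > 0$, and the length function $\cL$ is nonnegative and not identically zero (the adjusted interval has positive length for a nontrivial set of $\bm{x}$), so the expectation of its square is strictly positive.

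One point I would flag: the statement as written has $\left(\bE(\cL(\bm{x},\frac{1-\tilde\alpha}{p};s))\right)^2$ — the square of the expectation — whereas the natural computation yields $\bE\big[(\cL(\cdot))^2\big]$, the expectation of the square. These agree only when $\cL(\bm{x}, \frac{1-\tilde\alpha}{p}; s)$ is (almost surely) constant in $\bm{x}$, which is exactly the VCP case of Corollary~\ref{cor:VCP FOSC}; more generally Jensen's inequality gives $\bE[(\cL)^2] \ge (\bE[\cL])^2$, so the claimed expression is a valid lower bound. The main "obstacle" is therefore not technical difficulty but deciding how to present this: I would either restate the proposition with the expectation-of-square (the exact identity), or add the standing assumption that $\cL$ is deterministic given the calibration set (matching the VCP-style setting in Section~\ref{sec:length}), under which the stated formula is exact. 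Either way the strict positivity conclusion is immediate and unaffected.
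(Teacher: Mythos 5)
The paper provides no written proof of Proposition~\ref{prop: stability of PT}, so the only basis for comparison is the computation itself, and yours is the natural (and essentially unique) one: conditional on $\bm{x}$ and $\cD_{\ca}$, the set size returned by PT is a two-point random variable in the algorithm's coin flip $U$, taking the value $0$ with probability $1-p$ and the value $\cL\bigl(\bm{x},\tfrac{1-\tilde\alpha}{p};s\bigr)$ with probability $p$, so its conditional variance is $p(1-p)\,\cL\bigl(\bm{x},\tfrac{1-\tilde\alpha}{p};s\bigr)^2$, and the outer expectation over $\bm{x}$ then yields $p(1-p)\,\bE_{\bm{x}}\bigl[\cL(\cdot)^2\bigr]$. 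That is correct.

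Your flag about the misplaced square is well taken: as stated, the proposition writes $\bigl(\bE[\cL]\bigr)^2$ where the computation naturally gives $\bE\bigl[\cL^2\bigr]$, and the two coincide only when $\cL\bigl(\bm{x},\tfrac{1-\tilde\alpha}{p};s\bigr)$ does not depend on $\bm{x}$. That is exactly the VCP setting used in Section~\ref{sec:length} and in the interval-stability experiments (Table~\ref{table:interval_stability_reg}): the VCP interval length $2\hat{Q}_{1-\tilde\alpha}(\cV)$ is a function of the calibration fold alone, not of the test point, so the expectation over $\bm{x}$ is vacuous and the stated formula is exact. For a base CP algorithm whose interval width varies with $\bm{x}$ (\emph{e.g.} CQR or localized CP), the correct identity is $\text{IS}=p(1-p)\,\bE\bigl[\cL^2\bigr]$, of which the displayed expression is only a Jensen lower bound. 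Either reading preserves strict positivity, since $p\in(1-\tilde\alpha,1)$ forces $p(1-p)>0$ and $\cL$ is nonnegative and not almost-surely zero, so the conclusion $\text{IS}>0$ is unaffected — but you are right that the clean statement should either replace $\bigl(\bE[\cL]\bigr)^2$ with $\bE[\cL^2]$ or restrict to the VCP case explicitly.
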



\begin{table*}[t]
\caption{Interval stability comparison among VCP, PT-VCP, and localized CP on regression datasets.}
\centering
\resizebox{\textwidth}{!}{%
\begin{tabular}{lcccccccccc}
\toprule
Method & meps-19 & meps-20 & meps-21 & bike & blog-data & bio & facebook-1 & facebook-2 & concrete & star \\
\midrule
VCP & 0.00 \scriptsize{$\pm 0.000$} & 0.00 \scriptsize{$\pm 0.000$} & 0.00 \scriptsize{$\pm 0.000$} & 0.00 \scriptsize{$\pm 0.000$} & 0.00 \scriptsize{$\pm 0.000$} & 0.00 \scriptsize{$\pm 0.000$} & 0.00 \scriptsize{$\pm 0.000$} & 0.00 \scriptsize{$\pm 0.000$} & 0.00 \scriptsize{$\pm 0.000$} & 0.00 \scriptsize{$\pm 0.000$} \\
PT-VCP & 1.26 \scriptsize{$\pm 0.015$} & 1.24 \scriptsize{$\pm 0.008$} & 1.26 \scriptsize{$\pm 0.011$} & 0.58 \scriptsize{$\pm 0.002$} & 1.23 \scriptsize{$\pm 0.014$} & 0.61 \scriptsize{$\pm 0.001$} & 0.62 \scriptsize{$\pm 0.005$} & 1.19 \scriptsize{$\pm 0.006$} & 1.14 \scriptsize{$\pm 0.012$} & 1.14 \scriptsize{$\pm 0.007$} \\
Localized-CP & 0.63 \scriptsize{$\pm 0.336$} & 0.34 \scriptsize{$\pm 0.273$} & 0.90 \scriptsize{$\pm 0.594$} & 0.03 \scriptsize{$\pm 0.012$} & 1.41 \scriptsize{$\pm 1.083$} & 0.03 \scriptsize{$\pm 0.014$} & 0.64 \scriptsize{$\pm 0.579$} & 0.40 \scriptsize{$\pm 0.266$} & 0.11 \scriptsize{$\pm 0.076$} & 0.01 \scriptsize{$\pm 0.007$} \\
\bottomrule
\end{tabular}}
\label{table:interval_stability_reg}
\end{table*}

Notably, the \emph{interval stability} metric is not just for detecting our specific PT construction, but serves as a safeguard to ensure that future advancements in CP are genuine and reliable, rather than arising from the unprincipled randomness.
As the community pushes for shorter prediction intervals, there is a risk that increasingly complex methods might implicitly introduce randomness that offers deceptive gains.
See Remark~\ref{rmk:why IS} for detailed discussions. 

\begin{remark}[Why Interval Stability]
\label{rmk:why IS}
Interval stability is still meaningful even if existing approaches in CP do not always rely on randomness.
In the existing literature, numerous approaches claim a superior performance through a smaller interval length, under the traditional coverage-length metric. 
In this paper, we show that randomness may break this metric through PT due to practical issues. 
This raises concerns that as methods become increasingly complex, they may implicitly utilize similar randomness to improve the length. 
Such effects may be unintended but hard to recognize. 
To address this issue, interval stability serves as a complementary tool for detecting such issues and highlighting the risks inherent in the current reliance on the coverage-length metric alone.
\end{remark}

\textbf{Experiment.} We evaluate Interval Stability on regression datasets in Table~\ref{table:interval_stability_reg}. The table compares three representative pipelines: VCP, PT-VCP, and localized CP. VCP serves as a deterministic baseline and has zero Interval Stability. PT-VCP introduces inference-time randomization and therefore exhibits substantially larger instability, showing that Interval Stability detects the vacuous randomness induced by PT. Localized CP, where the local scale estimator is retrained across runs, also exhibits nonzero run-to-run variation. This connects Interval Stability to a practical source of stochasticity, while not implying that the length advantage of localized CP is caused by PT-like behavior. Additional results for CQR and RAPS are provided in Table~\ref{table:interval_stability_cqr} and Table~\ref{table:interval_stability_cls} in Appendix~\ref{appendix:Omitted Experimental Results}.

Notably, interval stability is zero for deterministic methods by design. The metric is not intended to replace coverage and length, but to complement them, acting as a specific check against the kind of vacuous randomness we identify. A value of zero is a \emph{pass} on this specific test, confirming the method's deterministic nature for a given input.

\section{Conclusion}
\label{sec5}

This paper demonstrates a potential risk of the coverage-length metric in CP.
We introduce PT, a technique that hacks the conventional metric by producing deceptively shorter intervals while preserving coverage guarantees. 
However, PT relies on the randomness that leads to instability: the algorithm can produce different prediction sets for a given input on different runs. 
This creates practical issues in high-stakes scenarios. 
Our theoretical and empirical results confirm that while PT appears superior, its foundation is flawed. 
This discrepancy challenges the completeness of the coverage-length metric. 
Consequently, we propose Interval Stability as a complementary diagnostic tool, which helps flag the potential vacuous randomness for a newly proposed method.

\section*{Acknowledgements}
This work is supported by Shanghai Science and Technology Development Funds 24YF2711700 and Fundamental Research Funds for the Central Universities 2024110586.

\section*{Impact Statement}
This paper studies a failure mode in the evaluation of conformal prediction methods. The proposed Prejudicial Trick (PT) is not intended for deployment. Instead, it is introduced as a cautionary construction showing that coverage and average length can be manipulated by randomized prediction-set construction.\\A direct deployment of PT or PT-like mechanisms could have negative societal consequences, especially in high-stakes domains such as healthcare, finance, education, or public services. Although marginal coverage may remain valid on average, some individuals may receive uninformative null or nearly null prediction sets due purely to algorithmic randomness. Such behavior can reduce reliability at the individual level and may create unfair treatment across users, even when each user has the same probability of being affected.\\The broader goal of this work is therefore defensive: to encourage more careful evaluation of uncertainty quantification methods. We recommend that randomized conformal prediction pipelines report stability-related diagnostics, together with coverage, length, group or conditional coverage when relevant, and task-specific utility. Any method that improves average length by assigning uninformative prediction sets to a subset of samples should be avoided in practical deployment.



\bibliography{references}

\begin{thebibliography}{68}
\providecommand{\natexlab}[1]{#1}
\providecommand{\url}[1]{\texttt{#1}}
\expandafter\ifx\csname urlstyle\endcsname\relax
  \providecommand{\doi}[1]{doi: #1}\else
  \providecommand{\doi}{doi: \begingroup \urlstyle{rm}\Url}\fi

\bibitem[Achilles et~al.(2008)Achilles, Bain, Bellott, Boyd-Zaharias, Finn, Folger, Johnston, and Word]{DVN/SIWH9F_2008}
Achilles, C., Bain, H.~P., Bellott, F., Boyd-Zaharias, J., Finn, J., Folger, J., Johnston, J., and Word, E.
\newblock {Tennessee's Student Teacher Achievement Ratio (STAR) project}, 2008.
\newblock URL \url{https://doi.org/10.7910/DVN/SIWH9F}.

\bibitem[Ajroldi et~al.(2023)Ajroldi, Diquigiovanni, Fontana, and Vantini]{ajroldi2023conformal}
Ajroldi, N., Diquigiovanni, J., Fontana, M., and Vantini, S.
\newblock Conformal prediction bands for two-dimensional functional time series.
\newblock \emph{Computational Statistics \& Data Analysis}, 187:\penalty0 107821, 2023.

\bibitem[Alaa et~al.(2023)Alaa, Hussain, and Sontag]{alaa2023conformalized}
Alaa, A.~M., Hussain, Z., and Sontag, D.
\newblock Conformalized unconditional quantile regression.
\newblock In \emph{International conference on artificial intelligence and statistics}, pp.\  10690--10702. PMLR, 2023.

\bibitem[Angelopoulos et~al.(2020)Angelopoulos, Bates, Malik, and Jordan]{angelopoulos2020uncertainty}
Angelopoulos, A., Bates, S., Malik, J., and Jordan, M.~I.
\newblock Uncertainty sets for image classifiers using conformal prediction.
\newblock \emph{arXiv preprint arXiv:2009.14193}, 2020.

\bibitem[Angelopoulos \& Bates(2021)Angelopoulos and Bates]{angelopoulos2021gentle}
Angelopoulos, A.~N. and Bates, S.
\newblock A gentle introduction to conformal prediction and distribution-free uncertainty quantification.
\newblock \emph{arXiv preprint arXiv:2107.07511}, 2021.

\bibitem[Angelopoulos et~al.(2022)Angelopoulos, Kohli, Bates, Jordan, Malik, Alshaabi, Upadhyayula, and Romano]{angelopoulos2022image}
Angelopoulos, A.~N., Kohli, A.~P., Bates, S., Jordan, M., Malik, J., Alshaabi, T., Upadhyayula, S., and Romano, Y.
\newblock Image-to-image regression with distribution-free uncertainty quantification and applications in imaging.
\newblock In \emph{International Conference on Machine Learning}, pp.\  717--730. PMLR, 2022.

\bibitem[Angelopoulos et~al.(2023)Angelopoulos, Bates, et~al.]{angelopoulos2023conformal}
Angelopoulos, A.~N., Bates, S., et~al.
\newblock Conformal prediction: A gentle introduction.
\newblock \emph{Foundations and Trends{\textregistered} in Machine Learning}, 16\penalty0 (4):\penalty0 494--591, 2023.

\bibitem[Bai et~al.(2022)Bai, Mei, Wang, Zhou, and Xiong]{bai2022efficientdifferentiableconformalprediction}
Bai, Y., Mei, S., Wang, H., Zhou, Y., and Xiong, C.
\newblock Efficient and differentiable conformal prediction with general function classes, 2022.
\newblock URL \url{https://arxiv.org/abs/2202.11091}.

\bibitem[Barber et~al.(2023)Barber, Candes, Ramdas, and Tibshirani]{barber2023conformal}
Barber, R.~F., Candes, E.~J., Ramdas, A., and Tibshirani, R.~J.
\newblock Conformal prediction beyond exchangeability.
\newblock \emph{The Annals of Statistics}, 51\penalty0 (2):\penalty0 816--845, 2023.

\bibitem[Buza(2014)]{buza2014feedback}
Buza, K.
\newblock Feedback prediction for blogs.
\newblock In \emph{Data analysis, machine learning and knowledge discovery}, pp.\  145--152. Springer, 2014.

\bibitem[Cand{\`e}s et~al.(2023)Cand{\`e}s, Lei, and Ren]{candes2023conformalized}
Cand{\`e}s, E., Lei, L., and Ren, Z.
\newblock Conformalized survival analysis.
\newblock \emph{Journal of the Royal Statistical Society Series B: Statistical Methodology}, 85\penalty0 (1):\penalty0 24--45, 2023.

\bibitem[Cauchois et~al.(2021)Cauchois, Gupta, and Duchi]{cauchois2021knowing}
Cauchois, M., Gupta, S., and Duchi, J.~C.
\newblock Knowing what you know: valid and validated confidence sets in multiclass and multilabel prediction.
\newblock \emph{Journal of machine learning research}, 22\penalty0 (81):\penalty0 1--42, 2021.

\bibitem[Cohen et~al.(2009)Cohen, Cohen, and Banthin]{Cohen2009TheME}
Cohen, J.~W., Cohen, S.~B., and Banthin, J.~S.
\newblock The medical expenditure panel survey: A national information resource to support healthcare cost research and inform policy and practice.
\newblock \emph{Medical Care}, 47:\penalty0 S44--S50, 2009.

\bibitem[Cresswell et~al.(2024)Cresswell, Sui, Kumar, and Vouitsis]{cresswell2024conformal}
Cresswell, J.~C., Sui, Y., Kumar, B., and Vouitsis, N.
\newblock Conformal prediction sets improve human decision making.
\newblock In \emph{Proceedings of the 41st International Conference on Machine Learning}, pp.\  9439--9457, 2024.

\bibitem[Dabah \& Tirer(2025)Dabah and Tirer]{dabah2025temperature}
Dabah, L. and Tirer, T.
\newblock On temperature scaling and conformal prediction of deep classifiers.
\newblock In \emph{International Conference on Machine Learning}, pp.\  11813--11845. PMLR, 2025.

\bibitem[De~Prado(2018)]{finance}
De~Prado, M.~L.
\newblock \emph{Advances in financial machine learning}.
\newblock John Wiley \& Sons, 2018.

\bibitem[Deng et~al.(2009)Deng, Dong, Socher, Li, Li, and Fei-Fei]{Deng2009ImageNetAL}
Deng, J., Dong, W., Socher, R., Li, L.-J., Li, K., and Fei-Fei, L.
\newblock Imagenet: A large-scale hierarchical image database.
\newblock \emph{2009 IEEE Conference on Computer Vision and Pattern Recognition}, pp.\  248--255, 2009.

\bibitem[Fanaee-T(2013)]{bike_sharing_275}
Fanaee-T, H.
\newblock {Bike Sharing}.
\newblock UCI Machine Learning Repository, 2013.
\newblock {DOI}: https://doi.org/10.24432/C5W894.

\bibitem[Feldman et~al.(2021)Feldman, Bates, and Romano]{feldman2021improving}
Feldman, S., Bates, S., and Romano, Y.
\newblock Improving conditional coverage via orthogonal quantile regression.
\newblock \emph{Advances in neural information processing systems}, 34:\penalty0 2060--2071, 2021.

\bibitem[Fisch et~al.(2022)Fisch, Schuster, Jaakkola, and Barzilay]{pmlr-v162-fisch22a}
Fisch, A., Schuster, T., Jaakkola, T., and Barzilay, D.
\newblock Conformal prediction sets with limited false positives.
\newblock In Chaudhuri, K., Jegelka, S., Song, L., Szepesvari, C., Niu, G., and Sabato, S. (eds.), \emph{Proceedings of the 39th International Conference on Machine Learning}, volume 162 of \emph{Proceedings of Machine Learning Research}, pp.\  6514--6532. PMLR, 17--23 Jul 2022.
\newblock URL \url{https://proceedings.mlr.press/v162/fisch22a.html}.

\bibitem[Foygel~Barber et~al.(2021)Foygel~Barber, Candes, Ramdas, and Tibshirani]{foygel2021limits}
Foygel~Barber, R., Candes, E.~J., Ramdas, A., and Tibshirani, R.~J.
\newblock The limits of distribution-free conditional predictive inference.
\newblock \emph{Information and Inference: A Journal of the IMA}, 10\penalty0 (2):\penalty0 455--482, 2021.

\bibitem[Gibbs et~al.(2025)Gibbs, Cherian, and Cand{\`e}s]{gibbs2025conformal}
Gibbs, I., Cherian, J.~J., and Cand{\`e}s, E.~J.
\newblock Conformal prediction with conditional guarantees.
\newblock \emph{Journal of the Royal Statistical Society Series B: Statistical Methodology}, 87\penalty0 (4):\penalty0 1100--1126, 2025.

\bibitem[Guan(2023)]{guan2023localized}
Guan, L.
\newblock Localized conformal prediction: A generalized inference framework for conformal prediction.
\newblock \emph{Biometrika}, 110\penalty0 (1):\penalty0 33--50, 2023.

\bibitem[Guo et~al.(2017)Guo, Pleiss, Sun, and Weinberger]{guo2017calibration}
Guo, C., Pleiss, G., Sun, Y., and Weinberger, K.~Q.
\newblock On calibration of modern neural networks.
\newblock In \emph{International conference on machine learning}, pp.\  1321--1330. PMLR, 2017.

\bibitem[H~Zargarbashi \& Bojchevski(2024)H~Zargarbashi and Bojchevski]{h2024conformal}
H~Zargarbashi, S. and Bojchevski, A.
\newblock Conformal inductive graph neural networks.
\newblock In \emph{International Conference on Learning Representations}, volume 2024, pp.\  55175--55197, 2024.

\bibitem[Han et~al.(2022)Han, Tang, Ghosh, and Liu]{han2022split}
Han, X., Tang, Z., Ghosh, J., and Liu, Q.
\newblock Split localized conformal prediction.
\newblock \emph{arXiv preprint arXiv:2206.13092}, 2022.

\bibitem[He \& Lam(2024)He and Lam]{he2024statistically}
He, S. and Lam, H.
\newblock Statistically optimal uncertainty quantification for expensive black-box models.
\newblock \emph{arXiv preprint arXiv:2408.05887}, 2024.

\bibitem[Hore \& Barber(2024)Hore and Barber]{hore2024conformalpredictionlocalweights}
Hore, R. and Barber, R.~F.
\newblock Conformal prediction with local weights: randomization enables local guarantees, 2024.
\newblock URL \url{https://arxiv.org/abs/2310.07850}.

\bibitem[Huang et~al.(2023)Huang, Bharti, Souza, Acerbi, and Kaski]{huang2023learningrobuststatisticssimulationbased}
Huang, D., Bharti, A., Souza, A., Acerbi, L., and Kaski, S.
\newblock Learning robust statistics for simulation-based inference under model misspecification, 2023.
\newblock URL \url{https://arxiv.org/abs/2305.15871}.

\bibitem[Izbicki et~al.(2020)Izbicki, Shimizu, and Stern]{Flexible}
Izbicki, R., Shimizu, G., and Stern, R.
\newblock Flexible distribution-free conditional predictive bands using density estimators.
\newblock In \emph{International Conference on Artificial Intelligence and Statistics}, pp.\  3068--3077. PMLR, 2020.

\bibitem[Jensen et~al.(2024)Jensen, Bianchi, and Anfinsen]{Jensen_2024}
Jensen, V., Bianchi, F.~M., and Anfinsen, S.~N.
\newblock Ensemble conformalized quantile regression for probabilistic time series forecasting.
\newblock \emph{IEEE Transactions on Neural Networks and Learning Systems}, 35\penalty0 (7):\penalty0 9014–9025, July 2024.
\newblock ISSN 2162-2388.
\newblock \doi{10.1109/tnnls.2022.3217694}.
\newblock URL \url{http://dx.doi.org/10.1109/TNNLS.2022.3217694}.

\bibitem[Jin et~al.(2023)Jin, Ren, and Cand{\`e}s]{jin2023sensitivity}
Jin, Y., Ren, Z., and Cand{\`e}s, E.~J.
\newblock Sensitivity analysis of individual treatment effects: A robust conformal inference approach.
\newblock \emph{Proceedings of the National Academy of Sciences}, 120\penalty0 (6):\penalty0 e2214889120, 2023.

\bibitem[Kiyani et~al.(2024)Kiyani, Pappas, and Hassani]{kiyani2024length}
Kiyani, S., Pappas, G., and Hassani, H.
\newblock Length optimization in conformal prediction.
\newblock \emph{Advances in Neural Information Processing Systems}, 37:\penalty0 99519--99563, 2024.

\bibitem[Kuleshov et~al.(2018)Kuleshov, Fenner, and Ermon]{kuleshov2018accurate}
Kuleshov, V., Fenner, N., and Ermon, S.
\newblock Accurate uncertainties for deep learning using calibrated regression.
\newblock In \emph{International conference on machine learning}, pp.\  2796--2804. PMLR, 2018.

\bibitem[Lei et~al.(2015)Lei, Rinaldo, and Wasserman]{lei2015conformal}
Lei, J., Rinaldo, A., and Wasserman, L.
\newblock A conformal prediction approach to explore functional data.
\newblock \emph{Annals of Mathematics and Artificial Intelligence}, 74:\penalty0 29--43, 2015.

\bibitem[Lei et~al.(2018)Lei, G’Sell, Rinaldo, Tibshirani, and Wasserman]{lei2018distribution}
Lei, J., G’Sell, M., Rinaldo, A., Tibshirani, R.~J., and Wasserman, L.
\newblock Distribution-free predictive inference for regression.
\newblock \emph{Journal of the American Statistical Association}, 113\penalty0 (523):\penalty0 1094--1111, 2018.

\bibitem[Lei \& Cand{\`e}s(2021)Lei and Cand{\`e}s]{lei2021conformal}
Lei, L. and Cand{\`e}s, E.~J.
\newblock Conformal inference of counterfactuals and individual treatment effects.
\newblock \emph{Journal of the Royal Statistical Society Series B: Statistical Methodology}, 83\penalty0 (5):\penalty0 911--938, 2021.

\bibitem[Minderer et~al.(2021)Minderer, Djolonga, Romijnders, Hubis, Zhai, Houlsby, Tran, and Lucic]{minderer2021revisiting}
Minderer, M., Djolonga, J., Romijnders, R., Hubis, F., Zhai, X., Houlsby, N., Tran, D., and Lucic, M.
\newblock Revisiting the calibration of modern neural networks.
\newblock \emph{Advances in Neural Information Processing Systems}, 34:\penalty0 15682--15694, 2021.

\bibitem[Navratil et~al.(2020)Navratil, Arnold, and Elder]{navratil2020uncertainty}
Navratil, J., Arnold, M., and Elder, B.
\newblock Uncertainty prediction for deep sequential regression using meta models.
\newblock \emph{arXiv preprint arXiv:2007.01350}, 2020.

\bibitem[Papadopoulos et~al.(2008)Papadopoulos, Gammerman, and Vovk]{papadopoulos2008normalized}
Papadopoulos, H., Gammerman, A., and Vovk, V.
\newblock Normalized nonconformity measures for regression conformal prediction.
\newblock In \emph{Proceedings of the IASTED International Conference on Artificial Intelligence and Applications (AIA 2008)}, pp.\  64--69, 2008.

\bibitem[Papadopoulos et~al.(2011)Papadopoulos, Vovk, and Gammerman]{papadopoulos2011regression}
Papadopoulos, H., Vovk, V., and Gammerman, A.
\newblock Regression conformal prediction with nearest neighbours.
\newblock \emph{Journal of Artificial Intelligence Research}, 40:\penalty0 815--840, 2011.

\bibitem[Rana(2013)]{physicochemical_properties_of_protein_tertiary_structure_265}
Rana, P.
\newblock {Physicochemical Properties of Protein Tertiary Structure}.
\newblock UCI Machine Learning Repository, 2013.
\newblock {DOI}: https://doi.org/10.24432/C5QW3H.

\bibitem[Romano et~al.(2019)Romano, Patterson, and Candes]{romano2019conformalized}
Romano, Y., Patterson, E., and Candes, E.
\newblock Conformalized quantile regression.
\newblock \emph{Advances in neural information processing systems}, 32, 2019.

\bibitem[Romano et~al.(2020)Romano, Sesia, and Candes]{aps}
Romano, Y., Sesia, M., and Candes, E.
\newblock Classification with valid and adaptive coverage.
\newblock In Larochelle, H., Ranzato, M., Hadsell, R., Balcan, M., and Lin, H. (eds.), \emph{Advances in Neural Information Processing Systems}, volume~33, pp.\  3581--3591. Curran Associates, Inc., 2020.
\newblock URL \url{https://proceedings.neurips.cc/paper_files/paper/2020/file/244edd7e85dc81602b7615cd705545f5-Paper.pdf}.

\bibitem[Sadinle et~al.(2018)Sadinle, Lei, and Wasserman]{Sadinle_2018}
Sadinle, M., Lei, J., and Wasserman, L.
\newblock Least ambiguous set-valued classifiers with bounded error levels.
\newblock \emph{Journal of the American Statistical Association}, 114\penalty0 (525):\penalty0 223–234, June 2018.
\newblock ISSN 1537-274X.
\newblock \doi{10.1080/01621459.2017.1395341}.
\newblock URL \url{http://dx.doi.org/10.1080/01621459.2017.1395341}.

\bibitem[Sasaki et~al.(2022)Sasaki, Ura, and Zhang]{sasaki2022unconditional}
Sasaki, Y., Ura, T., and Zhang, Y.
\newblock Unconditional quantile regression with high-dimensional data.
\newblock \emph{Quantitative Economics}, 13\penalty0 (3):\penalty0 955--978, 2022.

\bibitem[Seedat et~al.(2023)Seedat, Jeffares, Imrie, and van~der Schaar]{seedat2023improving}
Seedat, N., Jeffares, A., Imrie, F., and van~der Schaar, M.
\newblock Improving adaptive conformal prediction using self-supervised learning.
\newblock In \emph{International Conference on Artificial Intelligence and Statistics}, pp.\  10160--10177. PMLR, 2023.

\bibitem[Shafer \& Vovk(2008)Shafer and Vovk]{shafer2008tutorial}
Shafer, G. and Vovk, V.
\newblock A tutorial on conformal prediction.
\newblock \emph{Journal of Machine Learning Research}, 9\penalty0 (3), 2008.

\bibitem[Singh(2015)]{facebook_comment_volume_363}
Singh, K.
\newblock {Facebook Comment Volume}.
\newblock UCI Machine Learning Repository, 2015.
\newblock {DOI}: https://doi.org/10.24432/C5Q886.

\bibitem[Smith(2024)]{smith2024uncertainty}
Smith, R.~C.
\newblock \emph{Uncertainty quantification: theory, implementation, and applications}.
\newblock SIAM, 2024.

\bibitem[Stankeviciute et~al.(2021)Stankeviciute, M~Alaa, and van~der Schaar]{stankeviciute2021conformal}
Stankeviciute, K., M~Alaa, A., and van~der Schaar, M.
\newblock Conformal time-series forecasting.
\newblock \emph{Advances in neural information processing systems}, 34:\penalty0 6216--6228, 2021.

\bibitem[Stutz et~al.(2022)Stutz, Krishnamurthy, Dvijotham, Cemgil, and Doucet]{stutz2022learningoptimalconformalclassifiers}
Stutz, D., Krishnamurthy, Dvijotham, Cemgil, A.~T., and Doucet, A.
\newblock Learning optimal conformal classifiers, 2022.
\newblock URL \url{https://arxiv.org/abs/2110.09192}.

\bibitem[Sullivan(2015)]{sullivan2015introduction}
Sullivan, T.~J.
\newblock \emph{Introduction to uncertainty quantification}, volume~63.
\newblock Springer, 2015.

\bibitem[Teng et~al.()Teng, Wen, Zhang, Bengio, Gao, and Yuan]{tengpredictive}
Teng, J., Wen, C., Zhang, D., Bengio, Y., Gao, Y., and Yuan, Y.
\newblock Predictive inference with feature conformal prediction.
\newblock In \emph{The Eleventh International Conference on Learning Representations}.

\bibitem[Teng et~al.(2021)Teng, Tan, and Yuan]{teng2021t}
Teng, J., Tan, Z., and Yuan, Y.
\newblock T-sci: A two-stage conformal inference algorithm with guaranteed coverage for cox-mlp.
\newblock In \emph{International conference on machine learning}, pp.\  10203--10213. PMLR, 2021.

\bibitem[Tibshirani et~al.(2019)Tibshirani, Foygel~Barber, Candes, and Ramdas]{tibshirani2019conformal}
Tibshirani, R.~J., Foygel~Barber, R., Candes, E., and Ramdas, A.
\newblock Conformal prediction under covariate shift.
\newblock \emph{Advances in neural information processing systems}, 32, 2019.

\bibitem[Vovk(2012)]{pmlr-v25-vovk12}
Vovk, V.
\newblock Conditional validity of inductive conformal predictors.
\newblock In Hoi, S. C.~H. and Buntine, W. (eds.), \emph{Proceedings of the Asian Conference on Machine Learning}, volume~25 of \emph{Proceedings of Machine Learning Research}, pp.\  475--490, Singapore Management University, Singapore, 04--06 Nov 2012. PMLR.
\newblock URL \url{https://proceedings.mlr.press/v25/vovk12.html}.

\bibitem[Vovk et~al.(2005)Vovk, Gammerman, and Shafer]{vovk2005algorithmic}
Vovk, V., Gammerman, A., and Shafer, G.
\newblock \emph{Algorithmic learning in a random world}, volume~29.
\newblock Springer, 2005.

\bibitem[Vovk et~al.(2016)Vovk, Fedorova, Nouretdinov, and Gammerman]{10.1007/978-3-319-33395-3_2}
Vovk, V., Fedorova, V., Nouretdinov, I., and Gammerman, A.
\newblock Criteria of efficiency for conformal prediction.
\newblock In Gammerman, A., Luo, Z., Vega, J., and Vovk, V. (eds.), \emph{Conformal and Probabilistic Prediction with Applications}, pp.\  23--39, Cham, 2016. Springer International Publishing.
\newblock ISBN 978-3-319-33395-3.

\bibitem[Wang \& Ghosal(2023)Wang and Ghosal]{wang2023coverage}
Wang, K. and Ghosal, S.
\newblock Coverage of credible intervals in bayesian multivariate isotonic regression.
\newblock \emph{The Annals of Statistics}, 51\penalty0 (3):\penalty0 1376--1400, 2023.

\bibitem[Wang \& Blei(2020)Wang and Blei]{wang2020variationalbayesmodelmisspecification}
Wang, Y. and Blei, D.~M.
\newblock Variational bayes under model misspecification, 2020.
\newblock URL \url{https://arxiv.org/abs/1905.10859}.

\bibitem[Xie et~al.(2024)Xie, Barber, and Cand{\`e}s]{xie2024boosted}
Xie, R., Barber, R.~F., and Cand{\`e}s, E.~J.
\newblock Boosted conformal prediction intervals.
\newblock \emph{Advances in Neural Information Processing Systems}, 37:\penalty0 71868--71899, 2024.

\bibitem[Xu \& Xie(2021)Xu and Xie]{xu2021conformal}
Xu, C. and Xie, Y.
\newblock Conformal prediction interval for dynamic time-series.
\newblock In \emph{International Conference on Machine Learning}, pp.\  11559--11569. PMLR, 2021.

\bibitem[Xu et~al.(2025)Xu, Ying, Guo, and Wei]{xu2025two}
Xu, Y., Ying, M., Guo, W., and Wei, Z.
\newblock Two-stage risk control with application to ranked retrieval.
\newblock In \emph{Proceedings of the Thirty-Fourth International Joint Conference on Artificial Intelligence}, pp.\  9104--9111, 2025.

\bibitem[Yeh(1998)]{concrete_compressive_strength_165}
Yeh, I.-C.
\newblock {Concrete Compressive Strength}.
\newblock UCI Machine Learning Repository, 1998.
\newblock {DOI}: https://doi.org/10.24432/C5PK67.

\bibitem[Zargarbashi et~al.(2023)Zargarbashi, Antonelli, and Bojchevski]{zargarbashi2023conformal}
Zargarbashi, S.~H., Antonelli, S., and Bojchevski, A.
\newblock Conformal prediction sets for graph neural networks.
\newblock In \emph{International Conference on Machine Learning}, pp.\  12292--12318. PMLR, 2023.

\bibitem[Zhang et~al.(2024)Zhang, Chatzimparmpas, Kamali, and Hullman]{zhang2024evaluating}
Zhang, D., Chatzimparmpas, A., Kamali, N., and Hullman, J.
\newblock Evaluating the utility of conformal prediction sets for ai-advised image labeling.
\newblock In \emph{Proceedings of the 2024 CHI Conference on Human Factors in Computing Systems}, pp.\  1--19, 2024.

\bibitem[Zhao et~al.(2020)Zhao, Ma, and Ermon]{zhao2020individual}
Zhao, S., Ma, T., and Ermon, S.
\newblock Individual calibration with randomized forecasting.
\newblock In \emph{International Conference on Machine Learning}, pp.\  11387--11397. PMLR, 2020.

\end{thebibliography}
\bibliographystyle{icml2026}


\newpage
\appendix
\onecolumn

\begin{center}
    {\Huge Appendix}
\end{center}

We firstly restate our contributions and demonstrate some additional related works, extended discussions, omitted preliminary and illustrations in Appendix~\ref{appendix:all in one}. Then we provide missing proofs in Appendix~\ref{appendix:Proof}.
In Appendix~\ref{appendix:More Details}, we illustrate the omitted experimental results. In Appendix~\ref{appendix:experiment_details}, we present implementation details of our experiments. 


\section{Additional Details and Discussion}
\label{appendix:all in one}
\subsection{Contributions Restatement}
\label{Appendix-restatement}

We summarize our contributions as follows:
\begin{itemize}
    \item We observe that the traditional coverage-length criteria in conformal prediction might be hacked using a counter-intuitive method PT, (Algorithm~\ref{alg:pcp}), since PT might deceptively improve the length while maintaining the coverage but raises fairness issues;
    \item We theoretically derive in Lemma~\ref{thm:general sufficient condition} the conditions under which PT deceptively improves length, while keeping valid marginal coverage and conditional coverage (Theorem~\ref{thm:coverage_guarantee_pt}). We further derive several sufficient conditions under which PT improves length with first-order differentiability assumption~(Theorem~\ref{thm:FOSC}) or without first-order differentiability assumption~(Theorem~\ref{thm:non-smooth-length});
    \item We propose a new metric in Section~\ref{sec:interval stability}, termed interval stability. Interval stability measures the variance of the prediction interval over the input introduced by the conformal prediction algorithms, helping to mitigate the adverse impacts of PT. 
\end{itemize}

\subsection{Additional Related Works}
\label{appendix: related work}

\textbf{Conformal prediction.} 
Conformal prediction is a post hoc calibration framework that constructs statistically rigorous uncertainty sets for predictions from machine learning models~\citep{vovk2005algorithmic, shafer2008tutorial,lei2018distribution,foygel2021limits,angelopoulos2021gentle,papadopoulos2008normalized}.
Traditionally, vanilla conformal prediction is deployed in regression tasks~ \citep{vovk2005algorithmic, shafer2008tutorial, lei2018distribution}.
Later, a branch of research expands vanilla conformal prediction to diverse data structures and applications, including classification tasks~\citep{angelopoulos2020uncertainty,dabah2025temperature}, censored data in survival analysis~\citep{teng2021t,candes2023conformalized}, functional data~\citep{lei2015conformal,ajroldi2023conformal}, graph-based models~\citep{zargarbashi2023conformal,h2024conformal}, time series data~\citep{xu2021conformal,stankeviciute2021conformal}, treatment effects~\citep{lei2021conformal,jin2023sensitivity}, \emph{etc}.

\textbf{Interval regression.} While coverage guarantees and interval length serve as fundamental metrics for evaluating conformal prediction ~\citep{vovk2005algorithmic,lei2018distribution,foygel2021limits}, these criteria are deeply entrenched in the broader paradigm of interval regression methodologies. Established approaches including quantile regression ~\citep{alaa2023conformalized,sasaki2022unconditional} and Bayesian credible intervals ~\citep{kuleshov2018accurate,wang2023coverage} similarly prioritize the dual metrics.
Of particular relevance is \citet{navratil2020uncertainty} who proposes an excess
and deficit metrics beyond the traditional coverage-length metric. 
Our paper differs from \citet{navratil2020uncertainty} in that our main contributions center on uncovering the inherent limitations of coverage-length metrics.
Additionally, we contend that the proposed excess and deficit metrics cannot be directly applied to PT-VCP.

\subsection{More Discussions}
\label{appendix:discussions}
\textbf{Similarity between PT and method discussed in \citet{foygel2021limits}.} In Section~\ref{sec:PT}, we mention the similarity between our PT method and the randomness discussed in \citet{foygel2021limits}. While the mechanism in our work bears a structural resemblance to that in \citet{foygel2021limits}, our motivation and conclusion are fundamentally different. \citet{foygel2021limits} investigate the inherent trade-offs required to achieve conditional coverage, using randomization as a tool to explore theoretical limits. In contrast, our work focuses on the evaluation paradigm itself. We use PT not to achieve a desirable property~(like conditional coverage), but to demonstrate a failure mode of evaluating CP methods primarily through coverage and average length. Our primary contribution is to highlight this pitfall and to propose Interval Stability as a complementary diagnostic for the specific run-to-run variability induced by algorithmic randomness. This diagnostic perspective is orthogonal to the conditional-coverage tradeoffs studied by \citet{foygel2021limits}.

\subsection{Omitted Preliminary}
\label{appendix:prelim}
\textbf{Interval Prediction.} 
Interval prediction aims to construct a confidence interval that contains the true response value with a user-specified probability. Compared to traditional point estimation, interval prediction provides more comprehensive statistical information by quantifying the uncertainty using the interval length, which is often a more challenging goal. Definition~\ref{def:IP} presents the formal definition.
\begin{definition}[Interval Prediction]
\label{def:IP}
Let $(\bm{X},Y)$  denote a feature-response pair.
Given a miscoverage rate $\alpha$, interval prediction aims to construct a confidence interval $\cC_{1-\alpha}(\bm{X})$, such that
\begin{equation}\label{eqn: IP1}
\bP(Y \in \cC_{1-\alpha}(\bm{X}))\geq 1-\alpha.
\end{equation}
Given the coverage in Equation~\eqref{eqn: IP1}, a smaller confidence interval indicates a more precise estimate.
\end{definition}

\textbf{Conformal Prediction.} To construct an interval prediction, we introduce a widely used approach called vanilla conformal prediction. The VCP method is typically divided into four stages: dataset splitting, training, calibration, and construction. The whole procedure is presented in Algorithm~\ref{alg:vcp}.

\emph{Dataset Splitting.}
Let $\cD=\{(\bm{x}_i, y_i): {i\in \cI}\}$ denote the i.i.d. samples from a distribution $\cP_{\bm{X}Y}$ over the covariate $\bm{X} \in \bR^d$ and the response $Y\in \bR$. The VCP first randomly splits the dataset $\cD$ into two folds: a training fold $\cD_\text{tr}=\{(\bm{x}_i, y_i): {i\in \cI_\text{tr}}\}$ and a calibration fold $\cD_\text{ca}=\{(\bm{x}_i, y_i): {i\in \cI_\text{ca}}\}$, where $\cI_\text{tr}\cup \cI_\text{ca}=\cI$ and $\cI_\text{tr}\cap \cI_{ca}=\varnothing$.

\emph{Training Process.} We train a model denoted by $\hat{\mu}(\cdot)$ (\emph{e.g.}, a neural network) via the training fold \(\cD_\text{tr}\). 

\emph{Calibration Process.} Given the trained model $\hat{\mu}(\cdot)$, VCP calculates the non-conformity score on the calibration fold $\cD_{\text{ca}}$, denoted by $\cV = \{s(\bm{x}_i, y_i; \hat{\mu}) : i \in \cI_\text{ca}\}$. The non-conformity score $s(\cdot)$ measures how well the model $\hat{\mu}(\cdot)$ fits the ground truth. A commonly used non-conformity score in regression tasks is the absolute residual, defined as $s(\bm{x}_i, y_{i}; \hat{\mu})=\vert y_{i}-\hat{\mu}(\bm{x}_i)\vert$.

\emph{Construction Process.} 
Finally, for a given miscoverage rate $\alpha$, we then compute a $(1-\Tilde{\alpha})$-th quantile $\hat{Q}_{1-\Tilde{\alpha}}(\cV)$ of the empirical distribution of the non-conformity score set $\cV$ calculated on the calibration set, where $1-\Tilde{\alpha}=(1-\alpha)(1+1/|\cV|)$. The prediction interval at a new point $\bm{x}^{\prime}$ is then given by 
\begin{equation}\label{eq:vcp_interval}
    \cC_{1-\alpha}(\bm{x}^\prime)=\{y:s(\bm{x}^\prime,y;\hat{\mu})\leq \hat{Q}_{1-\Tilde{\alpha}}(\cV)\}.
\end{equation}

\textbf{Coverage and Length.} 
To evaluate the performance of interval prediction, two commonly used metrics: \emph{coverage} and \emph{length}  are defined in Definition~\ref{cov&len}, as further illustrated in Figure~\ref{fig:coverage_interval_comparison}.


\begin{definition}[Coverage and Length]
\label{cov&len}
Let $(\bm{X}, Y)$ denote a feature-response pair from a joint distribution $\cP_{\bm{X}Y}$, and let $\cC_{1-\alpha}(\bm{X})$ denote the confidence interval to be evaluated and let $|\cdot|$ denote a certain measure of $\cC_{1-\alpha}(\bm{X})$. The coverage and length of $\cC_{1-\alpha}(\bm{X})$ is given by:
\begin{equation}
\label{eqt:cov&len}
\begin{split}
    \text{Coverage} & :=  \bE \left[\bI(Y \in \cC_{1-\alpha}(\bm{X})) \right],  \\
    \text{Length} &:= \bE \ |\cC_{1-\alpha}(\bm{X})|.
\end{split}
\end{equation}
For example, the length of the prediction interval given by VCP in Equation~\eqref{eq:vcp_interval} is:
\begin{equation}
    \text{Length} = \bE \ \left[2\hat{Q}_{1-\Tilde{\alpha}}(\cV) \right].
\end{equation}
\end{definition}

\begin{figure}[t]
    \centering
    \includegraphics[width=0.5\linewidth]{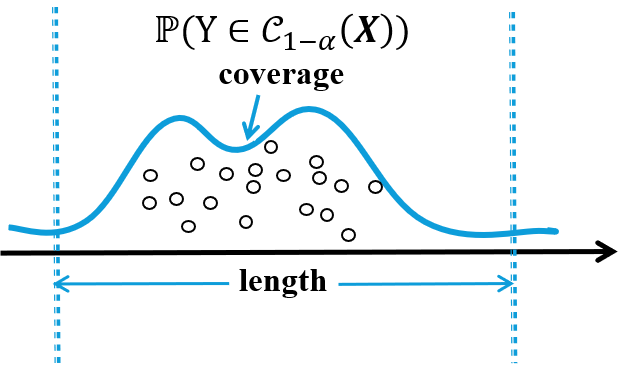}
    \caption{Illustration of coverage and interval length.}
    \label{fig:coverage_interval_comparison}
\end{figure}

Notably, the two metrics in Definition~\ref{cov&len} evaluate the quality of prediction intervals from different perspectives. 
Figure~\ref{fig:coverage_interval_comparison} illustrates the coverage and length given a distribution. 
Firstly, high coverage ensures that the true value falls within the interval with high probability. A valid confidence interval should guarantee that the coverage exceeds $1-\alpha$, as suggested in Equation~\ref{eqn: IP1}.
However, setting a sufficiently large interval always guarantees Equation~\eqref{eqn: IP1}, which is impractical and meaningless. Therefore, the length metric is required to ensure the interval's precision. 
Based on the above discussion, the gold standard in conformal prediction is \emph{making the length as small as possible, given that the coverage is larger than $1-\alpha$.} 

Following the gold standard, VCP ensures the coverage guarantee under mild exchangeability assumption (Proposition~\ref{def:exc}), but pays less attention to the length.
As a result, numerous works on improving the length of VCP from different perspectives \citep{papadopoulos2011regression,romano2019conformalized} use intuitively valid approaches. 

\begin{proposition}[Coverage Guarantee]
\label{def:exc}
The terms $\cU_i$ are exchangeable if arbitrary permutation leads to the same distribution, i.e., $(\cU_1,...,\cU_{|\cI_\text{\ca}|+1}) \stackrel{d}{=} (\cU_{\pi(1)},...,\cU_{\pi({|\cI_\text{ca}|+1})})$
with arbitrary permutation $\pi$ over ${1,...,|\cI_\text{ca} +1|}$, where $\stackrel{d}{=}$ denotes equivalence in distribution.
Suppose that  the data pair $(\bm{x}_i,y_i),i \in \cI_{\text{ca}} $ and the test point $(\bm{x}^\prime,y^\prime)$ are exchangeable, then the confidence interval $\cC_{1-\alpha}(\bm{x}^\prime)$ returned by Algorithm~\ref{alg:vcp} satisfies 
\begin{center}
$\bP\left(y^{\prime} \in \cC_{1-\alpha}(\bm{x}^{\prime})\right) \geq 1-\alpha$.
\end{center}
\end{proposition}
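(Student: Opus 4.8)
\textbf{Proof proposal for Proposition~\ref{def:exc} (Coverage Guarantee).}

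The plan is to reduce the coverage statement to the classical quantile-lemma argument for exchangeable scores, exactly as in the standard split-conformal analysis. First I would introduce the calibration scores $\cU_i := s(\bm{x}_i, y_i; \hat{\mu})$ for $i \in \cI_{\text{ca}}$ together with the test score $\cU_{\text{test}} := s(\bm{x}^\prime, y^\prime; \hat{\mu})$. Since $\hat{\mu}$ is trained only on $\cD_{\text{tr}}$, it is a fixed (measurable) function of data independent of the calibration fold and the test point, so exchangeability of the pairs $(\bm{x}_i,y_i)$, $i\in\cI_{\text{ca}}$, and $(\bm{x}^\prime,y^\prime)$ transfers to exchangeability of the scalar scores $(\cU_1,\dots,\cU_{|\cI_{\text{ca}}|},\cU_{\text{test}})$.

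Next I would rewrite the coverage event. By the construction in Equation~\eqref{eq:vcp_interval}, $y^\prime \in \cC_{1-\alpha}(\bm{x}^\prime)$ is equivalent to $\cU_{\text{test}} \le \hat{Q}_{1-\tilde{\alpha}}(\cV)$, where $\cV = \{\cU_i : i \in \cI_{\text{ca}}\}$ and, per the notation in Section~\ref{sec3.1}, $\hat{Q}_{1-\tilde\alpha}(\cV)$ is the $\lceil (|\cV|+1)\tilde{\alpha}\rceil$-th largest calibration score (equivalently the $\lceil (|\cV|+1)(1-\tilde\alpha)\rceil$-th smallest), with $1-\tilde\alpha = (1-\alpha)(1+1/|\cV|)$. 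The key combinatorial step is the standard exchangeability lemma: for exchangeable $\cU_1,\dots,\cU_n,\cU_{\text{test}}$ (with $n = |\cV|$), the rank of $\cU_{\text{test}}$ among the $n+1$ values is (stochastically dominated by) uniform on $\{1,\dots,n+1\}$, hence $\bP(\cU_{\text{test}} \le \cU_{(k)}) \ge k/(n+1)$ where $\cU_{(k)}$ is the $k$-th smallest. Applying this with $k = \lceil (n+1)(1-\tilde\alpha)\rceil$ gives $\bP(\cU_{\text{test}} \le \hat{Q}_{1-\tilde\alpha}(\cV)) \ge \lceil (n+1)(1-\tilde\alpha)\rceil / (n+1) \ge (1-\tilde\alpha)$. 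Finally, substituting $1-\tilde\alpha = (1-\alpha)(1+1/n) \ge 1-\alpha$ (since $n = |\cV| = |\cI_{\text{ca}}| \ge 1$) yields $\bP(y^\prime \in \cC_{1-\alpha}(\bm{x}^\prime)) \ge 1-\alpha$, as claimed.

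The main obstacle — really the only subtlety — is the treatment of ties among the scores, since with ties the rank of $\cU_{\text{test}}$ need not be exactly uniform. I would handle this in the usual way: either assume the scores have a continuous distribution so ties occur with probability zero, or argue with the "$\le$" version of the rank statistic (counting $\cU_{\text{test}}$ as tied-or-below), for which the stochastic-dominance bound $\bP(\text{rank} \le k) \ge k/(n+1)$ still holds under exchangeability regardless of ties. A secondary point worth a sentence is justifying that conditioning on $\cD_{\text{tr}}$ (to freeze $\hat\mu$) does not break exchangeability of the remaining variables — this is immediate because the splitting is done independently of the data values, and the claimed inequality, holding conditionally on $\cD_{\text{tr}}$ for every realization, holds unconditionally by taking expectations.
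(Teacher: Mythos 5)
The paper does not actually prove Proposition~\ref{def:exc}: it is stated in the preliminary appendix as the classical split-conformal coverage guarantee and no proof appears in Appendix~\ref{appendix:Proof}. Your argument is exactly the standard proof from the literature, and it is correct in structure: (i) condition on the training fold so that $\hat\mu$ is frozen and exchangeability of the calibration/test pairs transfers to exchangeability of the scalar scores; (ii) rewrite the coverage event as a rank comparison of the test score against the calibration scores; (iii) invoke the quantile/rank lemma for exchangeable sequences; (iv) handle ties by stochastic dominance or a continuity assumption; (v) take expectation over the training fold. All of this is right and is precisely how the result is proved in, e.g., Lei et al.~(2018) and Tibshirani et al.~(2019).

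One small caveat worth noting: you assert that $\hat Q_{1-\tilde\alpha}(\cV)$ (the $\lceil (n+1)\tilde\alpha\rceil$-th \emph{largest} score in the paper's decreasing-order convention) is "equivalently the $\lceil (n+1)(1-\tilde\alpha)\rceil$-th smallest." When $(n+1)\tilde\alpha$ is not an integer the $m$-th largest is actually the $(n+1-\lceil(n+1)\tilde\alpha\rceil)$-th smallest, which equals $\lfloor(n+1)(1-\tilde\alpha)\rfloor$ rather than $\lceil(n+1)(1-\tilde\alpha)\rceil$, so your invocation of the rank lemma slightly overstates the rank $k$. The discrepancy is a ceiling-versus-floor artifact of the paper's somewhat nonstandard quantile parametrization (passing through $\tilde\alpha$ instead of writing $\lceil(1-\alpha)(n+1)\rceil$ directly), and it does not affect the conclusion for the intended settings, but a fully rigorous writeup should track this index explicitly against the paper's own definition $\hat{Q}_{\tau}(\{Z_i\}) = Z_{(\lceil (n+1)(1-\tau)\rceil)}$ with $Z_{(1)}\ge\cdots\ge Z_{(n)}$.
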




\subsection{Missing Illustration}
\label{appendix:Missing Illustration}
In this section, we present the missing illustration of Example~\ref{example: PatientRecovery} in Section~\ref{sec1}, the illustration of PT~(Figure~\ref{fig:enter-label}) and VCP algorithm mentioned in Section~\ref{sec3.1}.

\begin{figure}[h]
    \centering
    \includegraphics[width=0.6\linewidth]{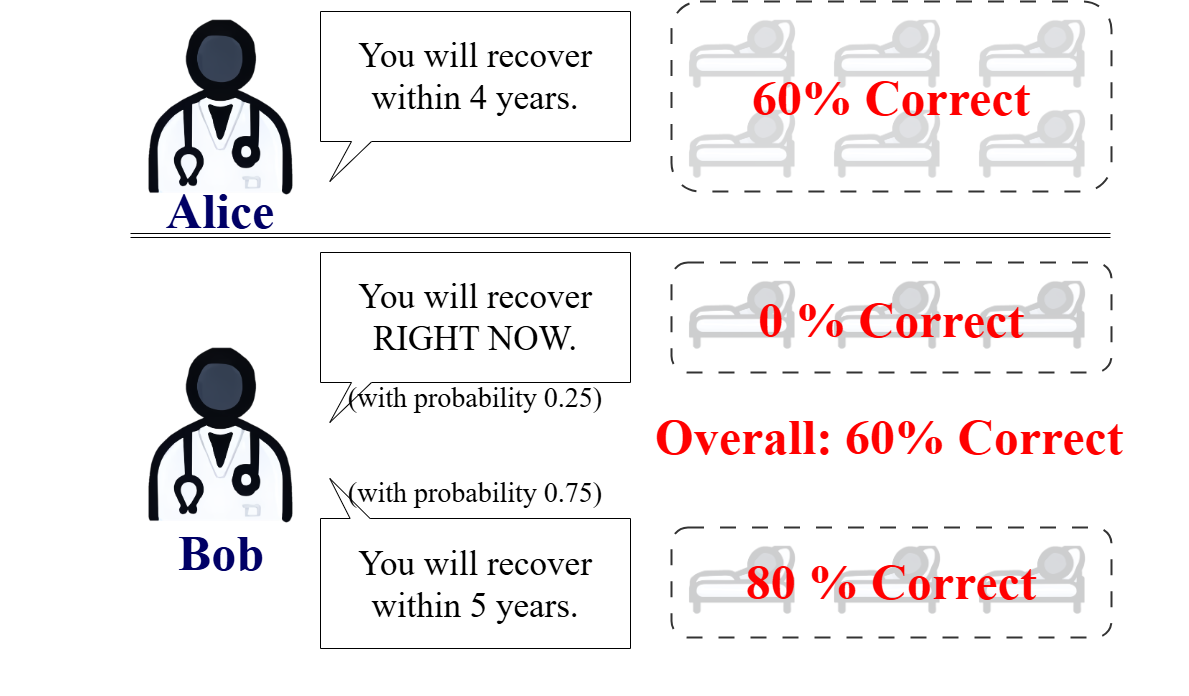}
    \caption{Illustration of Example~\ref{example: PatientRecovery}. Doctor Alice and Bob both achieve $60\%$ accuracy. Bob is more precise regarding length, but the corresponding strategy is not practically valid.}
    \label{fig:patient}
\end{figure}

\begin{algorithm}[h]
\caption{Vanilla Conformal Prediction (VCP)}
\label{alg:vcp}
\begin{algorithmic}[1]
\STATE \textbf{Input:} miscoverage rate $\alpha$, dataset $\cD=\{(\bm{x}_i, y_{i}):{i\in \cI}\}$, test point $\bm{x}^\prime$, non-conformity score function $s(\bm{x}_i,y_i;\hat{\mu})$.
\STATE Randomly split $\cD$ into a training fold $\cD_{\tr}=\{(\bm{x}_i,y_i): i \in \cI_{\tr}\}$ and a calibration fold $\cD_{\ca}=\{(\bm{x}_i,y_i): i\in \cI_{\ca}\}$;\\
\STATE Train a model $\hat{\mu}$ based on the training fold $\cD_{\tr}$;\\
\STATE Calculate the non-conformity score on the calibration fold $\cD_\ca$, denoted by $\cV= \{s(\bm{x}_i,y_i,\hat{\mu}): i \in \cI_\ca \}$; \\
\STATE Compute the $(1- \Tilde{\alpha})$-th quantile $\hat{Q}_{1-\Tilde{\alpha}}(\cV)$ of the empirical distribution of the non-conformity score set $\cV$ calculated on the calibration set $\cD_{\ca}$, where $1-\Tilde{\alpha}=(1-\alpha)(1+1/|\cV|)$;
\STATE \textbf{Output:} Interval $\cC_{1-\alpha}(\bm{x}^\prime)=\{y:s(\bm{x}^\prime,y;\hat{\mu})\leq \hat{Q}_{1-\Tilde{\alpha}}(\cV)\}$.
\end{algorithmic}
\end{algorithm}

\begin{figure*}[t]
    \centering
    \includegraphics[width=0.8\linewidth]{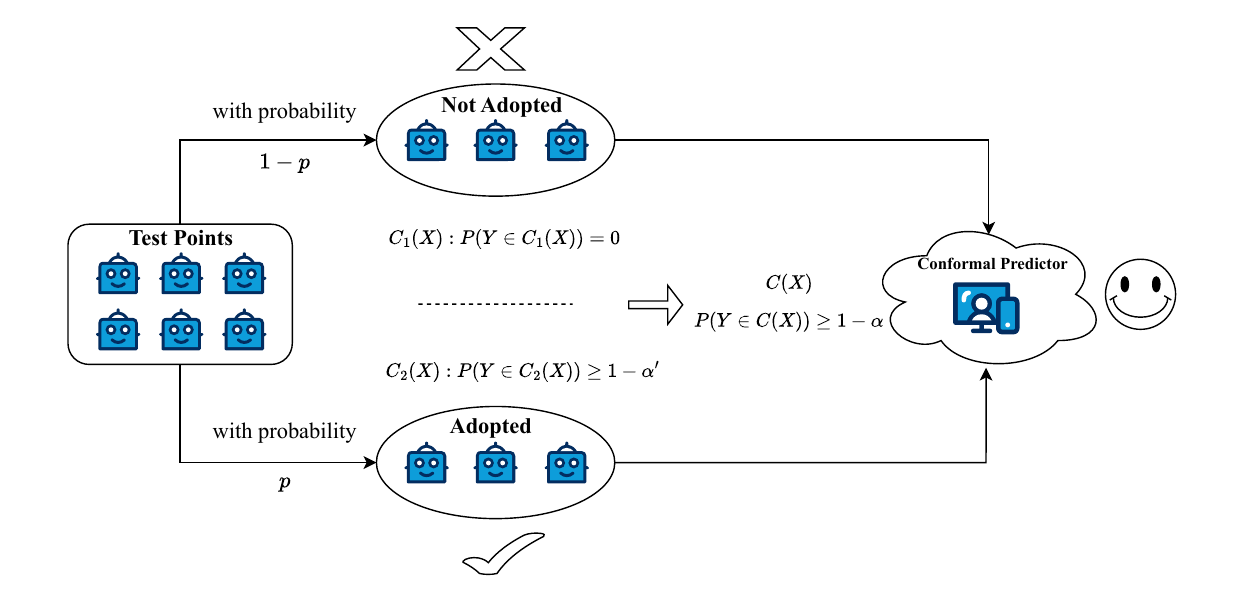}
    \caption{The illustration of Prejudicial Trick (PT). To obtain a $1-\alpha$ confidence interval, PT first assigns empty sets for a $1-p$ subset of the test points, and assigns $1-\alpha^\prime$ confidence interval for the remaining test points where $\alpha^\prime < \alpha$. The returned confidence interval still satisfies $\bP(Y \in \cC(X)) \geq 1-\alpha$ by setting a proper $\alpha^\prime$.}
    \label{fig:enter-label}
\end{figure*}

\section{Proofs for Theorems and Corollaries}
\label{appendix:Proof}

\subsection{Proof of Theorem~\ref{thm:coverage_guarantee_pt}}
\begin{proof}
We first prove the marginal coverage guarantee. By the construction of PT in Algorithm~\ref{alg:pcp}, the returned interval is the null set with probability $1-p$, and is the base interval $C^{\mathrm{CP}}_{1-\alpha'}(X')$ with probability $p$, where
\begin{align}
\alpha' = 1-\frac{1-\alpha}{p}.
\end{align}
Therefore,
\begin{align}
\begin{aligned}
\mathbb{P}\left(y'\in C^{\mathrm{PT}}_{1-\alpha}(X')\right)
&\ge
p\cdot
\mathbb{P}\left(y'\in C^{\mathrm{CP}}_{1-\alpha'}(X')\right).
\end{aligned}
\end{align}
Under the exchangeability assumption, the base CP algorithm satisfies the marginal coverage guarantee at level $1-\alpha'$, namely
\begin{align}
\mathbb{P}\left(y'\in C^{\mathrm{CP}}_{1-\alpha'}(X')\right)
\ge
1-\alpha'.
\end{align}
Hence,
\begin{align}
\begin{aligned}
\mathbb{P}\left(y'\in C^{\mathrm{PT}}_{1-\alpha}(X')\right)
&\ge
p(1-\alpha')  \\
&=
p\cdot \frac{1-\alpha}{p} \\
&=
1-\alpha.
\end{aligned}
\end{align}
This proves the marginal coverage guarantee.

We next prove the conditional coverage guarantee. Suppose that the base algorithm satisfies conditional coverage for any miscoverage level. Applying this assumption at level $\alpha'$, we have
\begin{align}
\mathbb{P}\left(y'\in C^{\mathrm{CP}}_{1-\alpha'}(X')\mid X'\right)
\ge
1-\alpha'
\end{align}
for $X'$ almost surely. Since the randomization in Algorithm~\ref{alg:pcp} is independent of the test point and the response, conditioning on $X'$ gives
\begin{align}
\begin{aligned}
\mathbb{P}\left(y'\in C^{\mathrm{PT}}_{1-\alpha}(X')\mid X'\right)
&\ge
p\cdot
\mathbb{P}\left(y'\in C^{\mathrm{CP}}_{1-\alpha'}(X')\mid X'\right) \\
&\ge
p(1-\alpha') \\
&=
1-\alpha.
\end{aligned}
\end{align}
Therefore,
\begin{align}
\mathbb{P}\left(y'\in C^{\mathrm{PT}}_{1-\alpha}(X')\mid X'\right)
\ge
1-\alpha
\end{align}
holds for any $\alpha$ and for $X'$ almost surely. This completes the proof.
\end{proof}


\subsection{Proof of Lemma~\ref{thm:general sufficient condition}}
\begin{proof}
Let $q=1-\tilde{\alpha}$ denote the target coverage level. By Algorithm~\ref{alg:pcp}, PT returns the null set with probability $1-p$, whose measure is zero, and returns the base CP interval at the adjusted coverage level $q/p$ with probability $p$. Therefore,
\begin{align}
\mathbb{E}\left[\left|C^{\mathrm{PT}}_{1-\tilde{\alpha}}(X')\right|\right]
&=
(1-p)\cdot 0
+
p\,\mathbb{E}_{X'}\left[
L\left(X',\frac{q}{p};s\right)
\right] \notag \\
&=
p\,\cG\left(\frac{q}{p};s\right) \notag \\
&=
p\,\cG\left(\frac{1-\tilde{\alpha}}{p};s\right).
\end{align}
On the other hand, the expected length of the base CP algorithm at the target coverage level $q=1-\tilde{\alpha}$ is
\begin{align}
\mathbb{E}\left[\left|C^{\mathrm{CP}}_{1-\tilde{\alpha}}(X')\right|\right]
&=
\mathbb{E}_{X'}\left[
L\left(X',1-\tilde{\alpha};s\right)
\right] \notag \\
&=
\cG(1-\tilde{\alpha};s).
\end{align}
Hence, if there exists $p\in(1-\tilde{\alpha},1)$ such that
\begin{align}
p\,\cG\left(\frac{1-\tilde{\alpha}}{p};s\right)
<
\cG(1-\tilde{\alpha};s),
\end{align}
then
\begin{align}
\mathbb{E}\left[\left|C^{\mathrm{PT}}_{1-\tilde{\alpha}}(X')\right|\right]
&=
p\,\cG\left(\frac{1-\tilde{\alpha}}{p};s\right) \notag \\
&<
\cG(1-\tilde{\alpha};s) \notag \\
&=
\mathbb{E}\left[\left|C^{\mathrm{CP}}_{1-\tilde{\alpha}}(X')\right|\right].
\end{align}
This proves that PT has a shorter expected length than its base algorithm under the stated condition.
\end{proof}


\subsection{Proof of Theorem~\ref{thm:FOSC}}
\begin{proof}
Let \(q=1-\tilde{\alpha}\). By Lemma~\ref{thm:general sufficient condition}, it suffices to show that there exists \(p\in(q,1)\) such that
\begin{align}
p\cG\left(\frac{q}{p};s\right)<\cG(q;s).
\end{align}
Equivalently, by writing \(u=q/p\), it suffices to find some \(u\in(q,1)\) such that
\begin{align}
\frac{\cG(u;s)}{u}<\frac{\cG(q;s)}{q}.
\end{align}
Define
\begin{align}
H(u):=\frac{\cG(u;s)}{u}.
\end{align}
Since \(\cG(\cdot;s)\) is first-order differentiable at \(q\), \(H\) is also differentiable at \(q\), and
\begin{align}
H'(q)
&=
\frac{q\frac{\partial}{\partial u}\cG(u;s)\big|_{u=q}-\cG(q;s)}{q^2}.
\end{align}
By the assumption in Lemma~\ref{thm:general sufficient condition}, we have
\begin{align}
\frac{\cG(q;s)}{q}
>
\frac{\partial}{\partial u}\cG(u;s)\bigg|_{u=q},
\end{align}
which implies
\begin{align}
q\frac{\partial}{\partial u}\cG(u;s)\bigg|_{u=q}-\cG(q;s)<0.
\end{align}
Therefore,
\begin{align}
H'(q)<0.
\end{align}
Hence, there exists some \(u\in(q,1)\) sufficiently close to \(q\) such that
\begin{align}
H(u)<H(q),
\end{align}
or equivalently,
\begin{align}
\frac{\cG(u;s)}{u}<\frac{\cG(q;s)}{q}.
\end{align}
Taking \(p=q/u\), we have \(p\in(q,1)\) and \(q/p=u\). Thus,
\begin{align}
p\cG\left(\frac{q}{p};s\right)
&=
\frac{q}{u}\cG(u;s) \notag \\
&<
\cG(q;s).
\end{align}
By Lemma~\ref{thm:general sufficient condition}, this implies
\begin{align}
\mathbb{E}\left[\left|C^{\mathrm{PT}}_{1-\tilde{\alpha}}(X')\right|\right]
<
\mathbb{E}\left[\left|C^{\mathrm{CP}}_{1-\tilde{\alpha}}(X')\right|\right].
\end{align}
This completes the proof.
\end{proof}



\subsection{Proof of Theorem~\ref{thm:non-smooth-length}}
\begin{proof}
Let
\begin{align}
q=1-\tilde{\alpha}.
\end{align}
By the assumption in Theorem~\ref{thm:non-smooth-length}, there exists \(u\in(q,1)\) such that
\begin{align}
\frac{\cG(q;s)}{q}
>
\frac{\cG(u;s)-\cG(q;s)}{u-q}.
\end{align}
Since \(u>q>0\), multiplying both sides by \(q(u-q)\) gives
\begin{align}
(u-q)\cG(q;s)
>
q\left(\cG(u;s)-\cG(q;s)\right).
\end{align}
Rearranging the terms yields
\begin{align}
u\cG(q;s)
>
q\cG(u;s).
\end{align}
Equivalently,
\begin{align}
\frac{q}{u}\cG(u;s)
<
\cG(q;s).
\end{align}
Now take
\begin{align}
p=\frac{q}{u}=\frac{1-\tilde{\alpha}}{u}.
\end{align}
Since \(u\in(q,1)\), we have \(p\in(q,1)\). Moreover,
\begin{align}
\frac{q}{p}=u.
\end{align}
Therefore,
\begin{align}
p\cG\left(\frac{q}{p};s\right)
&=
\frac{q}{u}\cG(u;s) \notag \\
&<
\cG(q;s).
\end{align}
By Lemma~\ref{thm:general sufficient condition}, this implies
\begin{align}
\mathbb{E}\left[\left|C^{\mathrm{PT}}_{1-\tilde{\alpha}}(X')\right|\right]
<
\mathbb{E}\left[\left|C^{\mathrm{CP}}_{1-\tilde{\alpha}}(X')\right|\right].
\end{align}
This completes the proof.
\end{proof}

\subsection{Proof of Example~\ref{ex:continuous_success}}
\begin{proof}
Let \(q=1-\tilde{\alpha}\). For VCP with the absolute residual score, the prediction interval at coverage level \(u\) has radius equal to the \(u\)-quantile of the non-conformity score \(S\). By assumption,
\begin{align}
\mathbb{P}(S\le r)=r^2,\qquad r\in[0,1].
\end{align}
Hence, the population \(u\)-quantile \(r_u\) of \(S\) satisfies
\begin{align}
u
&=
\mathbb{P}(S\le r_u) \notag \\
&=
r_u^2.
\end{align}
Therefore,
\begin{align}
r_u=\sqrt{u}.
\end{align}
Since the VCP interval has radius \(r_u\), its length is \(2r_u\). Thus, the averaged length function is
\begin{align}
\cG(u;s)
&=
2r_u \notag \\
&=
2\sqrt{u}.
\end{align}
For PT with parameter \(p\in(q,1)\), the adjusted coverage level is \(q/p\). Therefore,
\begin{align}
p\cG\left(\frac{q}{p};s\right)
&=
p\cdot 2\sqrt{\frac{q}{p}} \notag \\
&=
2\sqrt{pq}.
\end{align}
Since \(p<1\), we have
\begin{align}
2\sqrt{pq}
<
2\sqrt{q}.
\end{align}
Moreover,
\begin{align}
\cG(q;s)=2\sqrt{q}.
\end{align}
Combining the above equations gives
\begin{align}
p\cG\left(\frac{q}{p};s\right)
<
\cG(q;s).
\end{align}
By Lemma~\ref{thm:general sufficient condition}, PT strictly reduces the expected length compared with the base CP algorithm:
\begin{align}
\mathbb{E}\left[\left|C^{\mathrm{PT}}_{1-\tilde{\alpha}}(X')\right|\right]
<
\mathbb{E}\left[\left|C^{\mathrm{CP}}_{1-\tilde{\alpha}}(X')\right|\right].
\end{align}
This completes the proof.
\end{proof}

\subsection{Proof of Example~\ref{example:failure case}}
    When the non-conformity score follows a Gaussian distribution, the analytical solutions of the interval length returned by VCP and PT-VCP are
    \begin{equation}
        |\cC^{VCP}_{1-\tilde{\alpha}}(\bm{X}^\prime)| = 2\Phi^{-1}\left(1-\frac{\tilde{\alpha}}{2}\right),\quad |\cC^{PT}_{1-\tilde{\alpha}}(\bm{X}^\prime)| = 2p\Phi^{-1}\left(1-\frac{1}{2}\left(1-\frac{1-\tilde{\alpha}}{p}\right)\right)
    \end{equation}
    where $\Phi(\cdot)$ is the cumulative distribution function of the Gaussian distribution. Therefore, PT fails since Lemma~\ref{lem:technique lemma}.

\begin{lemma}
    \label{lem:technique lemma}
    $\forall \alpha\in(0, 1), p\in(1-\alpha, 1)$, there holds
    \begin{equation}
        \Phi^{-1}\left(1-\frac{\tilde{\alpha}}{2}\right) < p\Phi^{-1}\left(1-\frac{1}{2}\left(1-\frac{1-\tilde{\alpha}}{p}\right)\right)
    \end{equation}
\end{lemma}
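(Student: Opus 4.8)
\textbf{Proof proposal for Lemma~\ref{lem:technique lemma}.}

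The plan is to reduce the inequality to a statement about a single real-valued function and analyze it via convexity/monotonicity. Fix $\tilde\alpha \in (0,1)$ and write $c = 1-\tilde\alpha$, so that the claim becomes
\begin{equation}
    \Phi^{-1}\!\left(\frac{1+c}{2}\right) < p\,\Phi^{-1}\!\left(\frac{1}{2} + \frac{c}{2p}\right)\qquad\text{for all }p\in(c,1).
\end{equation}
Define $g(p) := p\,\Phi^{-1}\!\bigl(\tfrac12 + \tfrac{c}{2p}\bigr)$ on the interval $p\in(c,1]$. At the right endpoint $p=1$ we have $g(1) = \Phi^{-1}\!\bigl(\tfrac{1+c}{2}\bigr)$, so the lemma is equivalent to showing $g(p) > g(1)$ for every $p\in(c,1)$, i.e.\ that $g$ is strictly decreasing on $(c,1]$, or at least that $g(1)$ is the strict minimum over this interval. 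Note that as $p\downarrow c$ the argument $\tfrac12 + \tfrac{c}{2p} \uparrow 1$, so $g(p)\to+\infty$, which is consistent with $g$ decreasing toward $g(1)$.

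The key step is to compute $g'(p)$ and show it is negative on $(c,1)$. Writing $q = \Phi^{-1}\!\bigl(\tfrac12 + \tfrac{c}{2p}\bigr)$ and using $(\Phi^{-1})'(t) = 1/\varphi(\Phi^{-1}(t))$ where $\varphi$ is the standard normal density, one gets
\begin{equation}
    g'(p) = \Phi^{-1}\!\left(\tfrac12 + \tfrac{c}{2p}\right) + p\cdot\frac{1}{\varphi(q)}\cdot\left(-\frac{c}{2p^2}\right) = q - \frac{c}{2p\,\varphi(q)}.
\end{equation}
Since $\tfrac12 + \tfrac{c}{2p} = \Phi(q)$ we have $c/(2p) = \Phi(q) - \tfrac12$, so $g'(p) < 0$ is equivalent to $q\,\varphi(q) < \Phi(q) - \tfrac12$ for the relevant range $q\in(\Phi^{-1}(\tfrac{1+c}{2}),\infty)$; but in fact it suffices to establish the cleaner universal inequality $q\,\varphi(q) < \Phi(q) - \tfrac12$ for \emph{all} $q > 0$. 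This last inequality is standard: let $h(q) = \Phi(q) - \tfrac12 - q\varphi(q)$; then $h(0)=0$ and $h'(q) = \varphi(q) - \varphi(q) - q\varphi'(q) = q^2\varphi(q) > 0$ for $q>0$, so $h(q)>0$ on $(0,\infty)$. This gives $g'(p)<0$ on $(c,1)$, hence $g(p) > g(1)$, which is the claim. (One should check $q>0$ holds throughout: since $p < 1$ we have $\tfrac12 + \tfrac{c}{2p} > \tfrac12 + \tfrac{c}{2} > \tfrac12$, so indeed $q = \Phi^{-1}(\cdot) > 0$.)

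The main obstacle I anticipate is purely in the bookkeeping of the chain rule for $g'$ and making sure the range of $q$ is correctly tracked — in particular confirming $q>0$ uses $p<1$ and $c\in(0,1)$, so the auxiliary inequality $q\varphi(q) < \Phi(q)-\tfrac12$ is only ever invoked on $(0,\infty)$ where it is true. No deep idea is needed beyond the observation that reducing to monotonicity of $g$ turns the two-variable inequality into the one-variable lemma $h(q)>0$, whose proof is an elementary derivative computation. An alternative, should the monotonicity route prove awkward at the endpoint, is to fix $p$ and instead study the function $\alpha\mapsto$ (RHS $-$ LHS) or to invoke concavity of $t\mapsto\Phi^{-1}(\tfrac12 + t)$ on $[0,\tfrac12)$ together with $\Phi^{-1}(\tfrac12)=0$ to get $\Phi^{-1}\bigl(\tfrac12 + \lambda s\bigr) \ge \lambda\,\Phi^{-1}\bigl(\tfrac12 + s\bigr)$ for $\lambda\in(0,1)$ — but one must be careful that here the scaling is on the argument offset rather than a direct convex combination, so the monotonicity-of-$g$ argument above is the cleanest path.
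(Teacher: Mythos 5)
Your proof is correct and takes a genuinely different route from the paper's. You view the inequality as a monotonicity statement in $p$: defining $g(p) = p\,\Phi^{-1}\bigl(\tfrac12 + \tfrac{c}{2p}\bigr)$ with $c=1-\tilde\alpha$, you show $g'(p)<0$ on $(c,1)$ by reducing to the one-variable inequality $q\varphi(q) < \Phi(q) - \tfrac12$ for $q>0$, which you verify via $h(0)=0$ and $h'(q)=q^2\varphi(q)>0$. The paper instead passes to the lower-tail form $p\,\Phi^{-1}(u(p)) < \Phi^{-1}(u_0)$ and appeals to concavity of $\Phi^{-1}$ on $(0,\tfrac12)$ via the tangent line at $u_0=\alpha/2$. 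Comparing the two, your approach is in fact the more complete one: if you track the algebra carefully in the paper's argument, the bracket in the penultimate display should read $\bigl[-g(u_0) - \tfrac{1-\alpha}{2\varphi(g(u_0))}\bigr]$ (with a minus), and showing that this corrected bracket is nonpositive is not the triviality the paper asserts — after the substitution $z=-\Phi^{-1}(\alpha/2)>0$ it is exactly the inequality $z\varphi(z)\le\Phi(z)-\tfrac12$, i.e., your auxiliary lemma $h(q)>0$. So the tangent-line route, carried out correctly, requires precisely the ingredient your proof isolates and proves. (As an aside, there is also a derivative-free concavity argument: since $\Phi^{-1}(\tfrac12)=0$ and one checks $u_0 = p\,u(p) + (1-p)\cdot\tfrac12$, strict concavity of $\Phi^{-1}$ on $(0,\tfrac12]$ gives $\Phi^{-1}(u_0) > p\,\Phi^{-1}(u(p)) + (1-p)\cdot 0$ directly by the definition of concavity, which is the desired inequality.)
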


\begin{proof}
    Using the symmetry identity $\Phi^{-1}(1-u)=-\Phi^{-1}(u)$, the desired inequality is equivalent to
    \begin{equation}
        p\,\Phi^{-1}\!\Bigl(\tfrac12\Bigl(1-\tfrac{1-\alpha}{p}\Bigr)\Bigr)\;<\;\Phi^{-1}(\alpha/2).
        \label{eq:lower_form}
    \end{equation}

    Define
    \begin{equation}
        u_0:=\alpha/2\in(0,1/2),\qquad 
        u(p):=\tfrac12\Bigl(1-\tfrac{1-\alpha}{p}\Bigr)\in(0,1/2).
        \label{eq:u_def}
    \end{equation}
    Let $g(u):=\Phi^{-1}(u)$ on $(0,1/2)$. Since
    \begin{equation}
        g'(u)=\frac{1}{\phi(\Phi^{-1}(u))}>0,\qquad 
        g''(u)=\frac{\Phi^{-1}(u)}{\phi(\Phi^{-1}(u))^{2}}<0,
        \label{eq:g_derivatives}
    \end{equation}
    where $\phi(\cdot)$ denotes the p.d.f. of the standard normal distribution, the function $g$ is increasing and strictly concave. Hence, for any $u\le u_0$,
    \begin{equation}
        g(u)\;\le\; g(u_0)+g'(u_0)\,(u-u_0).
        \label{eq:tangent}
    \end{equation}

    Plugging $u=u(p)$ into Eq~\eqref{eq:tangent} and multiplying both sides by $p\in(0,1)$ gives
    \begin{equation}
        p\,g(u(p))\;\le\;p\,g(u_0)\;+\;p\,g'(u_0)\,\bigl(u(p)-u_0\bigr).
        \label{eq:pg_up}
    \end{equation}

    A direct calculation yields
    \begin{equation}
        u(p)-u_0
        =\frac{(1-\alpha)(p-1)}{2p}\;<\;0.
        \label{eq:up_minus_u0}
    \end{equation}

    Subtracting $g(u_0)$ from both sides of Eq~\eqref{eq:pg_up} and using Eq~\eqref{eq:up_minus_u0}, we obtain
    \begin{equation}
        p\,g(u(p))-g(u_0)
        \;\le\;(1-p)\Bigl[-\,g(u_0)+\frac{1-\alpha}{2\,\phi(g(u_0))}\Bigr].
        \label{eq:ineq_bracket}
    \end{equation}

    Here $1-p>0$. Moreover, since $g(u_0)=\Phi^{-1}(\alpha/2)<0$ and $\phi(g(u_0))>0$, the bracket in Eq~\eqref{eq:ineq_bracket} is nonnegative. Thus the right-hand side of Eq~\eqref{eq:ineq_bracket} is nonpositive, implying
    \begin{equation}
        p\,g(u(p))-g(u_0)<0,
        \label{eq:ineq_final_lower}
    \end{equation}
    which is exactly inequality Eq~\eqref{eq:lower_form}. The inequality is strict because $u(p)\neq u_0$ and $g$ is strictly concave.

    Reverting to the upper-tail form via $\Phi^{-1}(1-u)=-\Phi^{-1}(u)$ completes the proof:
    \begin{equation}
        \Phi^{-1}(1-\alpha/2)\;<\;p\,\Phi^{-1}\!\Bigl(\tfrac12\Bigl(1+\tfrac{1-\alpha}{p}\Bigr)\Bigr).
        \label{eq:final_upper}
    \end{equation}
\end{proof}

\section{Omitted Experiments}
\label{appendix:More Details}
In this section, we present all the omitted experiments. In Appendix~\ref{appendix:Omitted Experimental Results}, we demonstrate the missing experimental results in Section~\ref{sec:coverage} and Section~\ref{sec:length}. In Appendix~\ref{appendix:More Ablation Studies}, we exhibit the ablation study results.

\subsection{Omitted Experimental Results}
\label{appendix:Omitted Experimental Results}
\textbf{Classification Tasks.} We extend PT to classification tasks. We apply PT to the real-world IMAGENET-VAL dataset~\citep{Deng2009ImageNetAL} with several pre-trained models, a similar setting with~\citet{angelopoulos2020uncertainty}. To simulate model misspecification, a bias is introduced to the logits of several classes before the softmax operation. The magnitude of the bias is determined based on the scale of the outputs. The experimental results on classification tasks in Table~\ref{table:class} perform similarly to regression tasks. Specifically, PT-RAPS attains valid coverage across different models~(Theorem~\ref{thm:coverage_guarantee_pt}) while reducing the set size compared to RAPS~(Theorem~\ref{thm:non-smooth-length}).

\begin{figure}[t]
    \centering
    \includegraphics[width=0.5\linewidth]{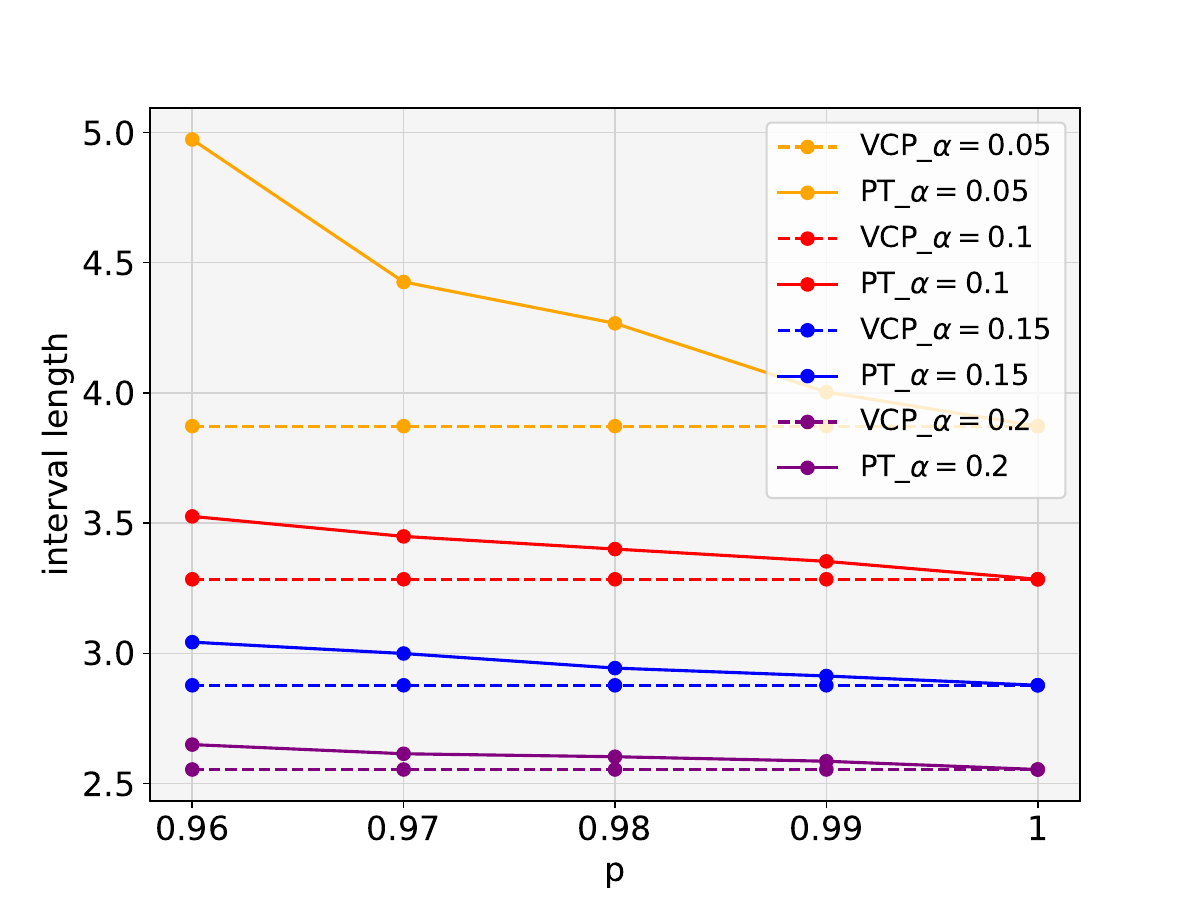}
    \caption{PT fails to improve length when the conditions on the distribution of non-conformity score are not satisfied.}
    \label{fig:failure case}
\end{figure}

\begin{table}[t]
\caption{Comparison between RAPS and PT-RAPS in classification tasks across different models, with $\alpha=0.1, p=0.95$ and index range chosen as $300$.}
\label{class}
\begin{center}
\begin{small}
\begin{sc}
\begin{tabular}{lccccc}
\toprule
Method      & Bias & \multicolumn{2}{c}{RAPS}                     & \multicolumn{2}{c}{PT-RAPS} \\
\cmidrule(lr){3-4} \cmidrule(lr){5-6}
Model       &      & Coverage               & Length           & Coverage               & Length \\
\midrule
resnet18    & 40   & 0.90 \scriptsize{$\pm 0.000$} & 304.02 \scriptsize{$\pm 0.004$} & 0.90 \scriptsize{$\pm 0.000$} & \textbf{295.60} \scriptsize{$\pm 0.233$} \\
resnet50    & 40   & 0.90 \scriptsize{$\pm 0.000$} & 302.09 \scriptsize{$\pm 0.027$} & 0.90 \scriptsize{$\pm 0.000$} & \textbf{290.29} \scriptsize{$\pm 0.224$} \\
resnet101   & 40   & 0.90 \scriptsize{$\pm 0.000$} & 302.01 \scriptsize{$\pm 0.004$} & 0.90 \scriptsize{$\pm 0.000$} & \textbf{289.56} \scriptsize{$\pm 0.174$} \\
resnet152   & 40   & 0.89 \scriptsize{$\pm 0.004$} & 301.53 \scriptsize{$\pm 0.054$} & 0.90 \scriptsize{$\pm 0.000$} & \textbf{288.98} \scriptsize{$\pm 0.165$} \\
resnext101  & 40   & 0.90 \scriptsize{$\pm 0.000$} & 301.48 \scriptsize{$\pm 0.013$} & 0.90 \scriptsize{$\pm 0.000$} & \textbf{288.49} \scriptsize{$\pm 0.201$} \\
vgg16       & 40   & 0.90 \scriptsize{$\pm 0.004$} & 303.39 \scriptsize{$\pm 0.143$} & 0.90 \scriptsize{$\pm 0.000$} & \textbf{293.34} \scriptsize{$\pm 0.304$} \\
shufflenet  & 40   & 0.90 \scriptsize{$\pm 0.000$} & 304.05 \scriptsize{$\pm 0.040$} & 0.90 \scriptsize{$\pm 0.000$} & \textbf{295.79} \scriptsize{$\pm 0.282$} \\
inception   & 40   & 0.90 \scriptsize{$\pm 0.000$} & 304.10 \scriptsize{$\pm 0.013$} & 0.90 \scriptsize{$\pm 0.000$} & \textbf{297.25} \scriptsize{$\pm 0.228$} \\
densenet161 & 40   & 0.90 \scriptsize{$\pm 0.000$} & 302.03 \scriptsize{$\pm 0.009$} & 0.90 \scriptsize{$\pm 0.000$} & \textbf{289.29} \scriptsize{$\pm 0.197$} \\
\bottomrule
\end{tabular}
\end{sc}
\end{small}
\end{center}
\vspace{-1em}
\label{table:class}
\end{table}

\textbf{Conformalized Quantile Regression.} We deploy PT into other variants of conformal prediction. Specifically, we choose CQR as a baseline~\citep{romano2019conformalized}. CQR inherits the advantages of both conformal prediction and classical quantile regression. We use the same datasets and evaluation metrics as in Section~\ref{sec:length}. To mimic the model misspecification, we add bias directly to the lower and upper quantiles obtained by the quantile regression. The experimental results on the real-world CQR tasks are exhibited in Table~\ref{table:CQR}. It illustrates that \textit{PT achieves shorter interval length while maintaining valid coverage on CQR.} 


\begin{table}[t]
\caption{Comparison between CQR and PT-CQR in quantile regression task across different datasets.}
\label{CQR}
\begin{center}
\begin{small}
\begin{tabular}{lccccc}
\toprule
Method     & Bias & \multicolumn{2}{c}{CQR}                   & \multicolumn{2}{c}{PT-CQR} \\
\cmidrule(lr){3-4} \cmidrule(lr){5-6}
Dataset    &      & Coverage             & Length            & Coverage              & Length \\
\midrule
meps-19    & 1    & 0.91 \scriptsize{$\pm 0.000$} & 4.60 \scriptsize{$\pm 0.148$} & 0.91 \scriptsize{$\pm 0.246$} & \textbf{4.44} \scriptsize{$\pm 0.143$} \\
meps-20    & 1    & 0.91 \scriptsize{$\pm 0.000$} & 4.58 \scriptsize{$\pm 0.192$} & 0.91 \scriptsize{$\pm 0.179$} & \textbf{4.41} \scriptsize{$\pm 0.188$} \\
meps-21    & 1    & 0.91 \scriptsize{$\pm 0.000$} & 4.65 \scriptsize{$\pm 0.080$} & 0.91 \scriptsize{$\pm 0.161$} & \textbf{4.52} \scriptsize{$\pm 0.107$} \\
bike       & 1    & 0.91 \scriptsize{$\pm 0.000$} & 2.61 \scriptsize{$\pm 0.013$} & 0.90 \scriptsize{$\pm 0.268$} & \textbf{2.51} \scriptsize{$\pm 0.009$} \\
blog-data  & 1    & 0.91 \scriptsize{$\pm 0.000$} & 3.80 \scriptsize{$\pm 0.107$} & 0.93 \scriptsize{$\pm 0.116$} & \textbf{3.61} \scriptsize{$\pm 0.098$} \\
bio        & 1    & 0.91 \scriptsize{$\pm 0.000$} & 3.45 \scriptsize{$\pm 0.009$} & 0.90 \scriptsize{$\pm 0.112$} & \textbf{3.32} \scriptsize{$\pm 0.009$} \\
facebook-1 & 1    & 0.91 \scriptsize{$\pm 0.000$} & 3.38 \scriptsize{$\pm 0.022$} & 0.92 \scriptsize{$\pm 0.125$} & \textbf{3.22} \scriptsize{$\pm 0.027$} \\
facebook-2 & 1    & 0.91 \scriptsize{$\pm 0.000$} & 3.57 \scriptsize{$\pm 0.027$} & 0.92 \scriptsize{$\pm 0.085$} & \textbf{3.39} \scriptsize{$\pm 0.027$} \\
concrete   & 2    & 0.91 \scriptsize{$\pm 0.000$} & 4.39 \scriptsize{$\pm 0.022$} & 0.88 \scriptsize{$\pm 0.648$} & \textbf{4.23} \scriptsize{$\pm 0.018$} \\
star       & 2    & 0.91 \scriptsize{$\pm 0.000$} & 4.15 \scriptsize{$\pm 0.004$} & 0.90 \scriptsize{$\pm 0.349$} & \textbf{3.96} \scriptsize{$\pm 0.009$} \\
\bottomrule
\end{tabular}
\end{small}
\end{center}
\vspace{-1em}
\label{table:CQR}
\end{table}

\textbf{Relaxed PT without exactly null prediction sets.} The standard PT construction assigns a null prediction set to a fraction of test samples. To examine whether the phenomenon relies on exactly null outputs, we further consider a relaxed PT variant that replaces the null prediction set with a small nonempty prediction set controlled by a perturbation rate $\epsilon$. This construction is closer to localized CP in the sense that the returned set can be viewed as being generated by a very small but nonzero local scale estimate, rather than by an exactly zero scale. As shown in Table~\ref{table:relaxed_pt}, the relaxed PT-VCP still preserves comparable coverage while reducing the average interval length across all datasets. This suggests that the observed metric failure is not merely an artifact of returning exactly null prediction sets.

\begin{table}[t]
\caption{
Comparison of performance between VCP and localized-CP-like PT-VCP in regression tasks across different datasets at fixed $\alpha=0.1$, $p=0.96$, and perturbation rate $\epsilon=0.01$. Values are reported as mean $\pm$ std over 5 random seeds. Bold marks the smaller interval length between VCP and PT-VCP for each dataset.
}
\label{table:relaxed_pt}
\begin{center}
\begin{small}
\begin{tabular}{lccccc}
\toprule
Method     & Bias & \multicolumn{2}{c}{VCP}                   & \multicolumn{2}{c}{PT-VCP} \\
\cmidrule(lr){3-4} \cmidrule(lr){5-6}
Dataset    &      & Coverage             & Length            & Coverage              & Length \\
\midrule
meps-19    & 20 & 0.901 \scriptsize{$\pm 0.005$} & 42.34 \scriptsize{$\pm 0.51$} & 0.896 \scriptsize{$\pm 0.003$} & \textbf{41.73} \scriptsize{$\pm 0.68$} \\
meps-20    & 20 & 0.900 \scriptsize{$\pm 0.005$} & 41.98 \scriptsize{$\pm 0.26$} & 0.901 \scriptsize{$\pm 0.006$} & \textbf{41.40} \scriptsize{$\pm 0.42$} \\
meps-21    & 20 & 0.903 \scriptsize{$\pm 0.005$} & 42.28 \scriptsize{$\pm 0.25$} & 0.897 \scriptsize{$\pm 0.003$} & \textbf{41.62} \scriptsize{$\pm 0.48$} \\
bike       & 10 & 0.901 \scriptsize{$\pm 0.005$} & 20.46 \scriptsize{$\pm 0.04$} & 0.901 \scriptsize{$\pm 0.005$} & \textbf{19.79} \scriptsize{$\pm 0.08$} \\
blog-data  & 20 & 0.898 \scriptsize{$\pm 0.005$} & 41.67 \scriptsize{$\pm 0.75$} & 0.899 \scriptsize{$\pm 0.003$} & \textbf{41.04} \scriptsize{$\pm 0.86$} \\
bio        & 10 & 0.901 \scriptsize{$\pm 0.004$} & 21.13 \scriptsize{$\pm 0.05$} & 0.900 \scriptsize{$\pm 0.003$} & \textbf{20.55} \scriptsize{$\pm 0.06$} \\
facebook-1 & 10 & 0.902 \scriptsize{$\pm 0.003$} & 20.81 \scriptsize{$\pm 0.08$} & 0.899 \scriptsize{$\pm 0.003$} & \textbf{20.62} \scriptsize{$\pm 0.24$} \\
facebook-2 & 10 & 0.900 \scriptsize{$\pm 0.001$} & 20.97 \scriptsize{$\pm 0.15$} & 0.900 \scriptsize{$\pm 0.003$} & \textbf{20.87} \scriptsize{$\pm 0.30$} \\
concrete   & 5  & 0.895 \scriptsize{$\pm 0.030$} & 10.32 \scriptsize{$\pm 0.02$} & 0.899 \scriptsize{$\pm 0.021$} & \textbf{10.09} \scriptsize{$\pm 0.09$} \\
star       & 5  & 0.909 \scriptsize{$\pm 0.010$} & 10.14 \scriptsize{$\pm 0.01$} & 0.909 \scriptsize{$\pm 0.002$} & \textbf{9.78} \scriptsize{$\pm 0.02$} \\
\bottomrule
\end{tabular}
\end{small}
\end{center}
\vspace{-1em}
\end{table}

\textbf{Stronger misspecification.} We also evaluate PT-VCP under a larger bias setting to further investigate how model misspecification affects the length reduction induced by PT. This experiment complements the main regression results in Section~3.4, where the improvement in average length can be modest on some datasets. As reported in Table~\ref{table:stronger_bias}, increasing the bias amplifies the interval-length contrast between VCP and PT-VCP while maintaining comparable coverage. These results are consistent with our theoretical discussion that PT is more likely to appear favorable when the sufficient conditions for length reduction are more strongly satisfied.

\begin{table}[t]
\caption{
Comparison of performance between VCP and PT-VCP in regression tasks across different datasets at fixed $\alpha=0.1$ and $p=0.96$, using a larger bias setting to amplify the interval-length contrast. Values are reported as mean $\pm$ std over 5 random seeds. Bold marks the smaller interval length between VCP and PT-VCP for each dataset.
}
\label{table:stronger_bias}
\begin{center}
\begin{small}
\begin{tabular}{lccccc}
\toprule
Method     & Bias & \multicolumn{2}{c}{VCP}                   & \multicolumn{2}{c}{PT-VCP} \\
\cmidrule(lr){3-4} \cmidrule(lr){5-6}
Dataset    &      & Coverage             & Length            & Coverage              & Length \\
\midrule
meps-19    & 80 & 0.900 \scriptsize{$\pm 0.006$} & 162.32 \scriptsize{$\pm 0.51$} & 0.898 \scriptsize{$\pm 0.006$} & \textbf{156.99} \scriptsize{$\pm 0.65$} \\
meps-20    & 80 & 0.900 \scriptsize{$\pm 0.005$} & 161.96 \scriptsize{$\pm 0.26$} & 0.901 \scriptsize{$\pm 0.003$} & \textbf{156.62} \scriptsize{$\pm 0.65$} \\
meps-21    & 80 & 0.904 \scriptsize{$\pm 0.005$} & 162.27 \scriptsize{$\pm 0.25$} & 0.899 \scriptsize{$\pm 0.003$} & \textbf{156.88} \scriptsize{$\pm 0.75$} \\
bike       & 40 & 0.901 \scriptsize{$\pm 0.005$} & 80.46 \scriptsize{$\pm 0.04$} & 0.899 \scriptsize{$\pm 0.006$} & \textbf{77.29} \scriptsize{$\pm 0.18$} \\
blog-data  & 80 & 0.899 \scriptsize{$\pm 0.005$} & 161.64 \scriptsize{$\pm 0.73$} & 0.899 \scriptsize{$\pm 0.004$} & \textbf{156.10} \scriptsize{$\pm 0.92$} \\
bio        & 40 & 0.901 \scriptsize{$\pm 0.004$} & 81.13 \scriptsize{$\pm 0.05$} & 0.900 \scriptsize{$\pm 0.003$} & \textbf{78.11} \scriptsize{$\pm 0.14$} \\
facebook-1 & 40 & 0.902 \scriptsize{$\pm 0.002$} & 80.77 \scriptsize{$\pm 0.07$} & 0.901 \scriptsize{$\pm 0.003$} & \textbf{78.15} \scriptsize{$\pm 0.28$} \\
facebook-2 & 40 & 0.900 \scriptsize{$\pm 0.001$} & 80.94 \scriptsize{$\pm 0.15$} & 0.899 \scriptsize{$\pm 0.001$} & \textbf{78.33} \scriptsize{$\pm 0.32$} \\
concrete   & 20 & 0.895 \scriptsize{$\pm 0.030$} & 40.32 \scriptsize{$\pm 0.02$} & 0.892 \scriptsize{$\pm 0.021$} & \textbf{38.65} \scriptsize{$\pm 0.21$} \\
star       & 20 & 0.909 \scriptsize{$\pm 0.010$} & 40.14 \scriptsize{$\pm 0.01$} & 0.906 \scriptsize{$\pm 0.009$} & \textbf{38.47} \scriptsize{$\pm 0.27$} \\
\bottomrule
\end{tabular}
\end{small}
\end{center}
\vspace{-1em}
\end{table}

\textbf{Group Coverage.} In Section~\ref{sec:coverage}, we conduct experiments to evaluate the different performance of VCP and PT-VCP regarding group coverage. The experimental results shown in Table~\ref{table reg_group coverage} demonstrate that PT not only achieves shorter confidence intervals while maintaining overall coverage, \textit{but also improves the group coverage in regression tasks}\footnote{Group coverage is defined as the lowest coverage rate among all the groups.}.

\begin{table}[t]
\caption{Comparison of group coverage between VCP and PT-VCP on regression tasks across different datasets ($\alpha=0.1$).}
\label{table reg_group coverage}
\centering
\begin{small}
\begin{tabular}{l l cc}
\toprule
\textbf{Dataset} & \textbf{Group} & \textbf{VCP} & \textbf{PT-VCP} \\
\midrule
\multirow{3}{*}{bike}     
    & Day     & 0.878 \scriptsize{$\pm$ 0.007} & \textbf{0.884} \scriptsize{$\pm$ 0.010} \\
    & Month   & 0.826 \scriptsize{$\pm$ 0.010} & \textbf{0.857} \scriptsize{$\pm$ 0.011} \\
    & Year    & 0.851 \scriptsize{$\pm$ 0.005} & \textbf{0.871} \scriptsize{$\pm$ 0.004} \\
\midrule
\multirow{3}{*}{star}     
    & Gender  & 0.905 \scriptsize{$\pm$ 0.008} & \textbf{0.905} \scriptsize{$\pm$ 0.002} \\
    & Stark   & 0.890 \scriptsize{$\pm$ 0.005} & \textbf{0.895} \scriptsize{$\pm$ 0.007} \\
    & School1 & 0.902 \scriptsize{$\pm$ 0.008} & \textbf{0.899} \scriptsize{$\pm$ 0.022} \\
\midrule
\multirow{3}{*}{meps-19}  
    & SEX=1     & 0.883 \scriptsize{$\pm$ 0.004} & \textbf{0.895} \scriptsize{$\pm$ 0.001} \\
    & MARRY=1   & 0.901 \scriptsize{$\pm$ 0.004} & \textbf{0.901} \scriptsize{$\pm$ 0.003} \\
    & REGION=1  & 0.862 \scriptsize{$\pm$ 0.005} & \textbf{0.877} \scriptsize{$\pm$ 0.006} \\
\midrule
\multirow{3}{*}{meps-20}  
    & FTSTU=1   & 0.893 \scriptsize{$\pm$ 0.004} & \textbf{0.900} \scriptsize{$\pm$ 0.002} \\
    & ACTDTY=1  & 0.897 \scriptsize{$\pm$ 0.003} & \textbf{0.902} \scriptsize{$\pm$ 0.002} \\
    & HONRDC=1  & 0.792 \scriptsize{$\pm$ 0.010} & \textbf{0.846} \scriptsize{$\pm$ 0.008} \\
\midrule
\multirow{3}{*}{meps-21}  
    & RTHLTH=1  & 0.864 \scriptsize{$\pm$ 0.004} & \textbf{0.877} \scriptsize{$\pm$ 0.004} \\
    & MNHLTH=1  & 0.856 \scriptsize{$\pm$ 0.004} & \textbf{0.873} \scriptsize{$\pm$ 0.004} \\
    & HIBPDX=1  & 0.755 \scriptsize{$\pm$ 0.013} & \textbf{0.818} \scriptsize{$\pm$ 0.009} \\
\bottomrule
\end{tabular}
\end{small}
\label{tab:group_coverage}
\end{table}

\textbf{Additional p-value based efficiency criteria.} We further evaluate RAPS and PT-RAPS under the p-value based efficiency criteria proposed by ~\citep{10.1007/978-3-319-33395-3_2}. These criteria provide alternative summaries of predictive efficiency for classification. As shown in Tables~\ref{table:pvalue_main} and~\ref{table:pvalue_observed}, PT-RAPS obtains more favorable values on seven out of the ten criteria. This supports the conclusion that the PT failure mode is not restricted to the average set-size metric. However, this experiment should be interpreted as empirical evidence rather than a complete characterization of all possible efficiency criteria.

\begin{table*}[t]
\caption{
Main p-value criteria across models. Hyperparameters are fixed as $\alpha=\epsilon=0.1$, calibration size $=10{,}000$, seeds $=5$, $p=0.95$, \texttt{pt\_bias} $=40$, \texttt{pt\_index\_range} $=300$, with smoothed p-values enabled. Each entry is reported as RAPS / PT-RAPS. Bold indicates the better, i.e., smaller, value between RAPS and PT-RAPS for each metric/model pair.
}
\label{table:pvalue_main}
\begin{center}
\begin{small}
\resizebox{\textwidth}{!}{
\begin{tabular}{lcccccc}
\toprule
Model & S & N & U & F & M & E \\
\midrule
ResNet18
& \textbf{215.01} \scriptsize{$\pm 1.12$} / 241.49 \scriptsize{$\pm 0.89$}
& 304.02 \scriptsize{$\pm 0.01$} / \textbf{295.44} \scriptsize{$\pm 0.20$}
& 0.93 \scriptsize{$\pm 0.00$} / \textbf{0.89} \scriptsize{$\pm 0.00$}
& \textbf{214.05} \scriptsize{$\pm 1.12$} / 240.59 \scriptsize{$\pm 0.89$}
& 1.00 \scriptsize{$\pm 0.00$} / \textbf{0.95} \scriptsize{$\pm 0.00$}
& 303.02 \scriptsize{$\pm 0.01$} / \textbf{294.49} \scriptsize{$\pm 0.20$} \\

ResNet50
& \textbf{212.77} \scriptsize{$\pm 1.04$} / 239.48 \scriptsize{$\pm 0.82$}
& 302.07 \scriptsize{$\pm 0.06$} / \textbf{290.17} \scriptsize{$\pm 0.27$}
& 0.94 \scriptsize{$\pm 0.00$} / \textbf{0.90} \scriptsize{$\pm 0.00$}
& \textbf{211.82} \scriptsize{$\pm 1.04$} / 238.57 \scriptsize{$\pm 0.82$}
& 1.00 \scriptsize{$\pm 0.00$} / \textbf{0.95} \scriptsize{$\pm 0.00$}
& 301.07 \scriptsize{$\pm 0.06$} / \textbf{289.22} \scriptsize{$\pm 0.28$} \\

ResNet101
& \textbf{212.55} \scriptsize{$\pm 0.76$} / 239.27 \scriptsize{$\pm 0.57$}
& 301.96 \scriptsize{$\pm 0.07$} / \textbf{289.44} \scriptsize{$\pm 0.32$}
& 0.94 \scriptsize{$\pm 0.00$} / \textbf{0.90} \scriptsize{$\pm 0.00$}
& \textbf{211.59} \scriptsize{$\pm 0.76$} / 238.36 \scriptsize{$\pm 0.57$}
& 1.00 \scriptsize{$\pm 0.00$} / \textbf{0.95} \scriptsize{$\pm 0.00$}
& 300.96 \scriptsize{$\pm 0.07$} / \textbf{288.49} \scriptsize{$\pm 0.32$} \\

ResNet152
& \textbf{212.48} \scriptsize{$\pm 0.82$} / 239.21 \scriptsize{$\pm 0.61$}
& 301.47 \scriptsize{$\pm 0.14$} / \textbf{288.93} \scriptsize{$\pm 0.32$}
& 0.94 \scriptsize{$\pm 0.00$} / \textbf{0.90} \scriptsize{$\pm 0.00$}
& \textbf{211.52} \scriptsize{$\pm 0.82$} / 238.30 \scriptsize{$\pm 0.61$}
& 1.00 \scriptsize{$\pm 0.00$} / \textbf{0.95} \scriptsize{$\pm 0.00$}
& 300.47 \scriptsize{$\pm 0.14$} / \textbf{287.98} \scriptsize{$\pm 0.32$} \\

ResNeXt101
& \textbf{209.14} \scriptsize{$\pm 0.53$} / 236.20 \scriptsize{$\pm 0.46$}
& 301.40 \scriptsize{$\pm 0.04$} / \textbf{288.46} \scriptsize{$\pm 0.36$}
& 0.94 \scriptsize{$\pm 0.00$} / \textbf{0.90} \scriptsize{$\pm 0.00$}
& \textbf{208.18} \scriptsize{$\pm 0.53$} / 235.29 \scriptsize{$\pm 0.46$}
& 1.00 \scriptsize{$\pm 0.00$} / \textbf{0.95} \scriptsize{$\pm 0.00$}
& 300.40 \scriptsize{$\pm 0.04$} / \textbf{287.51} \scriptsize{$\pm 0.36$} \\

VGG16
& \textbf{210.64} \scriptsize{$\pm 1.29$} / 237.56 \scriptsize{$\pm 1.04$}
& 303.26 \scriptsize{$\pm 0.15$} / \textbf{293.20} \scriptsize{$\pm 0.59$}
& 0.93 \scriptsize{$\pm 0.00$} / \textbf{0.88} \scriptsize{$\pm 0.00$}
& \textbf{209.70} \scriptsize{$\pm 1.29$} / 236.65 \scriptsize{$\pm 1.04$}
& 1.00 \scriptsize{$\pm 0.00$} / \textbf{0.95} \scriptsize{$\pm 0.00$}
& 302.26 \scriptsize{$\pm 0.15$} / \textbf{292.25} \scriptsize{$\pm 0.59$} \\

ShuffleNet
& \textbf{216.70} \scriptsize{$\pm 0.82$} / 243.02 \scriptsize{$\pm 0.65$}
& 304.02 \scriptsize{$\pm 0.05$} / \textbf{295.74} \scriptsize{$\pm 0.28$}
& 0.94 \scriptsize{$\pm 0.00$} / \textbf{0.90} \scriptsize{$\pm 0.00$}
& \textbf{215.74} \scriptsize{$\pm 0.82$} / 242.11 \scriptsize{$\pm 0.65$}
& 1.00 \scriptsize{$\pm 0.00$} / \textbf{0.95} \scriptsize{$\pm 0.00$}
& 303.02 \scriptsize{$\pm 0.05$} / \textbf{294.79} \scriptsize{$\pm 0.28$} \\

Inception
& \textbf{217.32} \scriptsize{$\pm 0.85$} / 243.58 \scriptsize{$\pm 0.73$}
& 304.09 \scriptsize{$\pm 0.03$} / \textbf{297.24} \scriptsize{$\pm 0.76$}
& 0.95 \scriptsize{$\pm 0.00$} / \textbf{0.90} \scriptsize{$\pm 0.00$}
& \textbf{216.36} \scriptsize{$\pm 0.85$} / 242.67 \scriptsize{$\pm 0.73$}
& 1.00 \scriptsize{$\pm 0.00$} / \textbf{0.95} \scriptsize{$\pm 0.00$}
& 303.09 \scriptsize{$\pm 0.03$} / \textbf{296.29} \scriptsize{$\pm 0.76$} \\

DenseNet161
& \textbf{212.15} \scriptsize{$\pm 0.94$} / 238.91 \scriptsize{$\pm 0.75$}
& 302.02 \scriptsize{$\pm 0.02$} / \textbf{289.26} \scriptsize{$\pm 0.53$}
& 0.94 \scriptsize{$\pm 0.00$} / \textbf{0.90} \scriptsize{$\pm 0.00$}
& \textbf{211.19} \scriptsize{$\pm 0.94$} / 238.00 \scriptsize{$\pm 0.75$}
& 1.00 \scriptsize{$\pm 0.00$} / \textbf{0.95} \scriptsize{$\pm 0.00$}
& 301.02 \scriptsize{$\pm 0.02$} / \textbf{288.31} \scriptsize{$\pm 0.53$} \\
\bottomrule
\end{tabular}
}
\end{small}
\end{center}
\vspace{-1em}
\end{table*}

\begin{table*}[t]
\caption{
Observed p-value criteria across models. Hyperparameters are the same as Table~\ref{table:pvalue_main}. Each entry is reported as RAPS / PT-RAPS. Bold indicates the better, i.e., smaller, value between RAPS and PT-RAPS for each metric/model pair.
}
\label{table:pvalue_observed}
\begin{center}
\begin{small}
\resizebox{\textwidth}{!}{
\begin{tabular}{lcccc}
\toprule
Model & OU & OF & OM & OE \\
\midrule
ResNet18
& 0.94 \scriptsize{$\pm 0.00$} / \textbf{0.90} \scriptsize{$\pm 0.00$}
& \textbf{214.57} \scriptsize{$\pm 1.13$} / 241.05 \scriptsize{$\pm 0.90$}
& 1.00 \scriptsize{$\pm 0.00$} / \textbf{0.95} \scriptsize{$\pm 0.00$}
& 303.12 \scriptsize{$\pm 0.01$} / \textbf{294.54} \scriptsize{$\pm 0.20$} \\

ResNet50
& 0.95 \scriptsize{$\pm 0.00$} / \textbf{0.90} \scriptsize{$\pm 0.00$}
& \textbf{212.37} \scriptsize{$\pm 1.03$} / 239.06 \scriptsize{$\pm 0.81$}
& 1.00 \scriptsize{$\pm 0.00$} / \textbf{0.95} \scriptsize{$\pm 0.00$}
& 301.17 \scriptsize{$\pm 0.06$} / \textbf{289.27} \scriptsize{$\pm 0.27$} \\

ResNet101
& 0.95 \scriptsize{$\pm 0.00$} / \textbf{0.90} \scriptsize{$\pm 0.00$}
& \textbf{212.12} \scriptsize{$\pm 0.76$} / 238.84 \scriptsize{$\pm 0.57$}
& 1.00 \scriptsize{$\pm 0.00$} / \textbf{0.95} \scriptsize{$\pm 0.00$}
& 301.06 \scriptsize{$\pm 0.07$} / \textbf{288.54} \scriptsize{$\pm 0.32$} \\

ResNet152
& 0.95 \scriptsize{$\pm 0.00$} / \textbf{0.90} \scriptsize{$\pm 0.00$}
& \textbf{212.06} \scriptsize{$\pm 0.82$} / 238.79 \scriptsize{$\pm 0.61$}
& 1.00 \scriptsize{$\pm 0.00$} / \textbf{0.95} \scriptsize{$\pm 0.00$}
& 300.58 \scriptsize{$\pm 0.13$} / \textbf{288.03} \scriptsize{$\pm 0.31$} \\

ResNeXt101
& 0.95 \scriptsize{$\pm 0.00$} / \textbf{0.90} \scriptsize{$\pm 0.00$}
& \textbf{208.72} \scriptsize{$\pm 0.53$} / 235.77 \scriptsize{$\pm 0.46$}
& 1.00 \scriptsize{$\pm 0.00$} / \textbf{0.95} \scriptsize{$\pm 0.00$}
& 300.51 \scriptsize{$\pm 0.04$} / \textbf{287.56} \scriptsize{$\pm 0.36$} \\

VGG16
& 0.94 \scriptsize{$\pm 0.00$} / \textbf{0.90} \scriptsize{$\pm 0.00$}
& \textbf{210.19} \scriptsize{$\pm 1.29$} / 237.10 \scriptsize{$\pm 1.05$}
& 1.00 \scriptsize{$\pm 0.00$} / \textbf{0.95} \scriptsize{$\pm 0.00$}
& 302.36 \scriptsize{$\pm 0.14$} / \textbf{292.30} \scriptsize{$\pm 0.58$} \\

ShuffleNet
& 0.95 \scriptsize{$\pm 0.00$} / \textbf{0.90} \scriptsize{$\pm 0.00$}
& \textbf{216.29} \scriptsize{$\pm 0.81$} / 242.61 \scriptsize{$\pm 0.65$}
& 1.00 \scriptsize{$\pm 0.00$} / \textbf{0.95} \scriptsize{$\pm 0.00$}
& 303.13 \scriptsize{$\pm 0.05$} / \textbf{294.84} \scriptsize{$\pm 0.28$} \\

Inception
& 0.95 \scriptsize{$\pm 0.00$} / \textbf{0.91} \scriptsize{$\pm 0.00$}
& \textbf{216.90} \scriptsize{$\pm 0.85$} / 243.16 \scriptsize{$\pm 0.73$}
& 1.00 \scriptsize{$\pm 0.00$} / \textbf{0.95} \scriptsize{$\pm 0.00$}
& 303.19 \scriptsize{$\pm 0.03$} / \textbf{296.34} \scriptsize{$\pm 0.76$} \\

DenseNet161
& 0.95 \scriptsize{$\pm 0.00$} / \textbf{0.90} \scriptsize{$\pm 0.00$}
& \textbf{211.72} \scriptsize{$\pm 0.94$} / 238.48 \scriptsize{$\pm 0.75$}
& 1.00 \scriptsize{$\pm 0.00$} / \textbf{0.95} \scriptsize{$\pm 0.00$}
& 301.11 \scriptsize{$\pm 0.02$} / \textbf{288.36} \scriptsize{$\pm 0.52$} \\
\bottomrule
\end{tabular}
}
\end{small}
\end{center}
\vspace{-1em}
\end{table*}

\textbf{Interval Stability.} In Section~\ref{sec:interval stability}, we introduce a new evaluation criterion, termed \emph{interval stability} and conduct several empirical evaluations using the datasets described in Section~\ref{sec:length}, the results of which are listed in Table~\ref{table:interval_stability_reg}. We further investigate the performance of the interval stability metric using CQR as the base algorithm in Table~\ref{table:interval_stability_cqr}. The results are similar to the results in Table~\ref{table:interval_stability_reg}. We also evaluate the interval stability metric on classification tasks. As shown in Table~\ref{table:interval_stability_cls}, interval stability successfully identifies the vacuous randomness in PT.

\begin{table}[t]
\caption{Comparison between CQR and PT-CQR regarding interval stability.}
\centering
\begin{small}
\begin{tabular}{l cc}
\toprule
\textbf{Dataset} & \textbf{CQR} & \textbf{PT-CQR} \\
\midrule
meps-19      & \textbf{0.00} \scriptsize{$\pm 0.000$}  & 0.13 \scriptsize{$\pm 0.005$}  \\
meps-20      & \textbf{0.00} \scriptsize{$\pm 0.000$}  & 0.13 \scriptsize{$\pm 0.006$}  \\
meps-21      & \textbf{0.00} \scriptsize{$\pm 0.000$}  & 0.14 \scriptsize{$\pm 0.003$}  \\
bike         & \textbf{0.00} \scriptsize{$\pm 0.000$}  & 0.08 \scriptsize{$\pm 0.001$}  \\
blog-data    & \textbf{0.00} \scriptsize{$\pm 0.000$}  & 0.11 \scriptsize{$\pm 0.003$}  \\
bio          & \textbf{0.00} \scriptsize{$\pm 0.000$}  & 0.10 \scriptsize{$\pm 0.000$}  \\
facebook-1   & \textbf{0.00} \scriptsize{$\pm 0.000$}  & 0.10 \scriptsize{$\pm 0.001$}  \\
facebook-2   & \textbf{0.00} \scriptsize{$\pm 0.000$}  & 0.10 \scriptsize{$\pm 0.001$}  \\
concrete     & \textbf{0.00} \scriptsize{$\pm 0.000$}  & 0.13 \scriptsize{$\pm 0.002$}  \\
star         & \textbf{0.00} \scriptsize{$\pm 0.000$}  & 0.12 \scriptsize{$\pm 0.001$}   \\
\bottomrule
\end{tabular}
\end{small}
\label{table:interval_stability_cqr}
\end{table}

\begin{table}[t]
\caption{Comparison between RAPS and PT-RAPS in classification tasks regarding interval stability, with $\alpha=0.1, p=0.95$ and index range chosen as $300$.}
\begin{center}
\begin{small}
\begin{sc}
\begin{tabular}{lccccc}
\toprule
\textbf{model} &bias  &\textbf{RAPS} & \textbf{PT-RAPS} \\
\midrule
resnet18    & 40   & \textbf{0.02} $\scriptsize{\pm 0.004}$   & 12.08 $\scriptsize{\pm 0.060}$  \\
resnet50    & 40   & \textbf{0.07} $\scriptsize{\pm 0.017}$   & 11.86 $\scriptsize{\pm 0.057}$  \\
resnet101   & 40   & \textbf{0.01} $\scriptsize{\pm 0.004}$   & 11.93 $\scriptsize{\pm 0.089}$  \\
resnet152   & 40   & \textbf{0.19} $\scriptsize{\pm 0.002}$   & 11.91 $\scriptsize{\pm 0.058}$  \\
resnext101  & 40   & \textbf{0.20} $\scriptsize{\pm 0.000}$   & 11.89 $\scriptsize{\pm 0.083}$  \\
vgg16       & 40   & \textbf{0.11} $\scriptsize{\pm 0.030}$   & 12.00 $\scriptsize{\pm 0.060}$  \\
shufflenet  & 40   & \textbf{0.03} $\scriptsize{\pm 0.025}$   & 12.08 $\scriptsize{\pm 0.055}$  \\
inception   & 40   & \textbf{0.07} $\scriptsize{\pm 0.010}$   & 12.19 $\scriptsize{\pm 0.080}$  \\
densenet161 & 40   & \textbf{0.03} $\scriptsize{\pm 0.006}$   & 11.90 $\scriptsize{\pm 0.057}$  \\
\bottomrule
\end{tabular}
\end{sc}
\end{small}
\end{center}
\vspace{-1em}
\label{table:interval_stability_cls}
\end{table}

\subsection{Ablation Studies}
\label{appendix:More Ablation Studies}
This section exhibits the ablation studies on the probability hyperparameter $p$ in PT and the bias parameter $\mu$ on different base algorithms~(Figure~\ref{fig:ablation_bike}-Figure~\ref{fig:ablation_star}). All the experiments are conducted based on various miscoverage rates $\alpha$. The experiment results demonstrate that, although not all the probability hyperparameters $p$ outperform the base algorithm, our goal is to show that \emph{there exist multiple~(at least one) probability hyperparameters such that PT-VCP outperforms VCP, which suffices to challenge the coverage-length gold standard.} Furthermore, we find that the bias parameter actually matters here, implying that PT-VCP performs better than VCP under misspecification, which validates Theorem~\ref{thm:non-smooth-length}.

\begin{figure*}[t]
    \centering
    \begin{subfigure}[t]{0.24\linewidth}
        \centering
        \includegraphics[width=\linewidth]{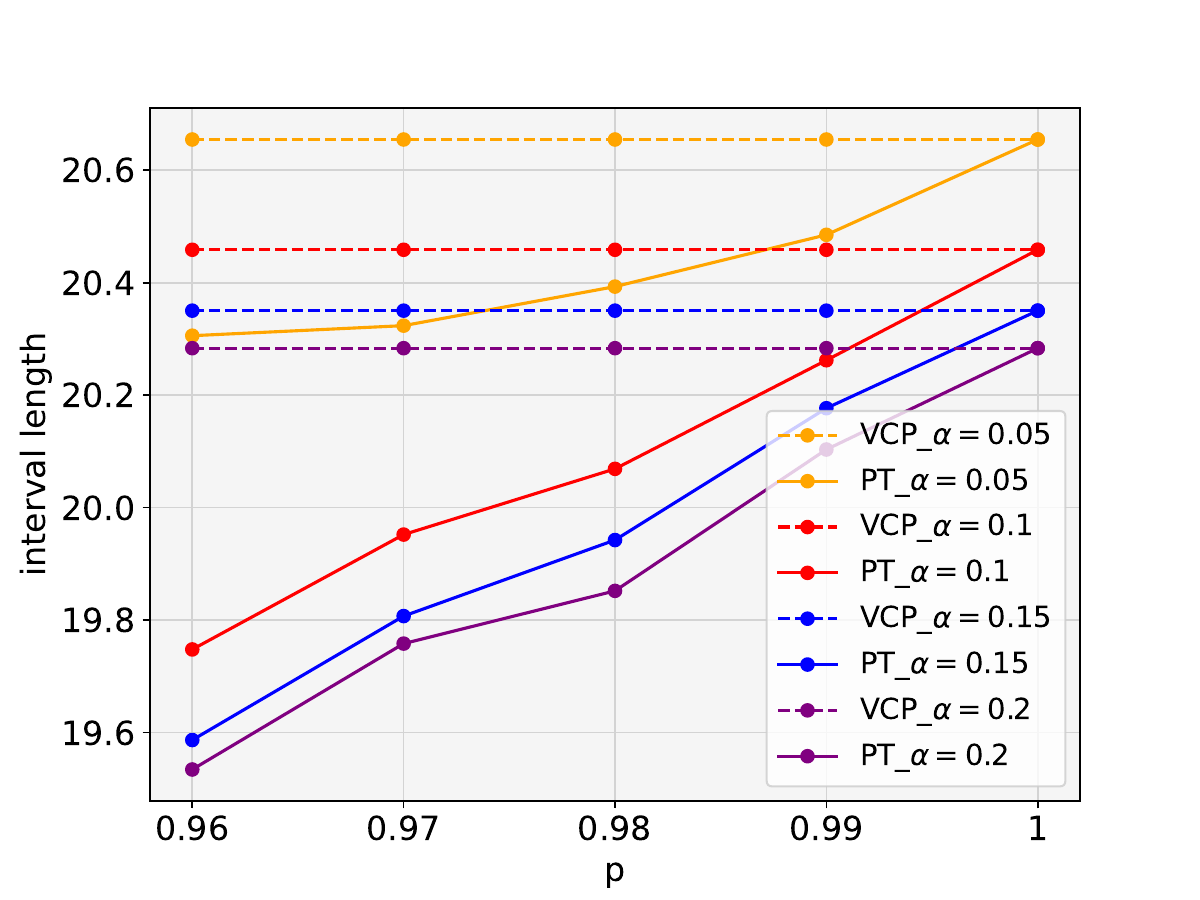}
        \caption{VCP, Probability}
        \label{fig:ablation-vcp-prob_bike}
    \end{subfigure}
    \hfill
    \begin{subfigure}[t]{0.24\linewidth}
        \centering
        \includegraphics[width=\linewidth]{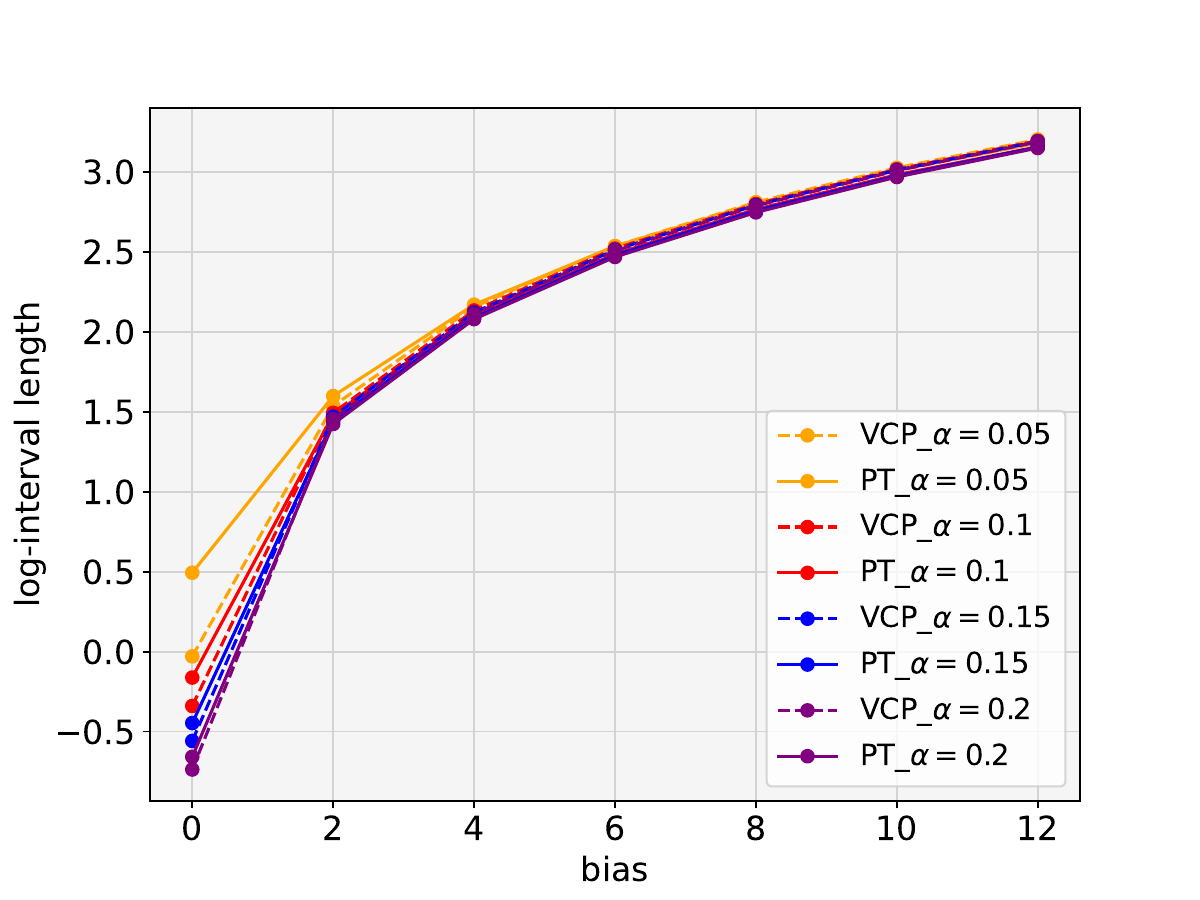}
        \caption{VCP, Misspecification}
        \label{fig:ablation-vcp-mis_bike}
    \end{subfigure}
    \hfill
    \begin{subfigure}[t]{0.24\linewidth}
        \centering
        \includegraphics[width=\linewidth]{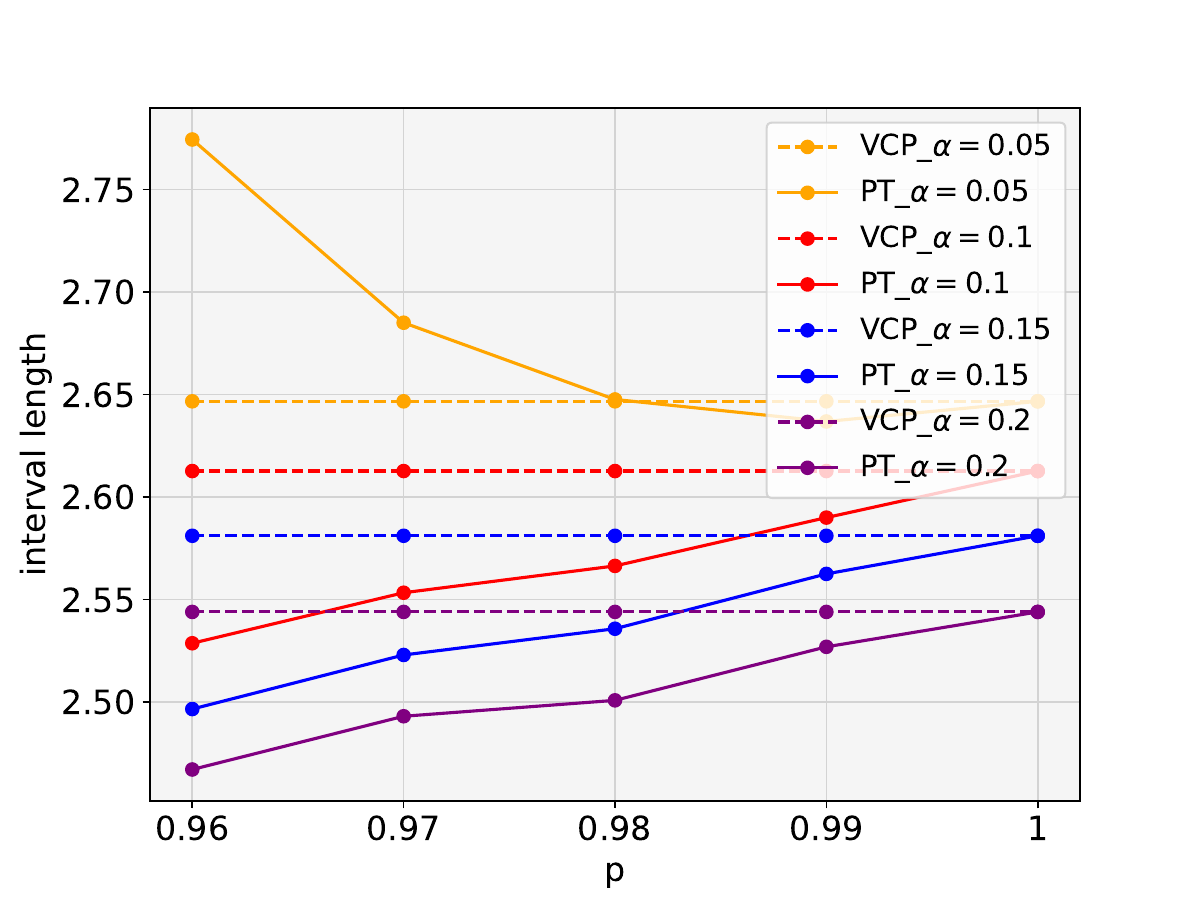}
        \caption{CQR, Probability}
        \label{fig:ablation-cqr-prob_bike}
    \end{subfigure}
    \hfill
    \begin{subfigure}[t]{0.24\linewidth}
        \centering
        \includegraphics[width=\linewidth]{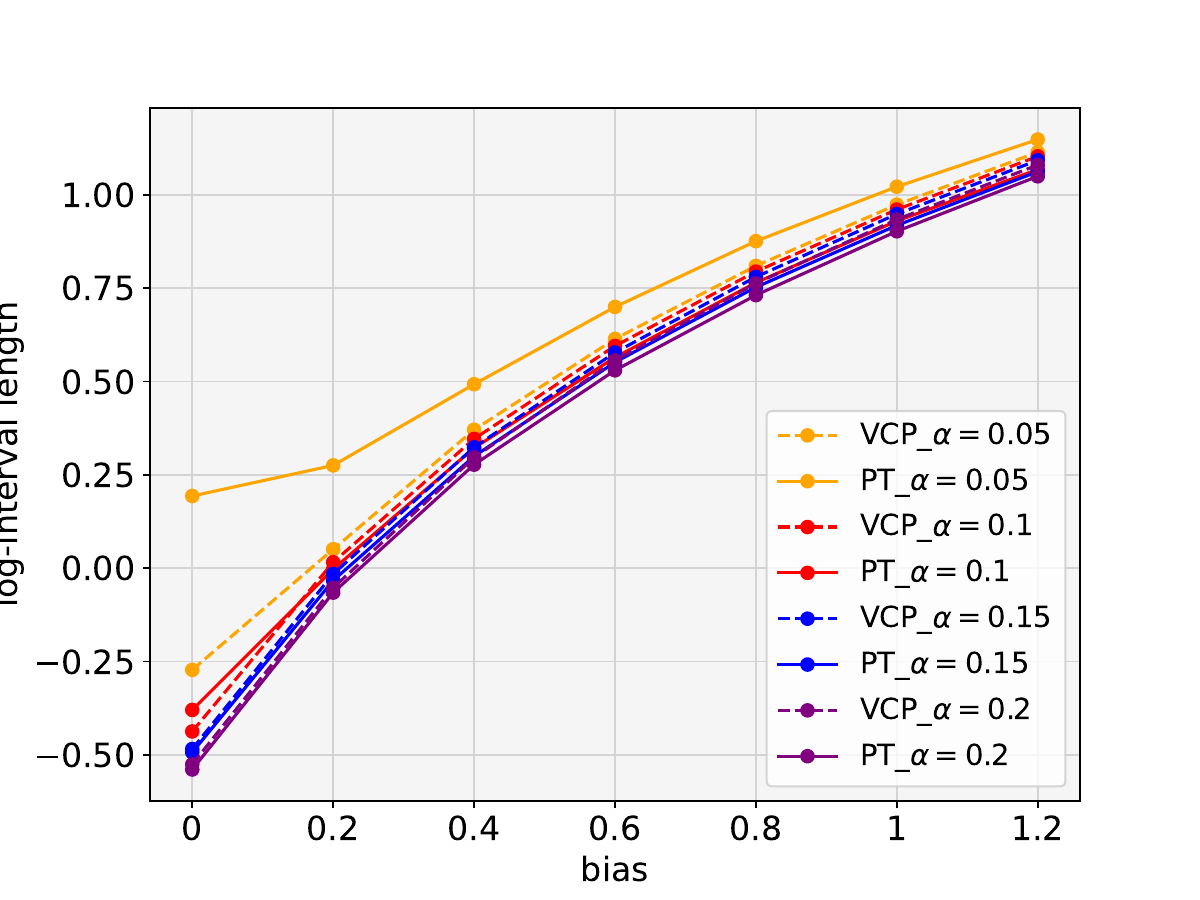}
        \caption{CQR, Misspecification}
        \label{fig:ablation-cqr-mis_bike}
    \end{subfigure}
    \caption{Ablation studies of dataset BIKE on different misspecification levels (b, d) and probability hyperparameters (a, c), including comparisons with VCP (a--b) and CQR (c--d).}
    \label{fig:ablation_bike}
\end{figure*}

\begin{figure}[t]
    \centering
    \begin{subfigure}[t]{0.24\linewidth}
        \centering
        \includegraphics[width=\linewidth]{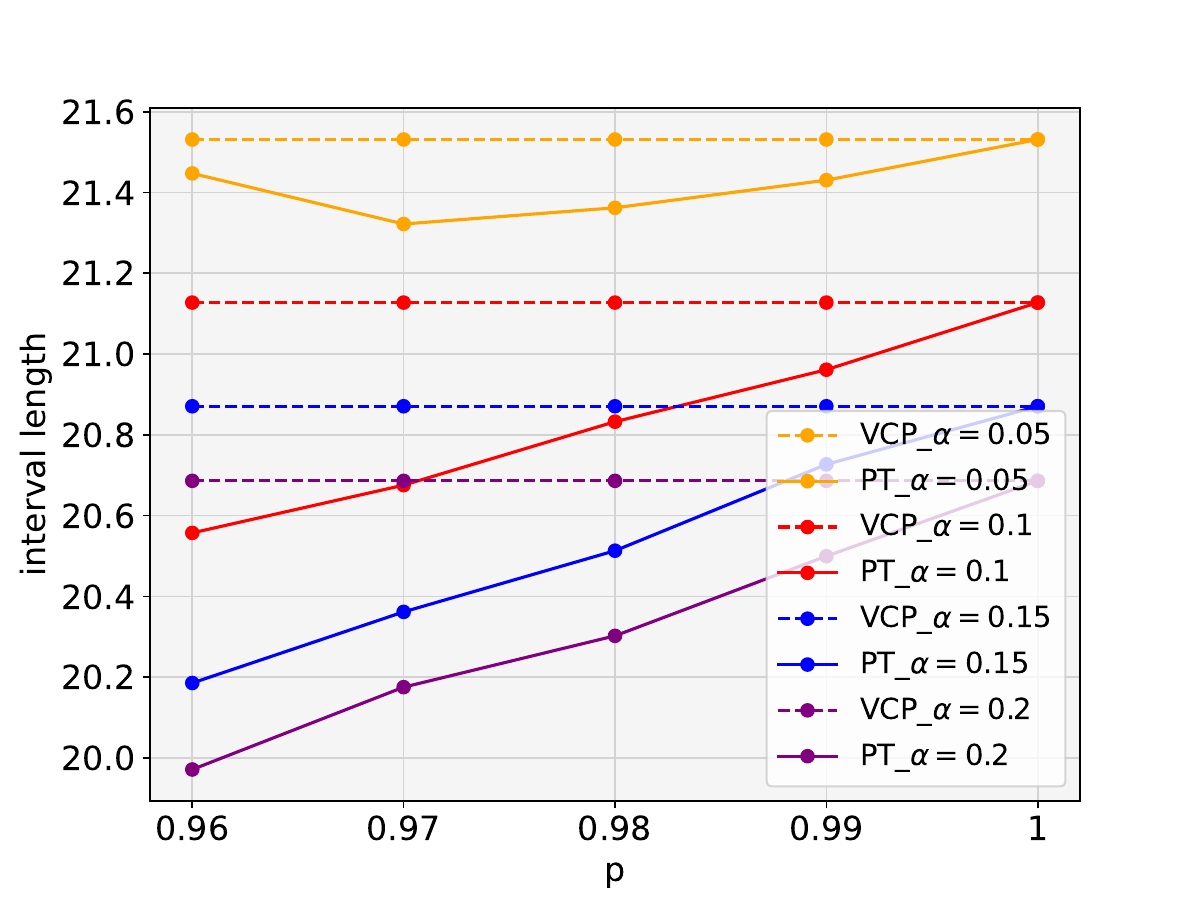}
        \caption{VCP, Probability}
        \label{fig:ablation-vcp-prob_bio}
    \end{subfigure}
    \hfill
    \begin{subfigure}[t]{0.24\linewidth}
        \centering
        \includegraphics[width=\linewidth]{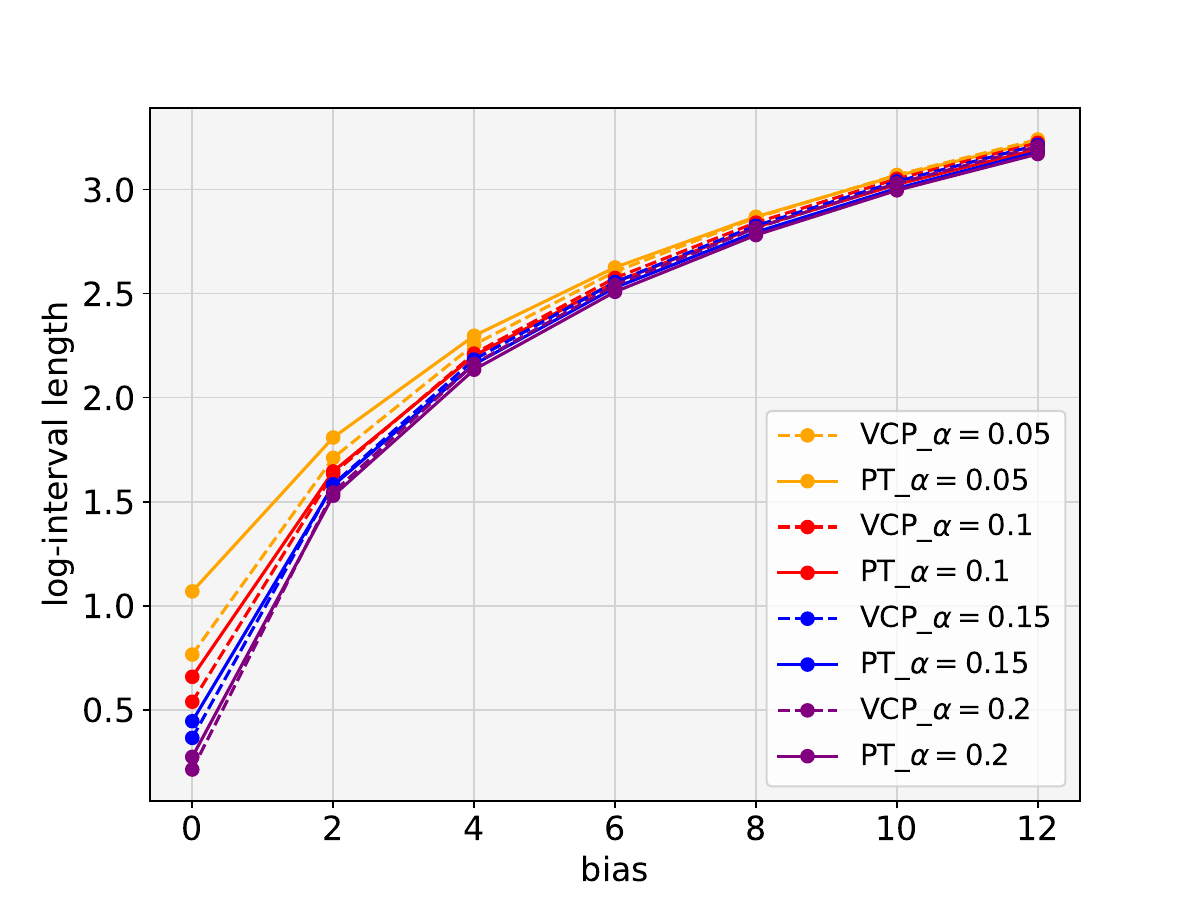}
        \caption{VCP, Misspecification}
        \label{fig:ablation-vcp-mis_bio}
    \end{subfigure}
    \hfill
    \begin{subfigure}[t]{0.24\linewidth}
        \centering
        \includegraphics[width=\linewidth]{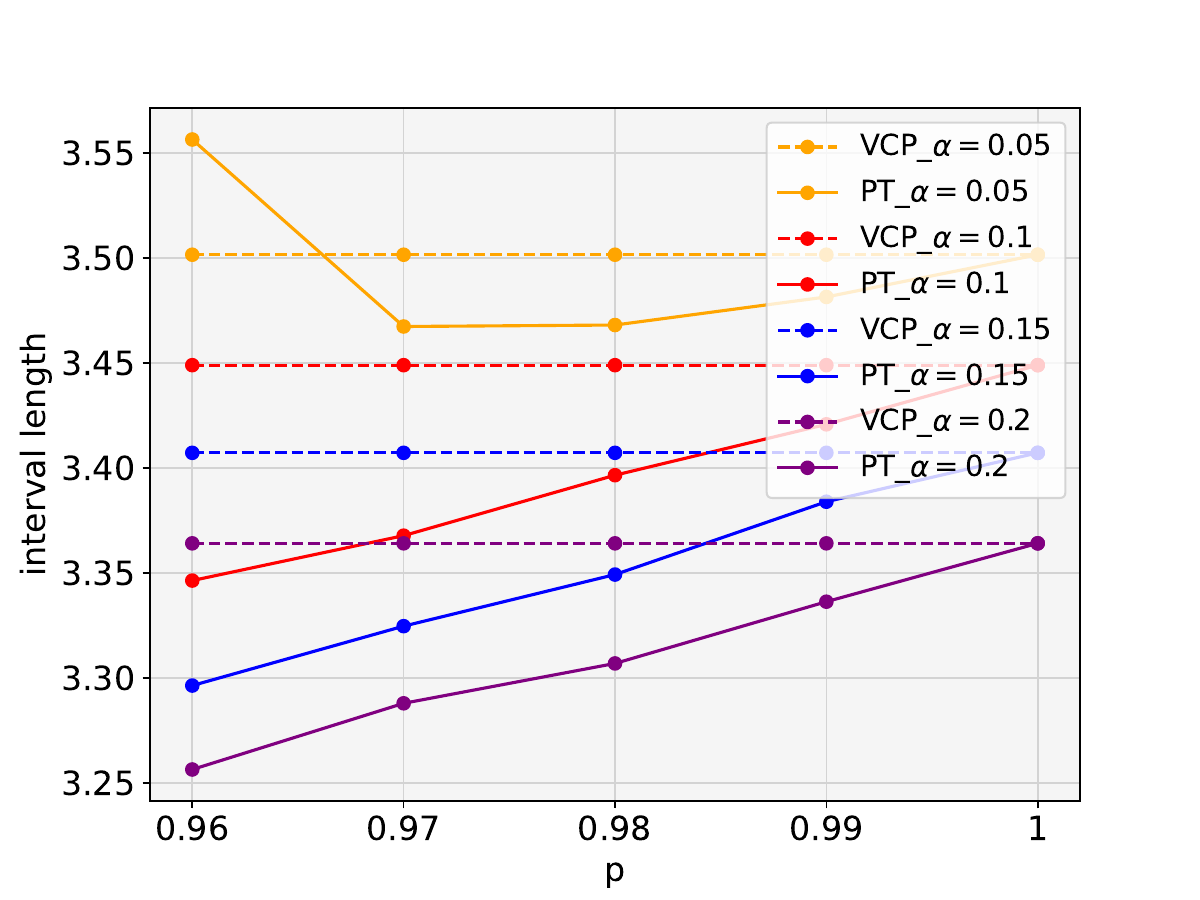}
        \caption{CQR, Probability}
        \label{fig:ablation-cqr-prob_bio}
    \end{subfigure}
    \hfill
    \begin{subfigure}[t]{0.24\linewidth}
        \centering
        \includegraphics[width=\linewidth]{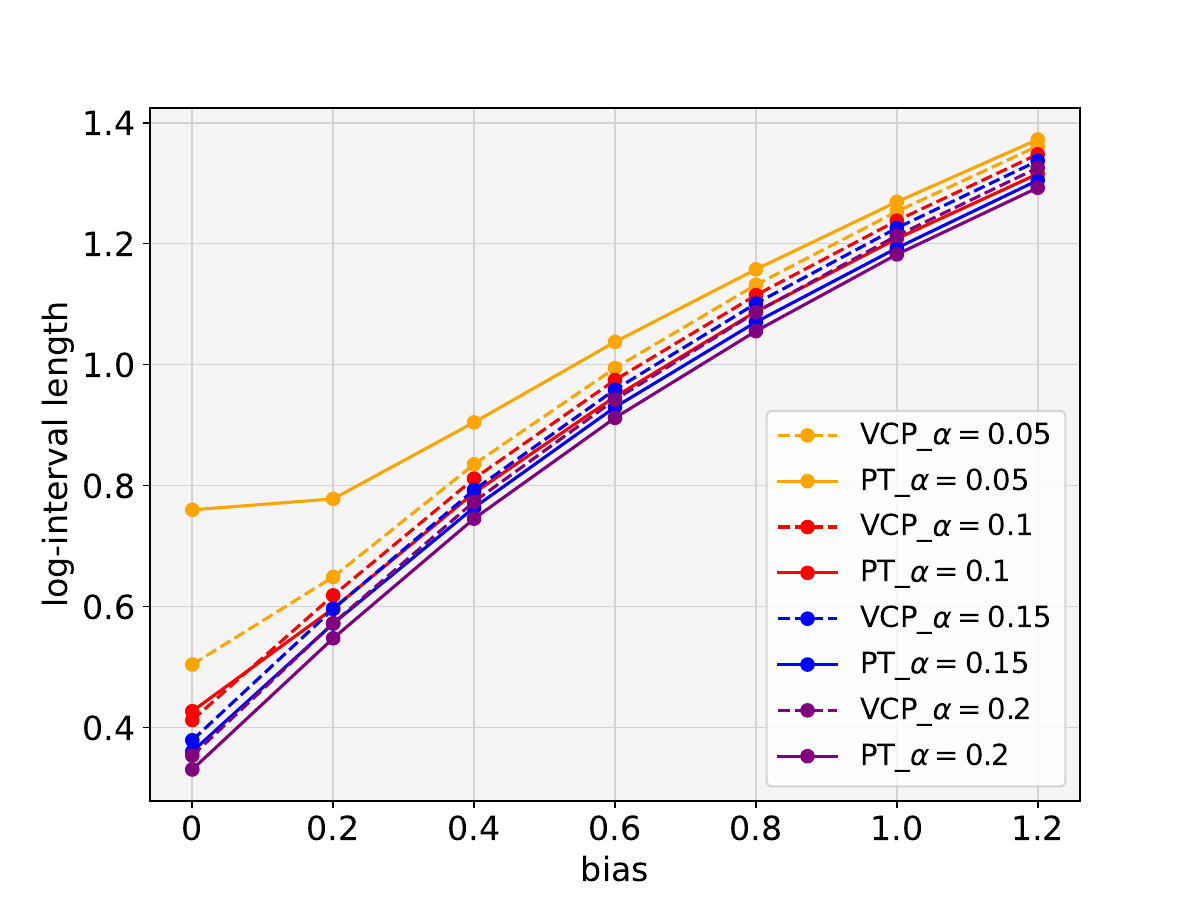}
        \caption{CQR, Misspecification}
        \label{fig:ablation-cqr-mis_bio}
    \end{subfigure}
    
    \caption{Ablation studies of dataset BIO on different misspecification level (a, c) and probability hyperparameter (b, d), including the comparison with VCP (a-b) and CQR (c-d).}
    \label{fig:ablation_bio}
    
\end{figure}

\begin{figure*}[t]
    \centering
    \begin{subfigure}[t]{0.24\linewidth}
        \centering
        \includegraphics[width=\linewidth]{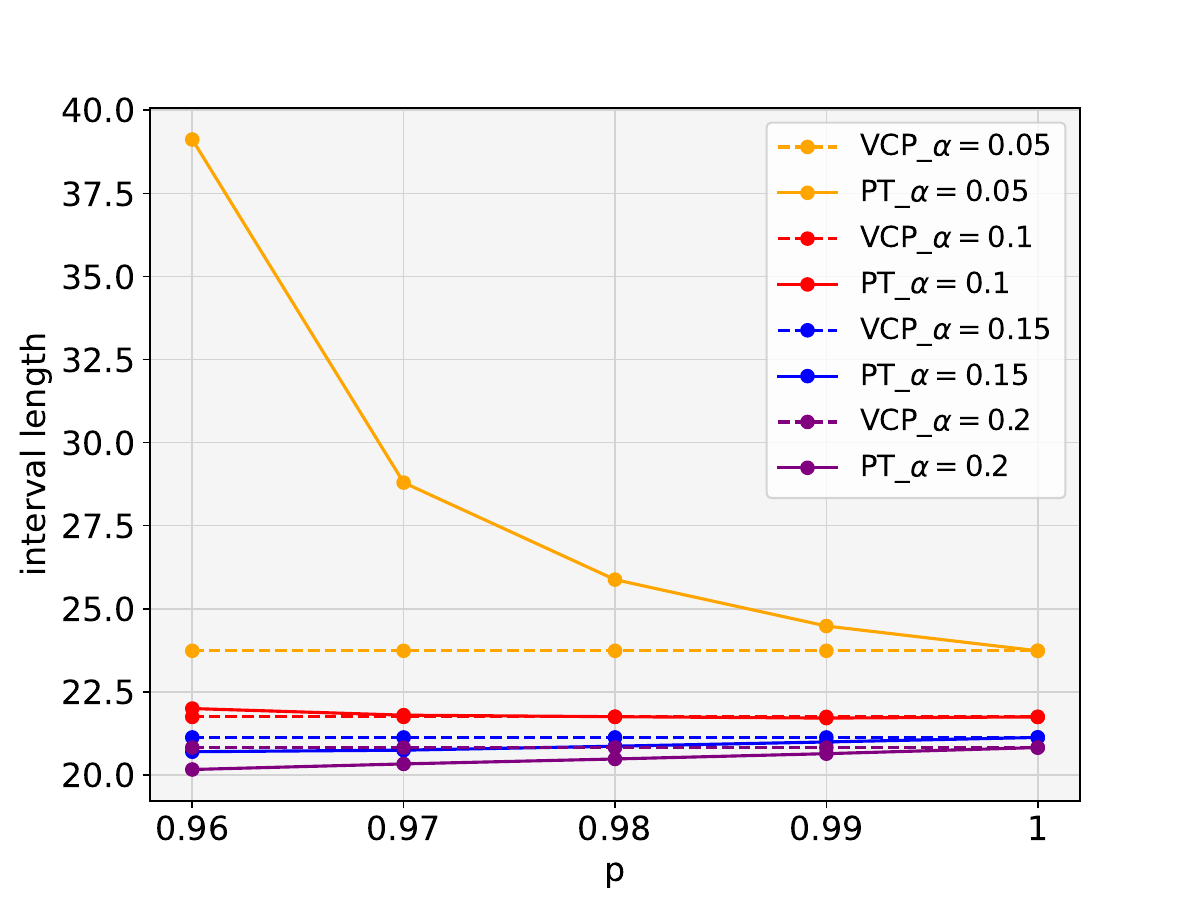}
        \caption{VCP, Probability}
        \label{fig:ablation-vcp-prob_blog_data}
    \end{subfigure}
    \hfill
    \begin{subfigure}[t]{0.24\linewidth}
        \centering
        \includegraphics[width=\linewidth]{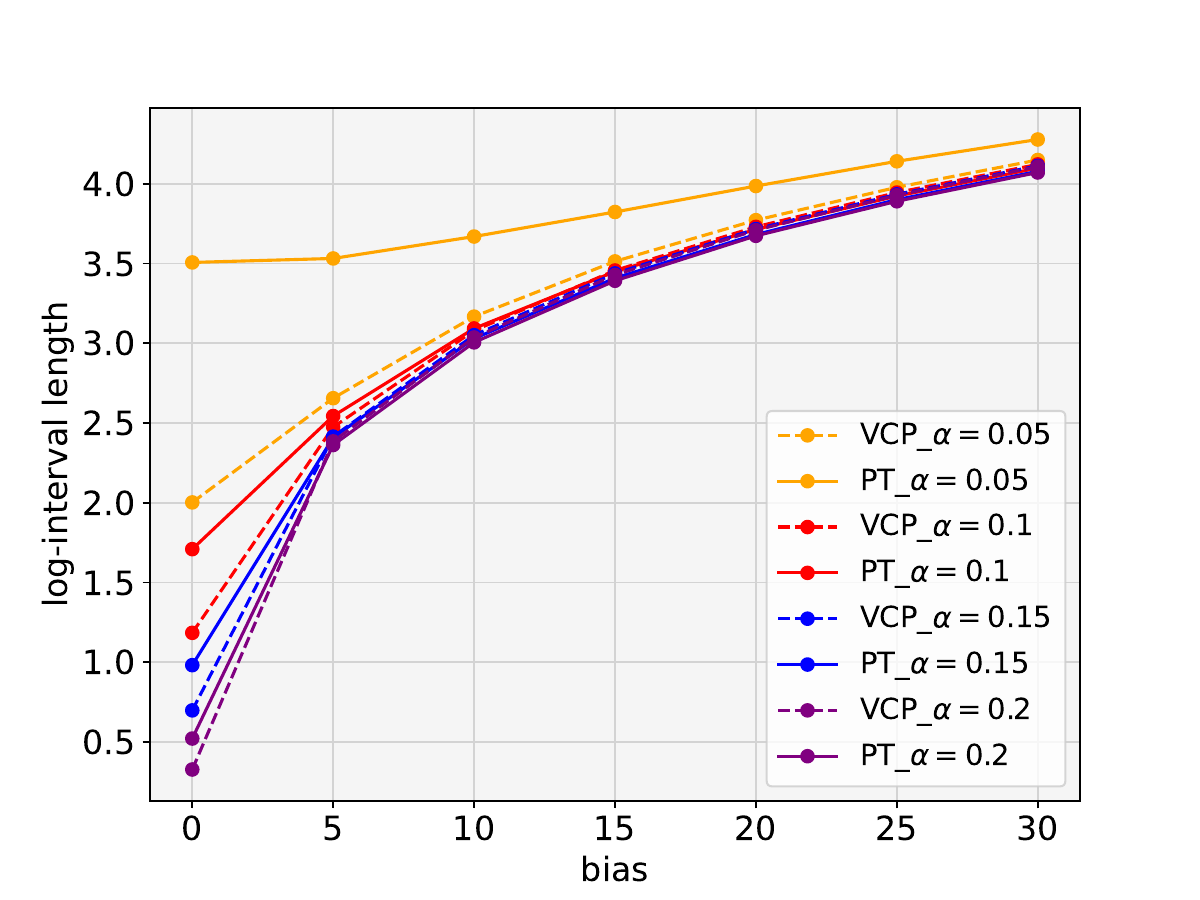}
        \caption{VCP, Misspecification}
        \label{fig:ablation-vcp-mis_blog_data}
    \end{subfigure}
    \hfill
    \begin{subfigure}[t]{0.24\linewidth}
        \centering
        \includegraphics[width=\linewidth]{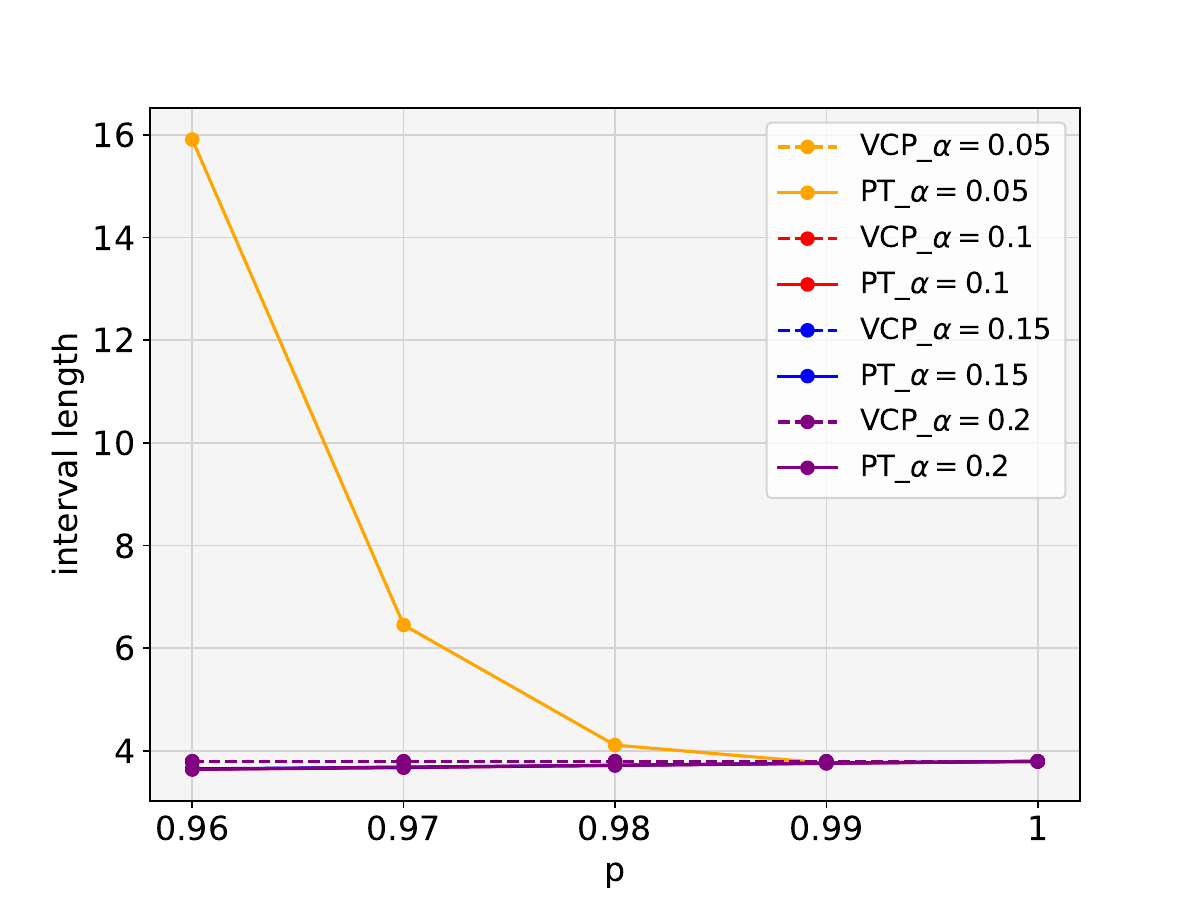}
        \caption{CQR, Probability}
        \label{fig:ablation-cqr-prob_blog_data}
    \end{subfigure}
    \hfill
    \begin{subfigure}[t]{0.24\linewidth}
        \centering
        \includegraphics[width=\linewidth]{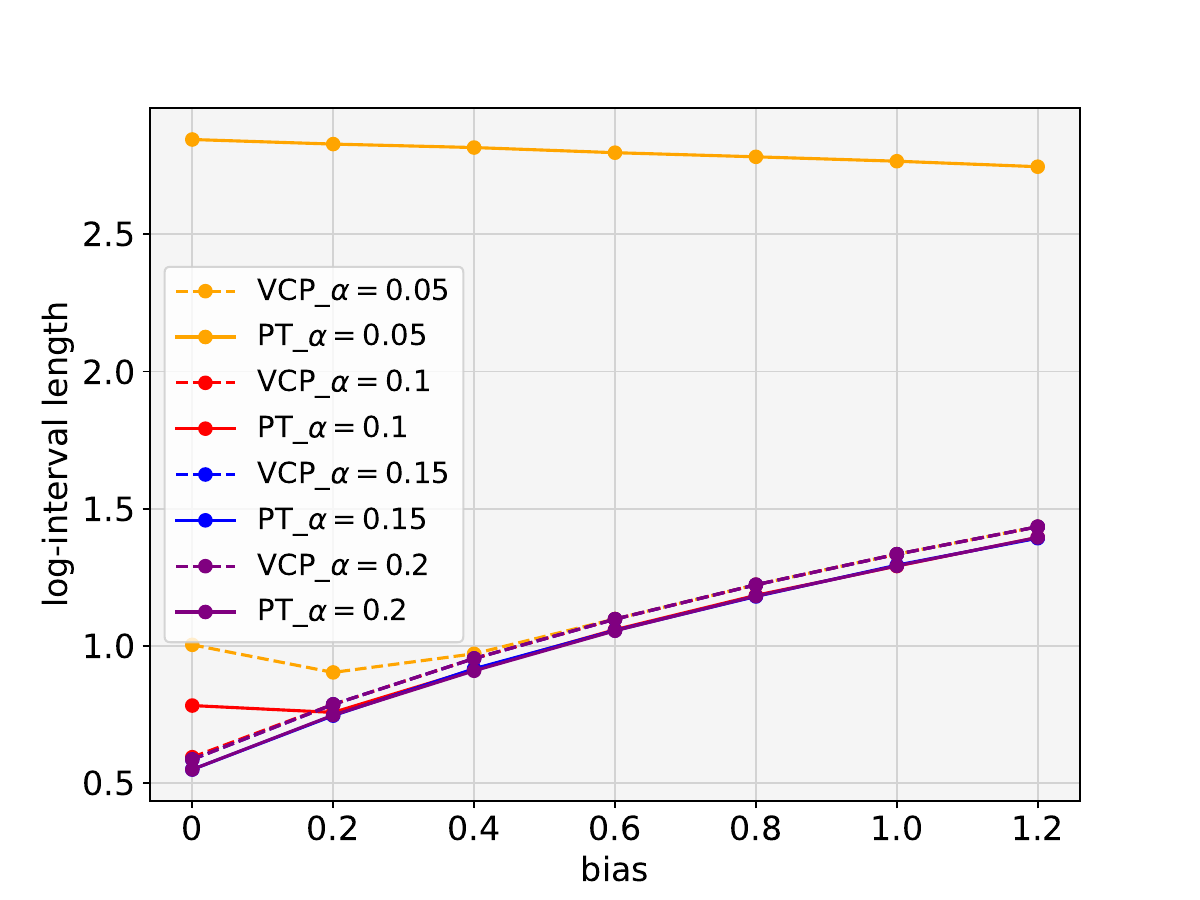}
        \caption{CQR, Misspecification}
        \label{fig:ablation-cqr-mis_blog_data}
    \end{subfigure}
    
    \caption{Ablation studies of dataset BLOGDATA on different misspecification level (a, c) and probability hyperparameter (b, d), including the comparison with VCP (a-b) and CQR (c-d).}
    \label{fig:ablation_blog_data}
    
\end{figure*}

\begin{figure*}[t]
    \centering
    \begin{subfigure}[t]{0.24\linewidth}
        \centering
        \includegraphics[width=\linewidth]{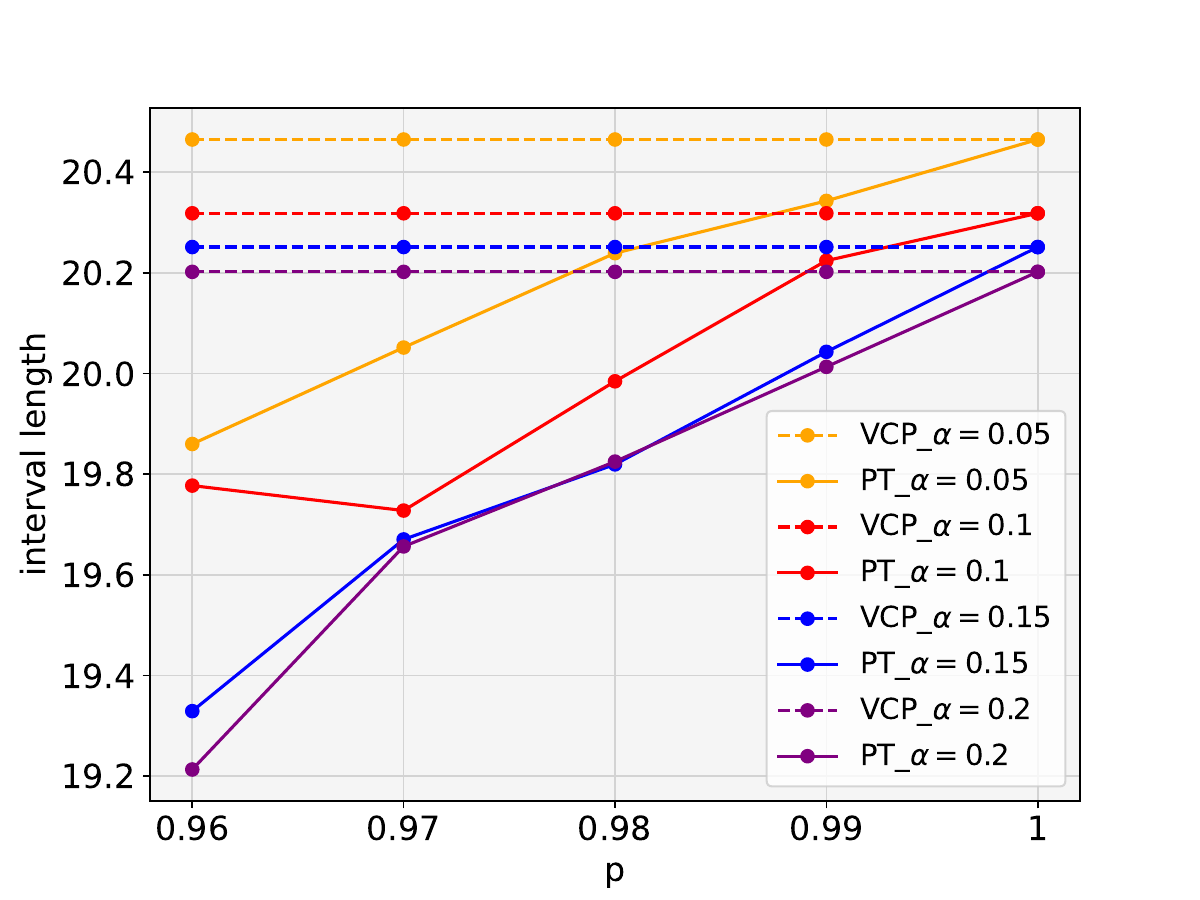}
        \caption{VCP, Probability}
        \label{fig:ablation-vcp-prob_concrete}
    \end{subfigure}
    \hfill
    \begin{subfigure}[t]{0.24\linewidth}
        \centering
        \includegraphics[width=\linewidth]{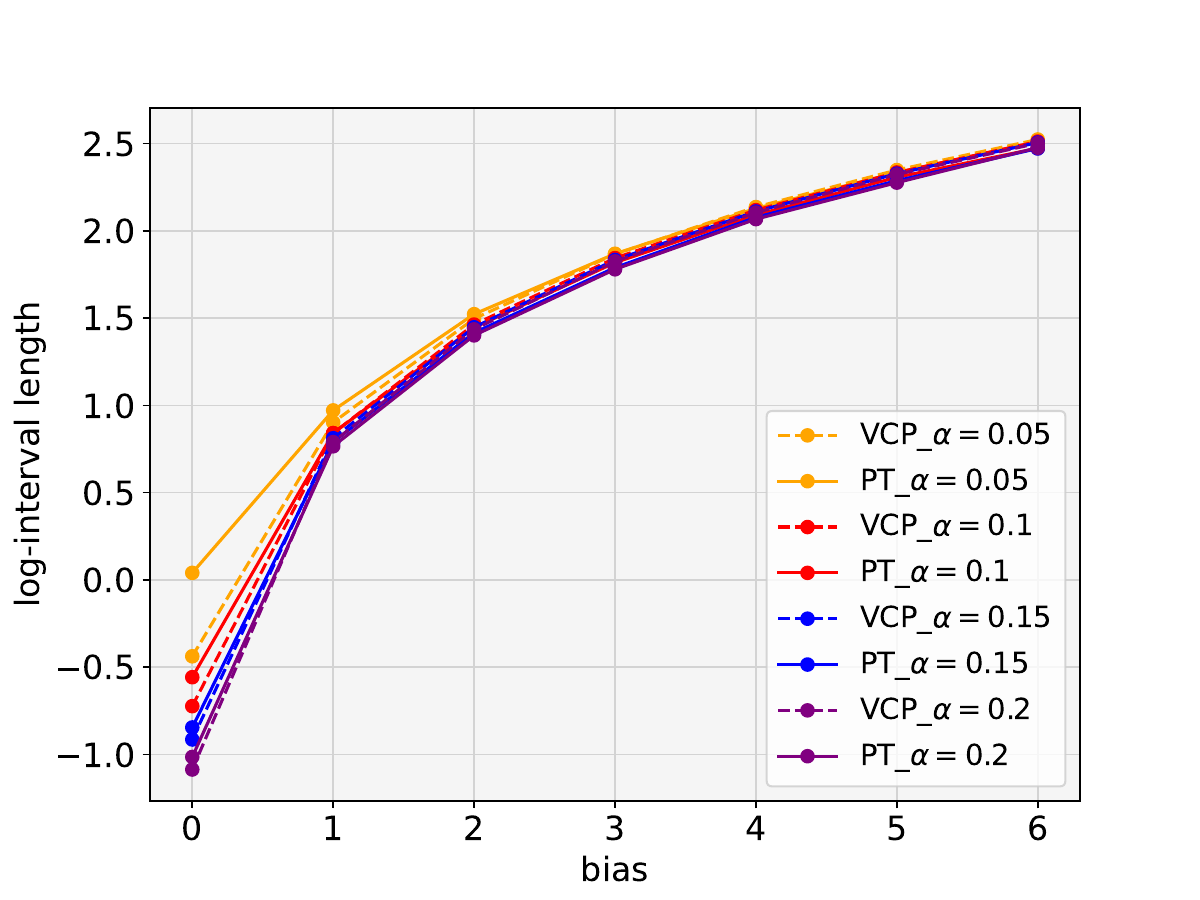}
        \caption{VCP, Misspecification}
        \label{fig:ablation-vcp-mis_concrete}
    \end{subfigure}
    \hfill
    \begin{subfigure}[t]{0.24\linewidth}
        \centering
        \includegraphics[width=\linewidth]{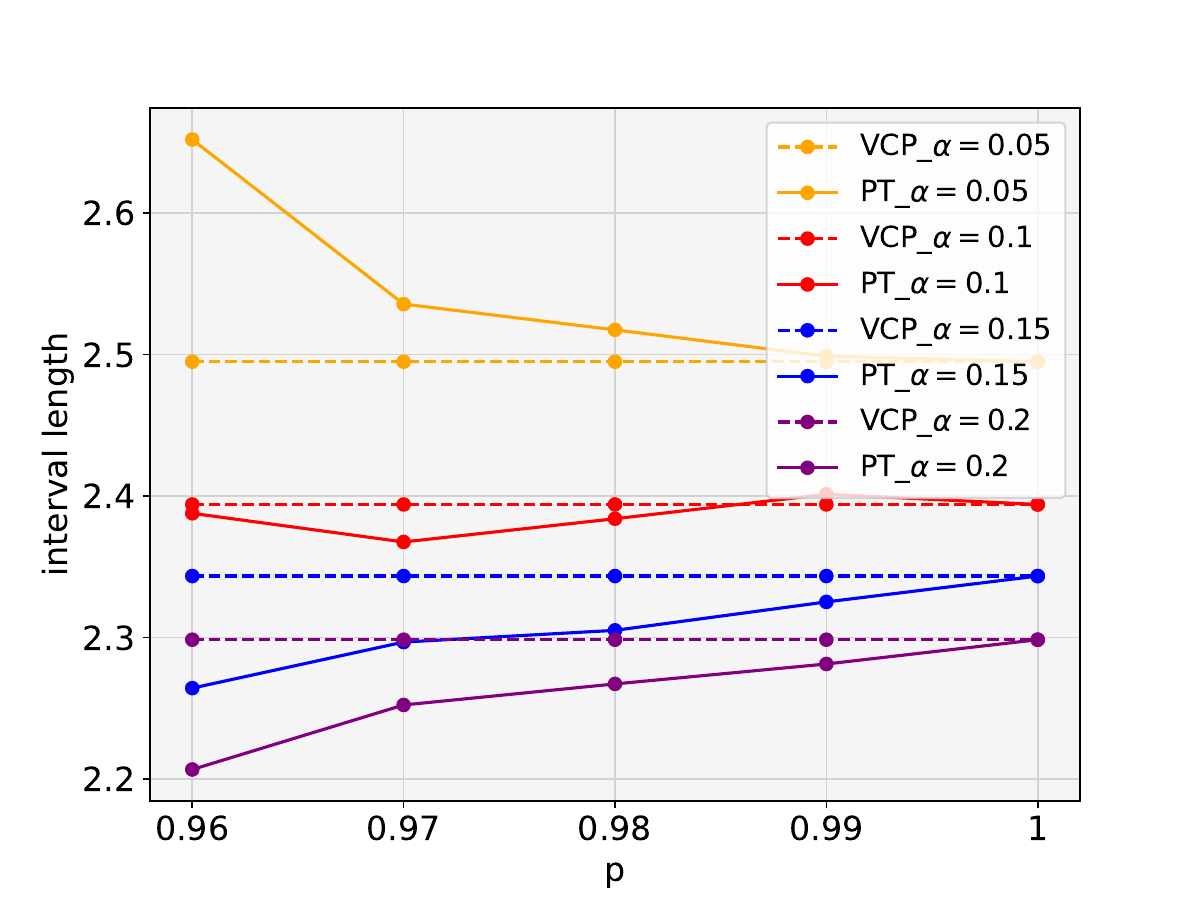}
        \caption{CQR, Probability}
        \label{fig:ablation-cqr-prob_concrete}
    \end{subfigure}
    \hfill
    \begin{subfigure}[t]{0.24\linewidth}
        \centering
        \includegraphics[width=\linewidth]{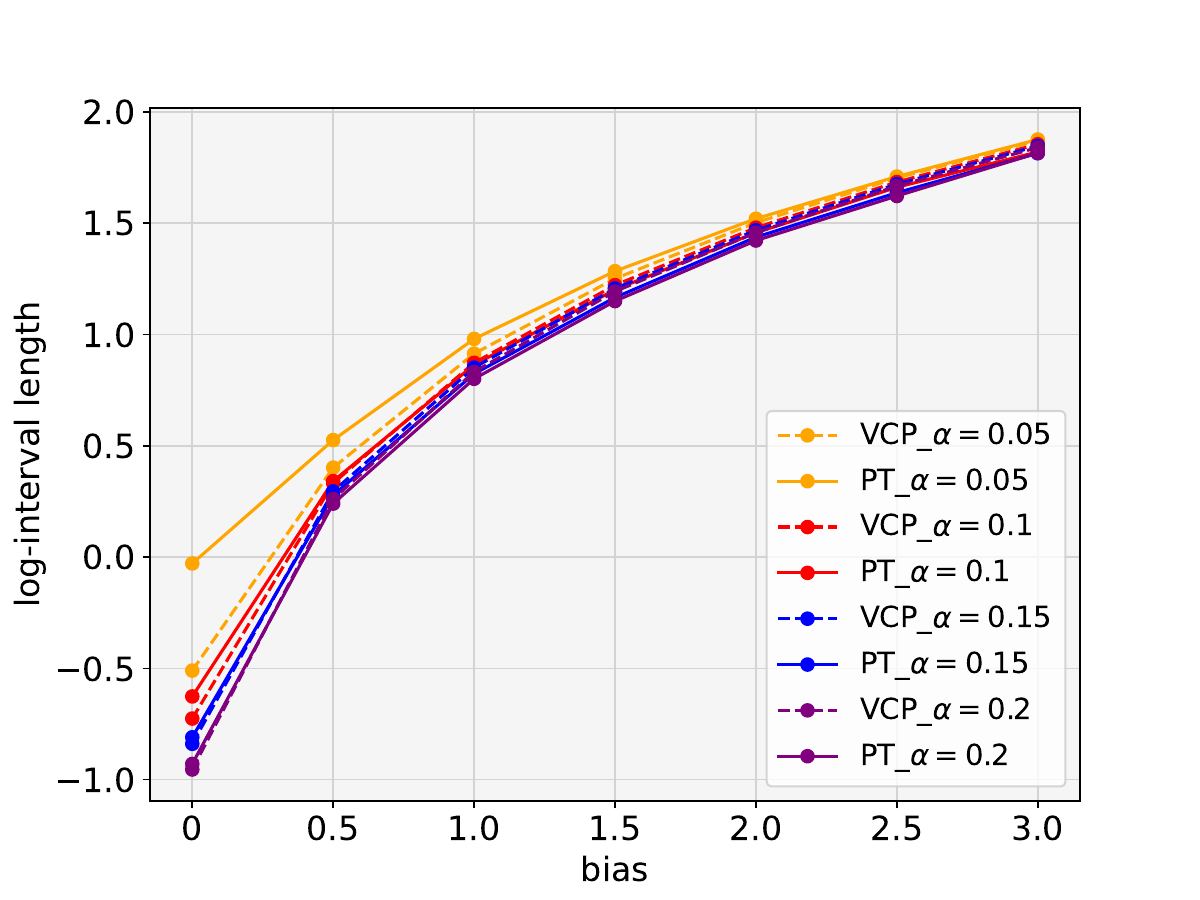}
        \caption{CQR, Misspecification}
        \label{fig:ablation-cqr-mis_concrete}
    \end{subfigure}
    
    \caption{Ablation studies of dataset CONCRETE on different misspecification level (a, c) and probability hyperparameter (b, d), including the comparison with VCP (a-b) and CQR (c-d).}
    \label{fig:ablation_concrete}
    
\end{figure*}

\begin{figure*}[t]
    \centering
    \begin{subfigure}[t]{0.24\linewidth}
        \centering
        \includegraphics[width=\linewidth]{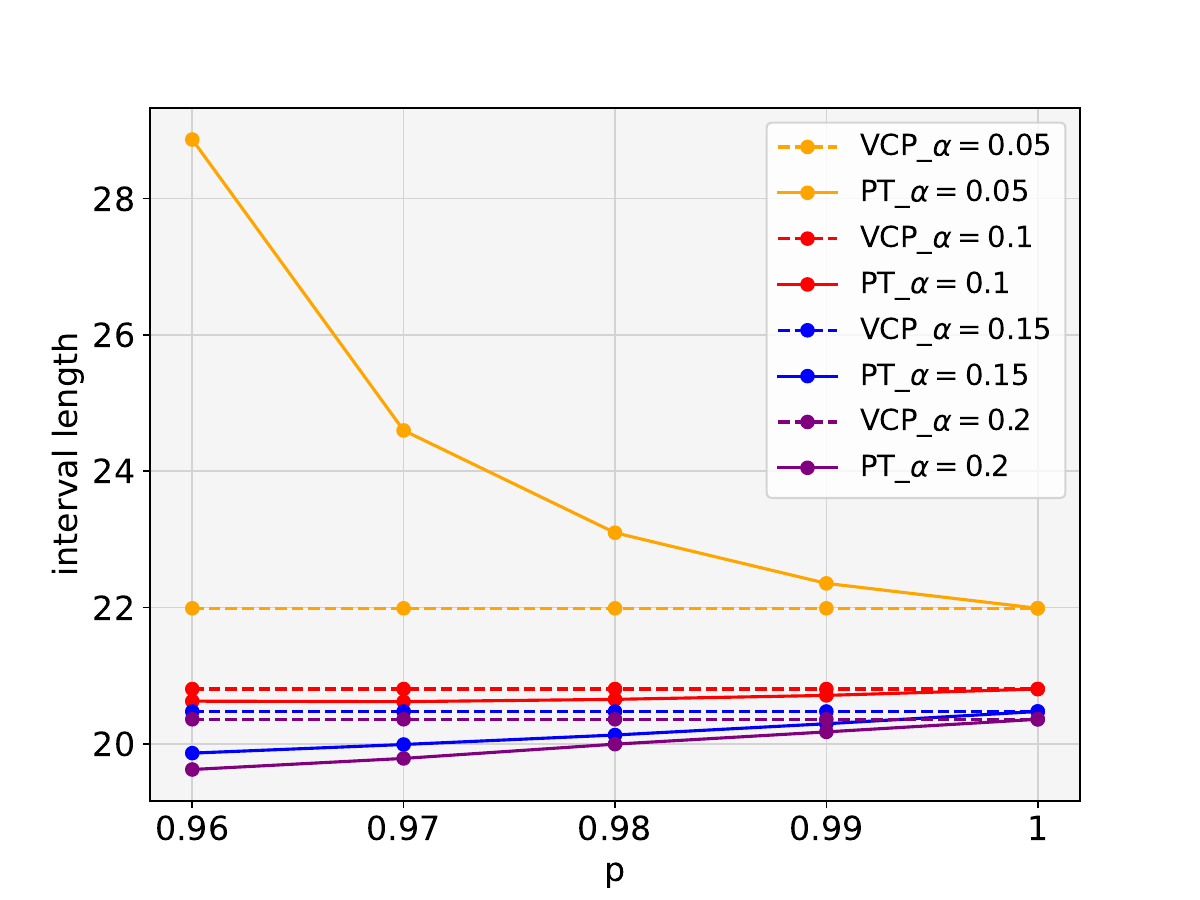}
        \caption{VCP, Probability}
        \label{fig:ablation-vcp-prob_facebook_1}
    \end{subfigure}
    \hfill
    \begin{subfigure}[t]{0.24\linewidth}
        \centering
        \includegraphics[width=\linewidth]{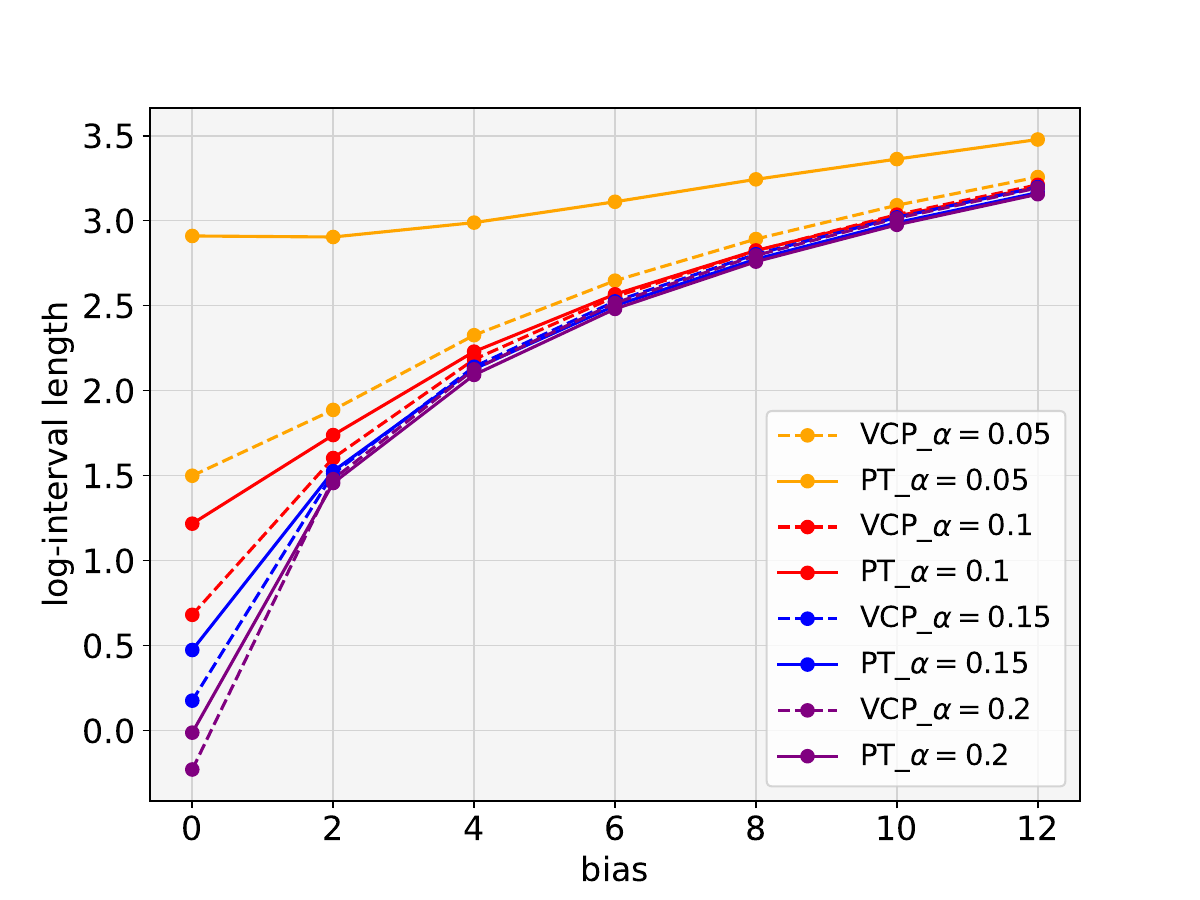}
        \caption{VCP, Misspecification}
        \label{fig:ablation-vcp-mis_facebook_1}
    \end{subfigure}
    \hfill
    \begin{subfigure}[t]{0.24\linewidth}
        \centering
        \includegraphics[width=\linewidth]{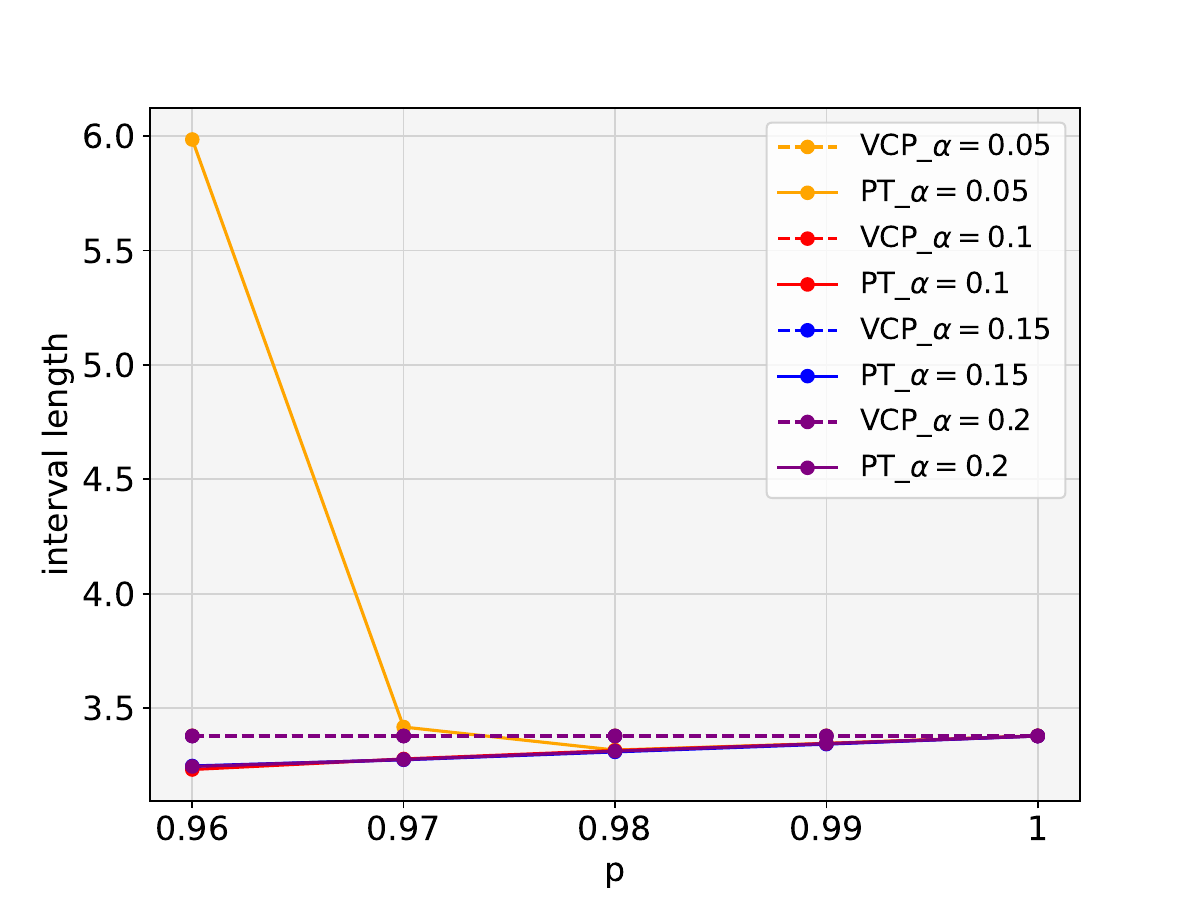}
        \caption{CQR, Probability}
        \label{fig:ablation-cqr-prob_facebook_1}
    \end{subfigure}
    \hfill
    \begin{subfigure}[t]{0.24\linewidth}
        \centering
        \includegraphics[width=\linewidth]{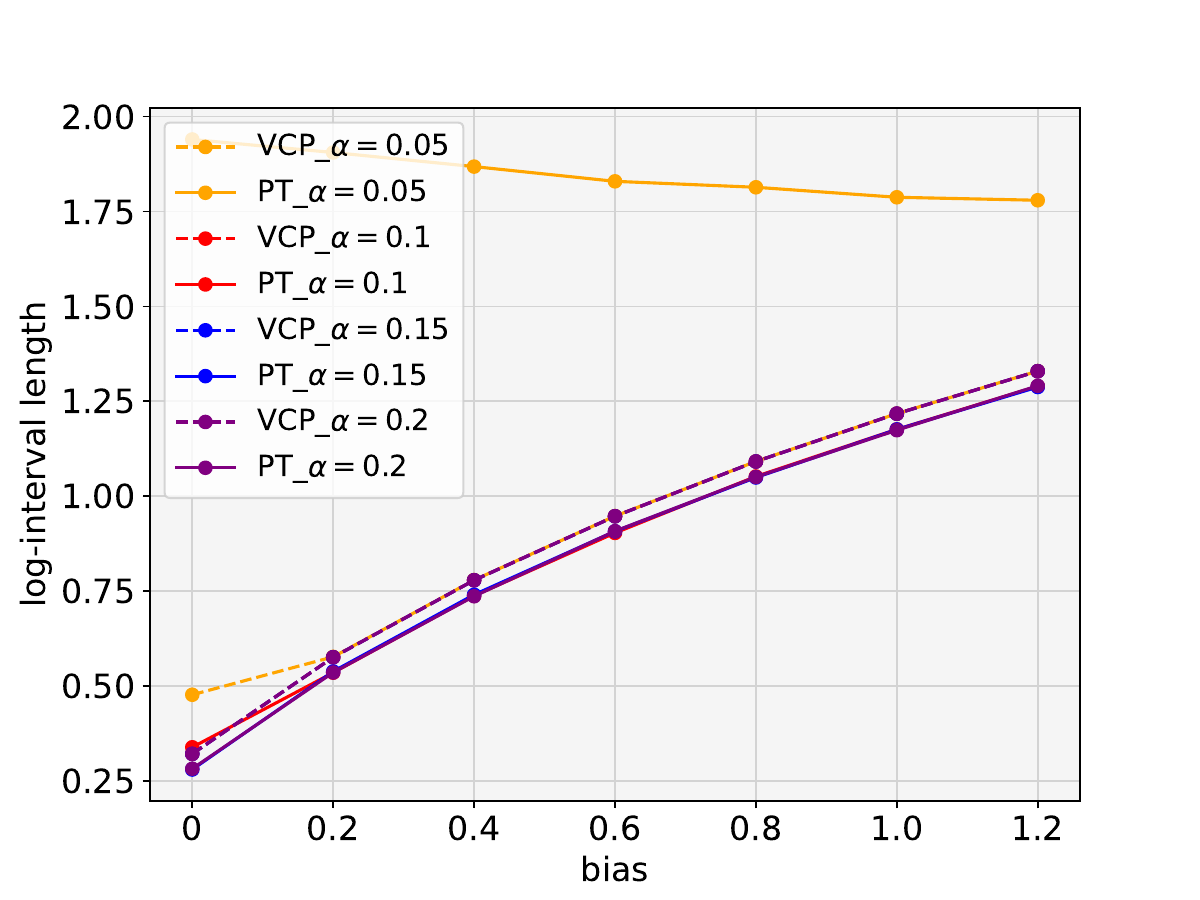}
        \caption{CQR, Misspecification}
        \label{fig:ablation-cqr-mis_facebook_1}
    \end{subfigure}
    
    \caption{Ablation studies of dataset FACEBOOK1 on different misspecification level (a, c) and probability hyperparameter (b, d), including the comparison with VCP (a-b) and CQR (c-d).}
    \label{fig:ablation_facebook_1}
    
\end{figure*}

\begin{figure*}[t]
    \centering
    \begin{subfigure}[t]{0.24\linewidth}
        \centering
        \includegraphics[width=\linewidth]{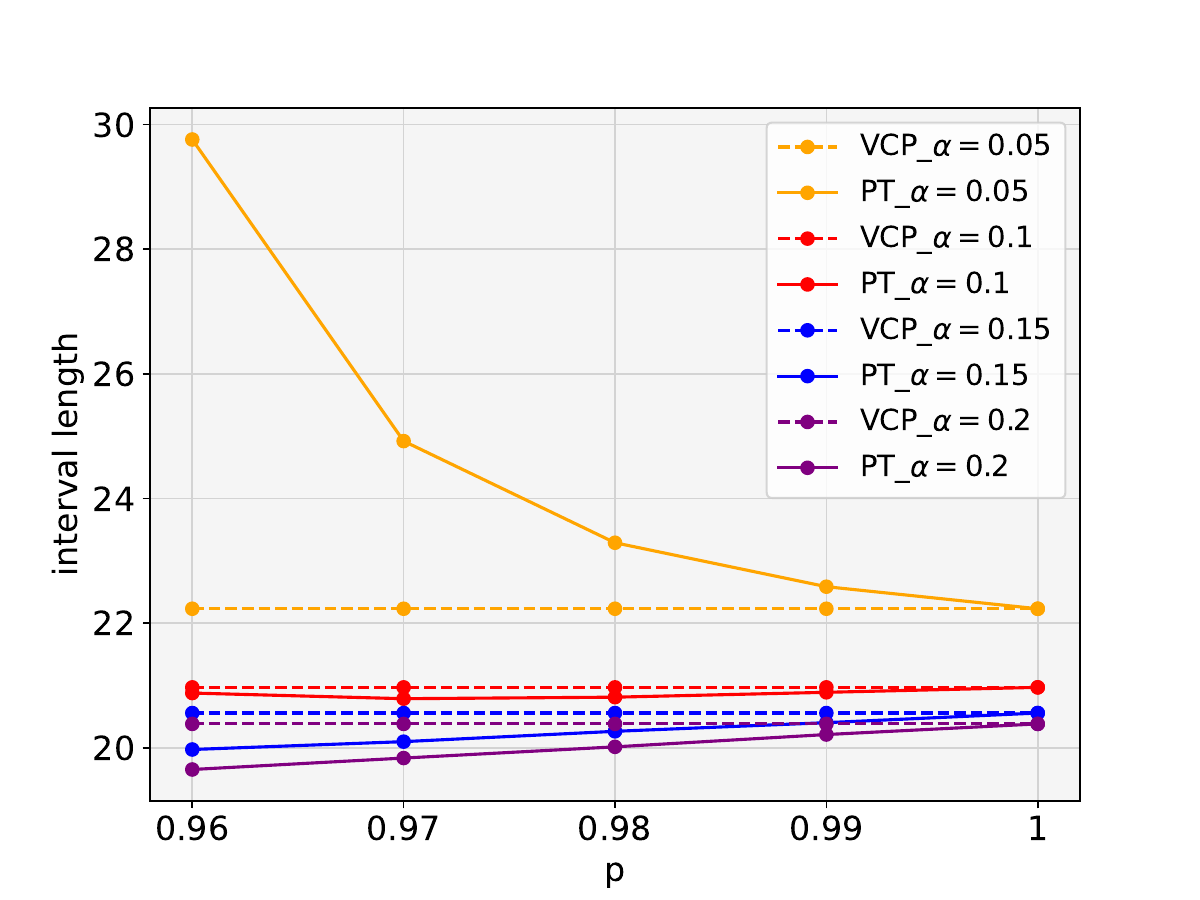}
        \caption{VCP, Probability}
        \label{fig:ablation-vcp-prob_facebook_2}
    \end{subfigure}
    \hfill
    \begin{subfigure}[t]{0.24\linewidth}
        \centering
        \includegraphics[width=\linewidth]{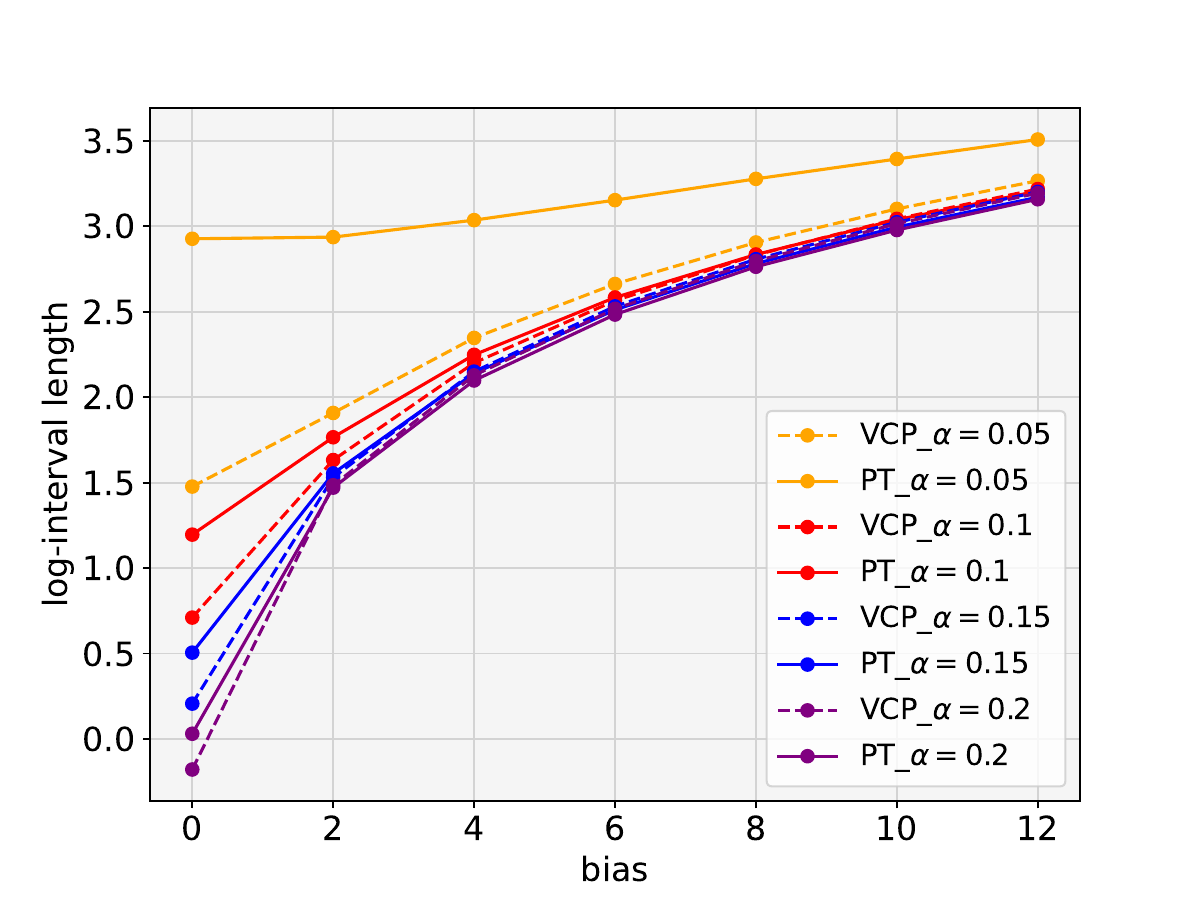}
        \caption{VCP, Misspecification}
        \label{fig:ablation-vcp-mis_facebook_2}
    \end{subfigure}
    \hfill
    \begin{subfigure}[t]{0.24\linewidth}
        \centering
        \includegraphics[width=\linewidth]{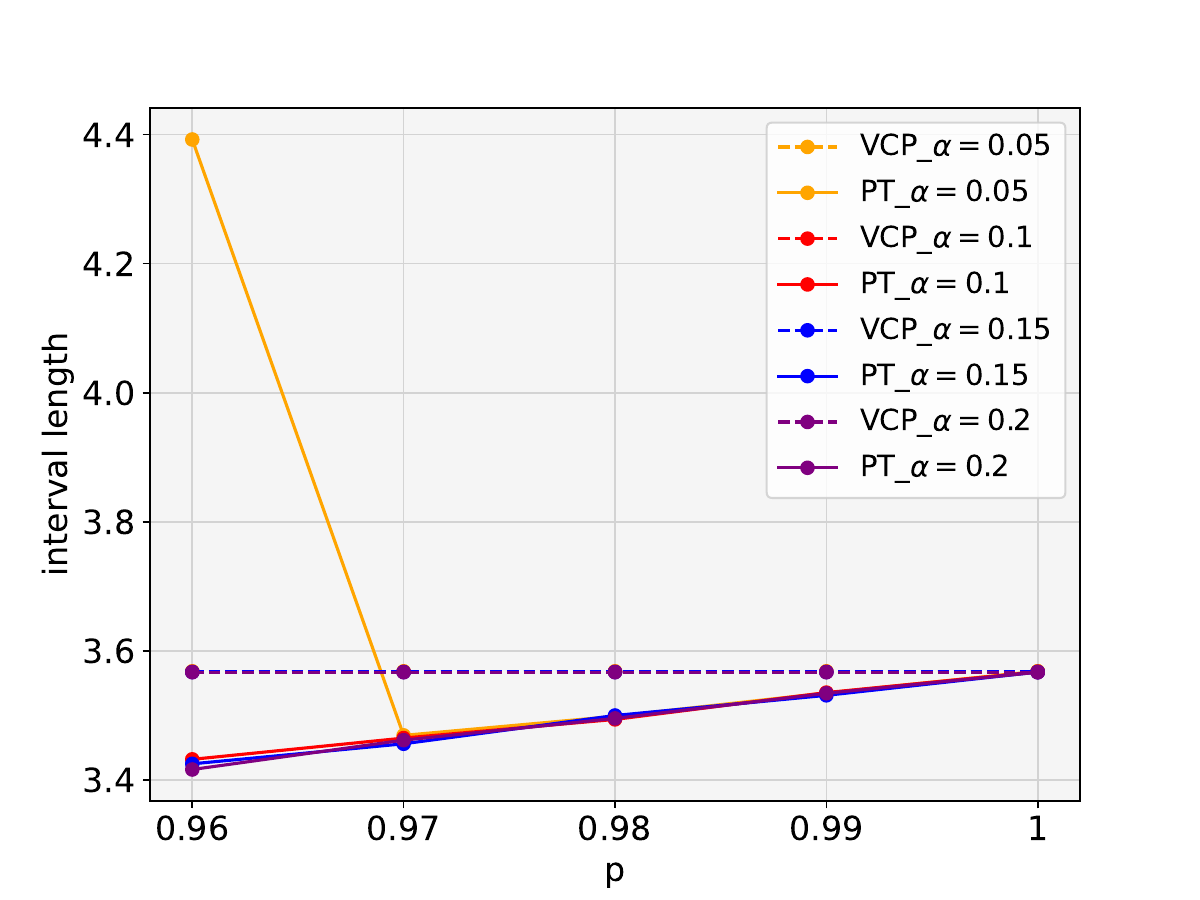}
        \caption{CQR, Probability}
        \label{fig:ablation-cqr-prob_facebook_2}
    \end{subfigure}
    \hfill
    \begin{subfigure}[t]{0.24\linewidth}
        \centering
        \includegraphics[width=\linewidth]{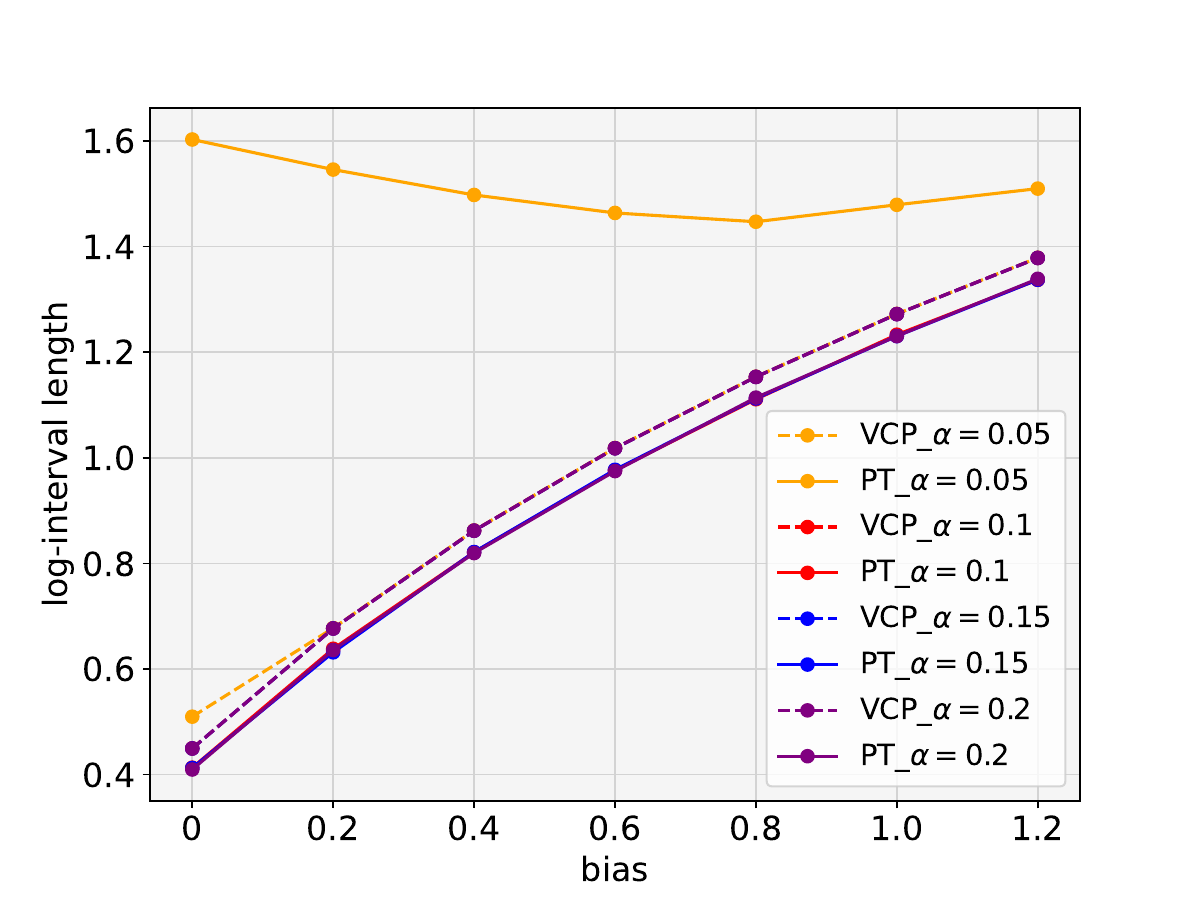}
        \caption{CQR, Misspecification}
        \label{fig:ablation-cqr-mis_facebook_2}
    \end{subfigure}
    
    \caption{Ablation studies of dataset FACEBOOK2 on different misspecification level (a, c) and probability hyperparameter (b, d), including the comparison with VCP (a-b) and CQR (c-d).}
    \label{fig:ablation_facebook_2}
    
\end{figure*}

\begin{figure*}[t]
    \centering
    \begin{subfigure}[t]{0.24\linewidth}
        \centering
        \includegraphics[width=\linewidth]{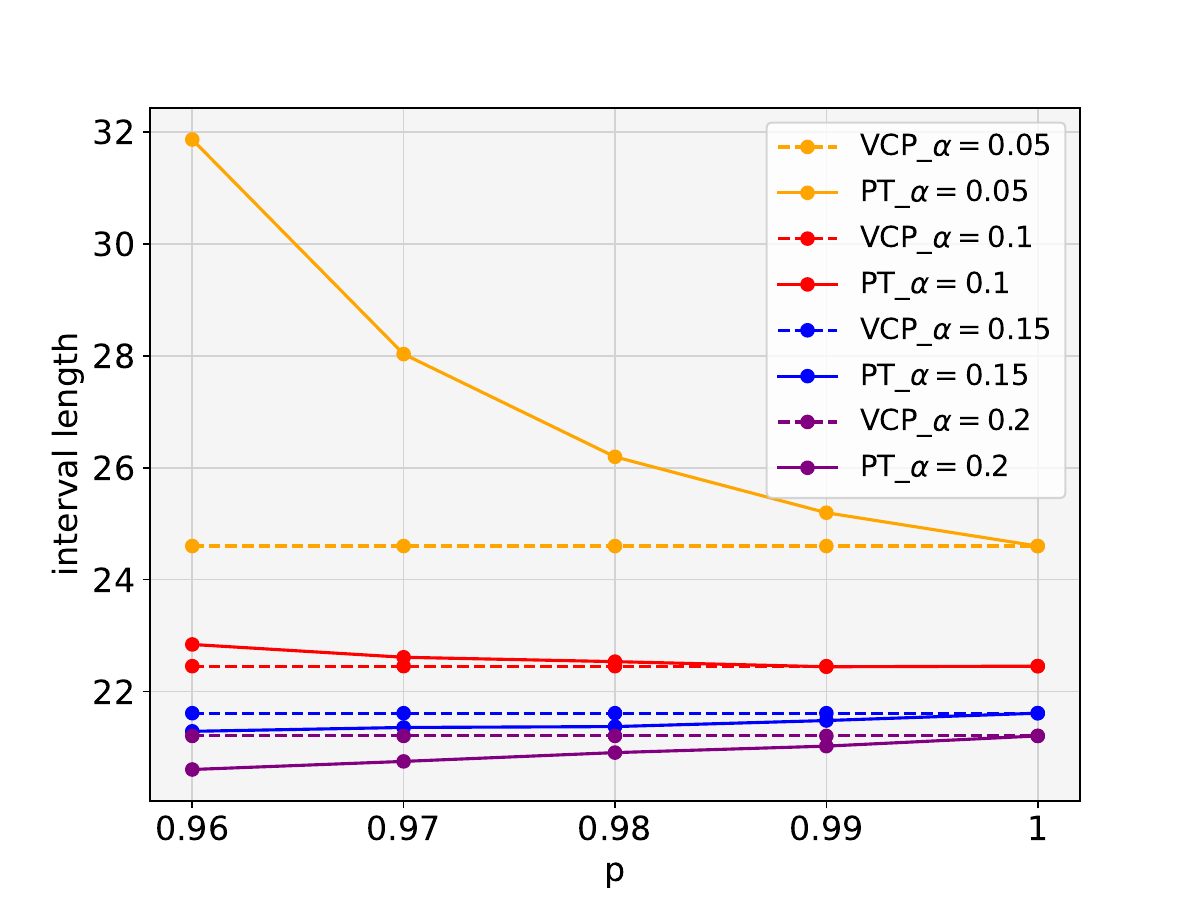}
        \caption{VCP, Probability}
        \label{fig:ablation-vcp-prob_meps_19}
    \end{subfigure}
    \hfill
    \begin{subfigure}[t]{0.24\linewidth}
        \centering
        \includegraphics[width=\linewidth]{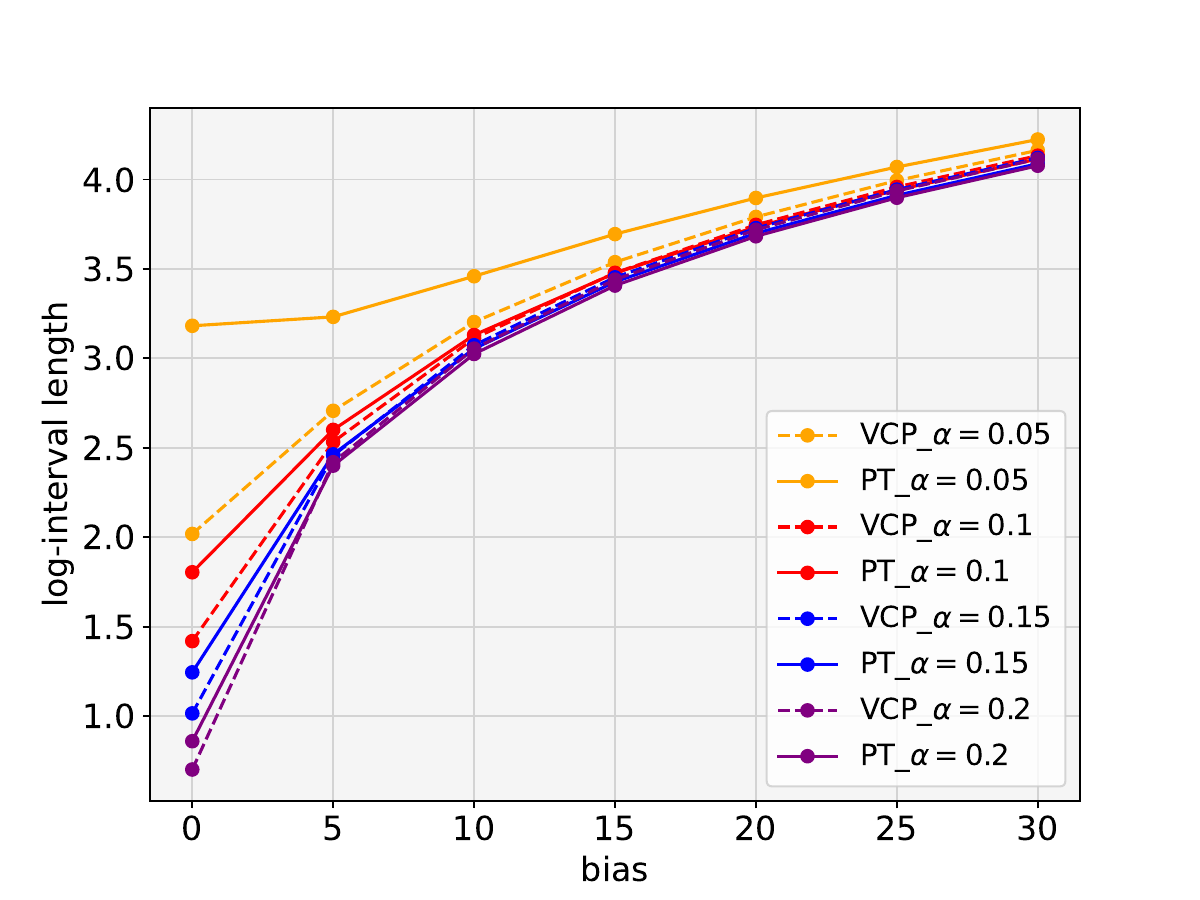}
        \caption{VCP, Misspecification}
        \label{fig:ablation-vcp-mis_meps_19}
    \end{subfigure}
    \hfill
    \begin{subfigure}[t]{0.24\linewidth}
        \centering
        \includegraphics[width=\linewidth]{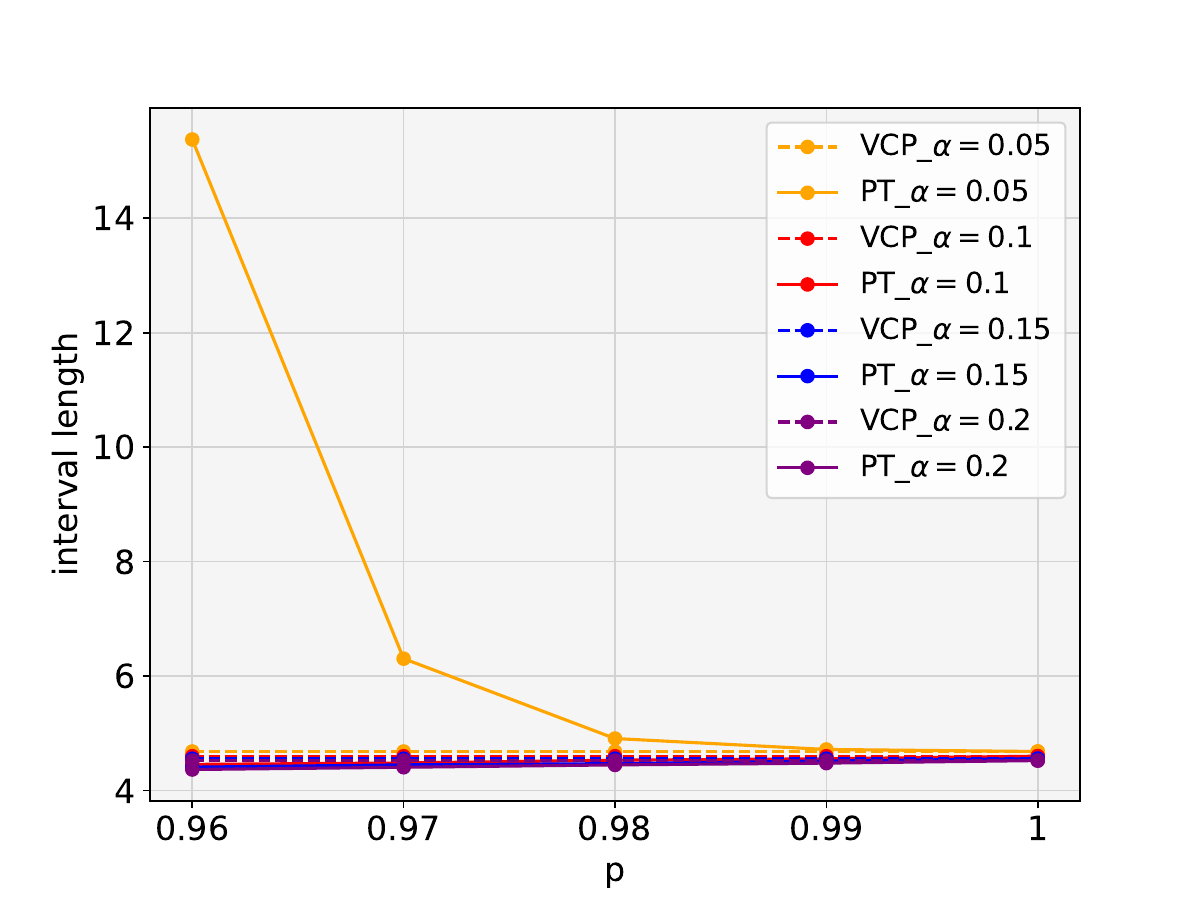}
        \caption{CQR, Probability}
        \label{fig:ablation-cqr-prob_meps_19}
    \end{subfigure}
    \hfill
    \begin{subfigure}[t]{0.24\linewidth}
        \centering
        \includegraphics[width=\linewidth]{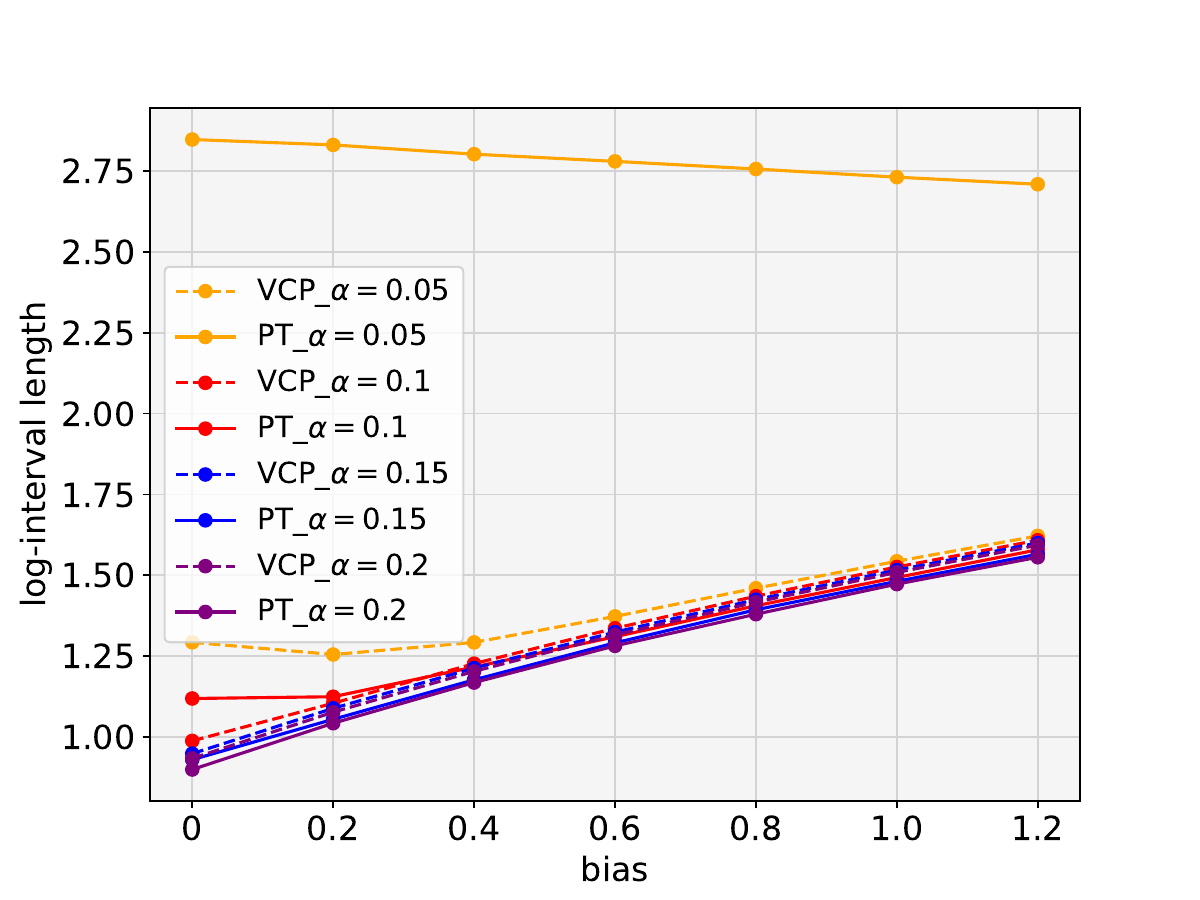}
        \caption{CQR, Misspecification}
        \label{fig:ablation-cqr-mis_meps_19}
    \end{subfigure}
    
    \caption{Ablation studies of dataset MEPS19 on different misspecification level (a, c) and probability hyperparameter (b, d), including the comparison with VCP (a-b) and CQR (c-d).}
    \label{fig:ablation_meps_19}
    
\end{figure*}

\begin{figure*}[t]
    \centering
    \begin{subfigure}[t]{0.24\linewidth}
        \centering
        \includegraphics[width=\linewidth]{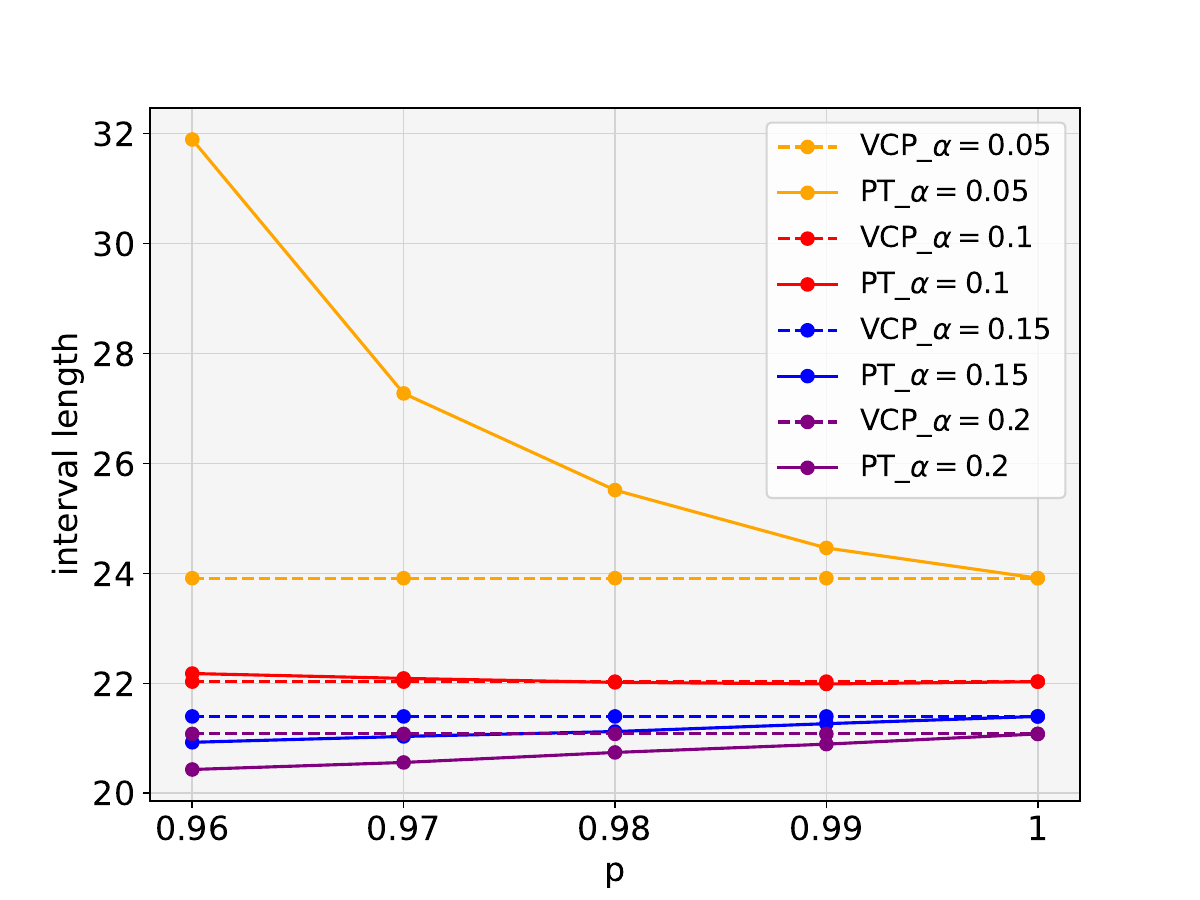}
        \caption{VCP, Probability}
        \label{fig:ablation-vcp-prob_meps_20}
    \end{subfigure}
    \hfill
    \begin{subfigure}[t]{0.24\linewidth}
        \centering
        \includegraphics[width=\linewidth]{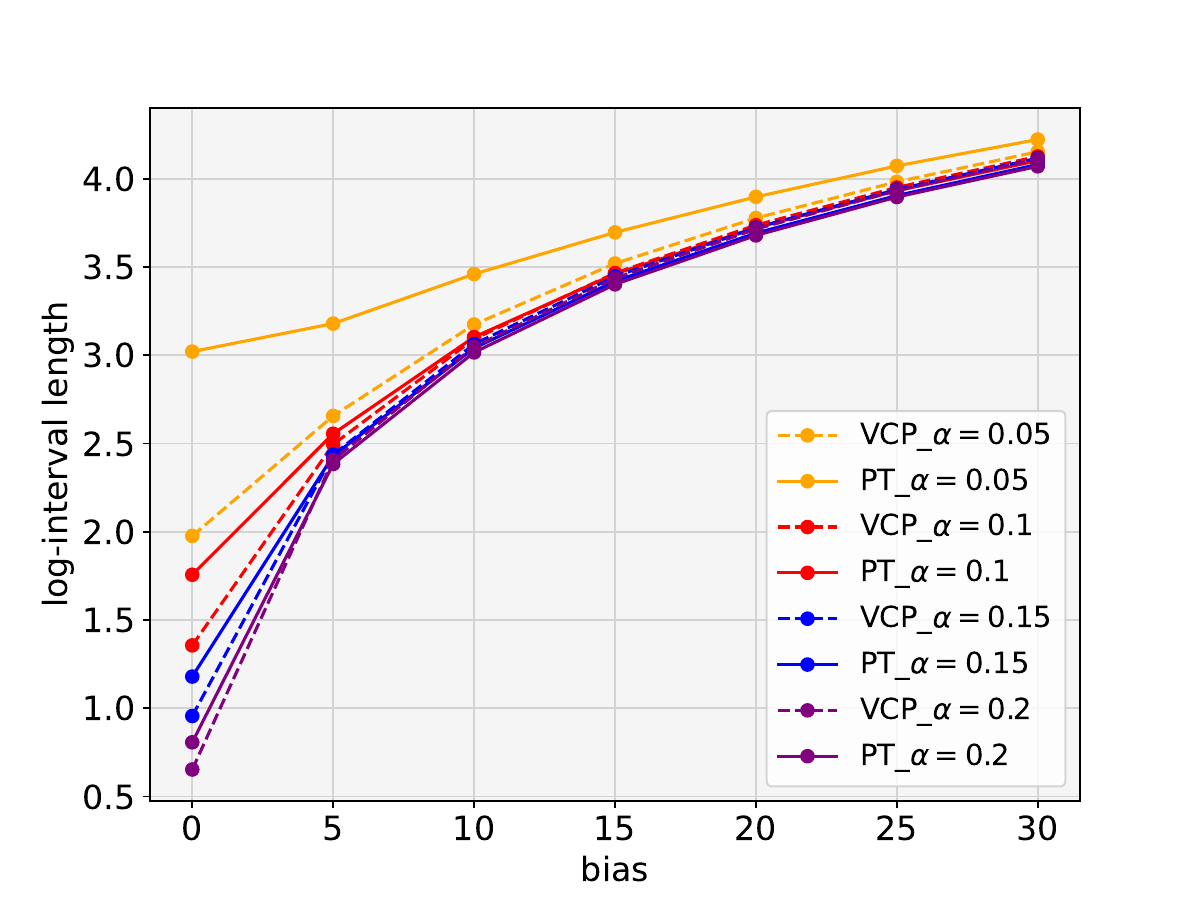}
        \caption{VCP, Misspecification}
        \label{fig:ablation-vcp-mis_meps_20}
    \end{subfigure}
    \hfill
    \begin{subfigure}[t]{0.24\linewidth}
        \centering
        \includegraphics[width=\linewidth]{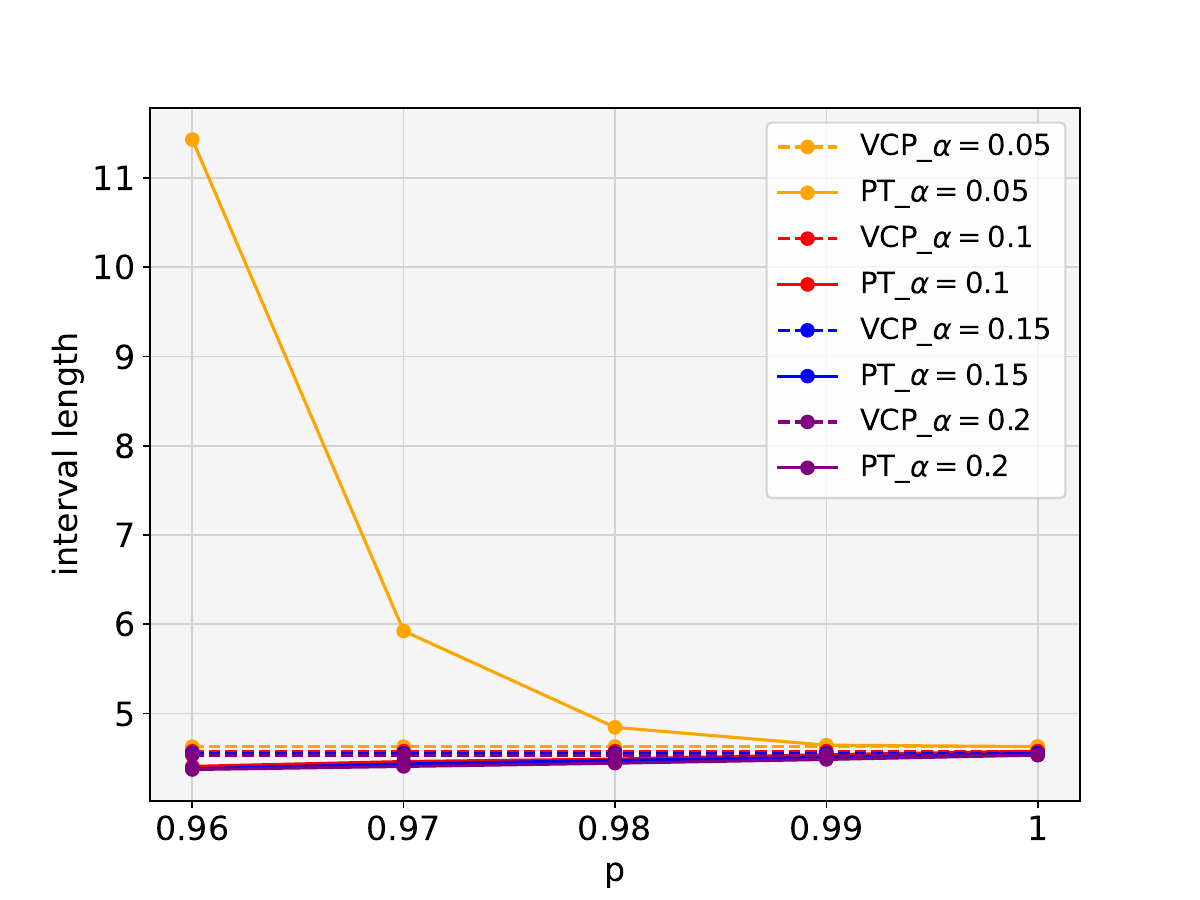}
        \caption{CQR, Probability}
        \label{fig:ablation-cqr-prob_meps_20}
    \end{subfigure}
    \hfill
    \begin{subfigure}[t]{0.24\linewidth}
        \centering
        \includegraphics[width=\linewidth]{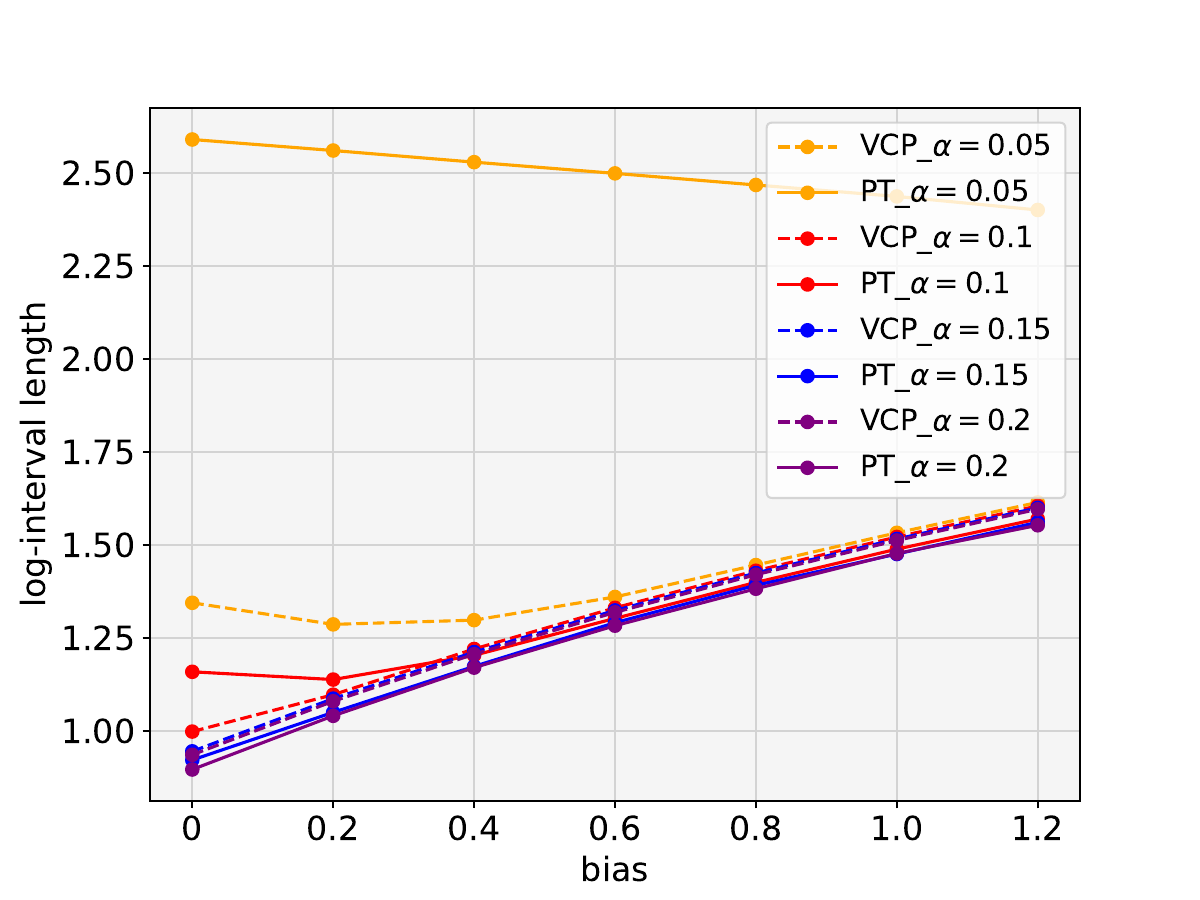}
        \caption{CQR, Misspecification}
        \label{fig:ablation-cqr-mis_meps_20}
    \end{subfigure}
    
    \caption{Ablation studies of dataset MEPS20 on different misspecification level (a, c) and probability hyperparameter (b, d), including the comparison with VCP (a-b) and CQR (c-d).}
    \label{fig:ablation_meps_20}
    
\end{figure*}

\begin{figure*}[t]
    \centering
    \begin{subfigure}[t]{0.24\linewidth}
        \centering
        \includegraphics[width=\linewidth]{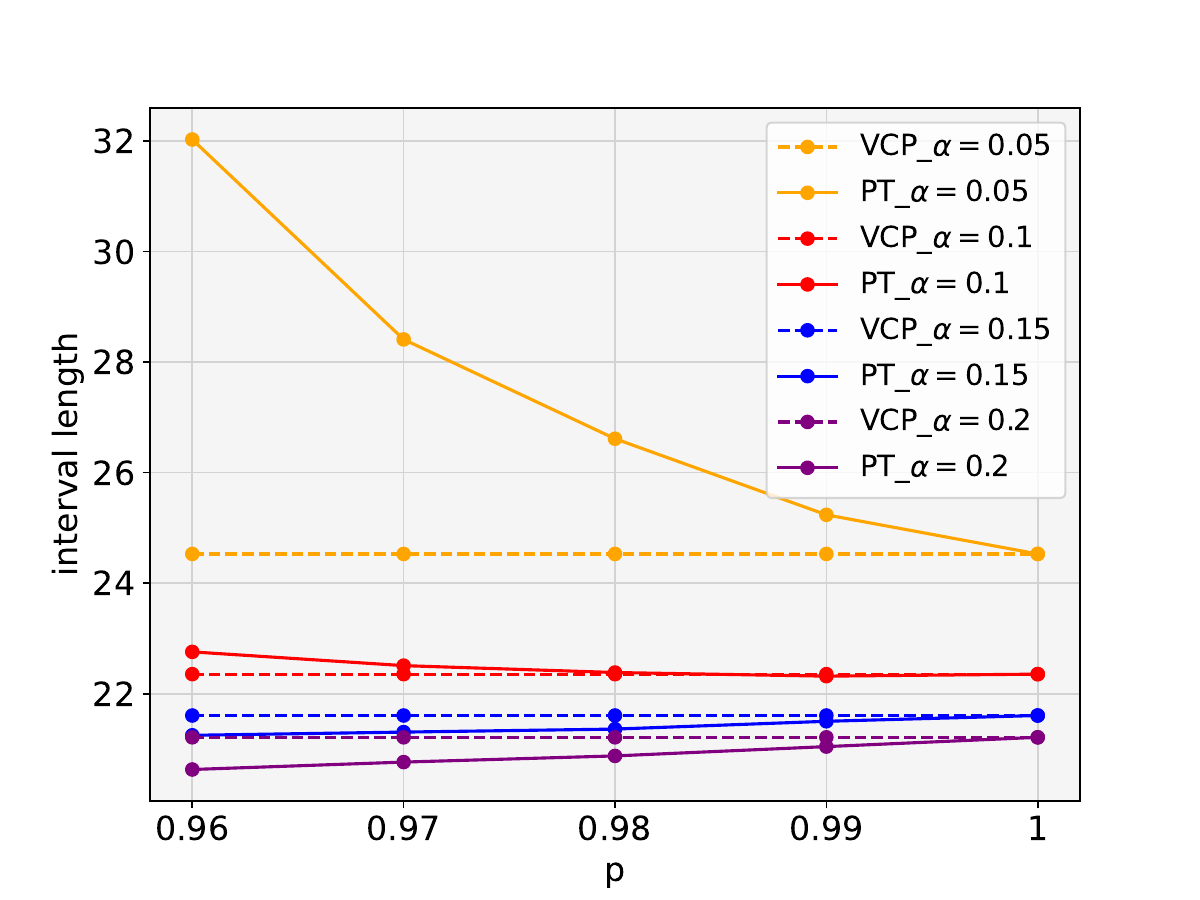}
        \caption{VCP, Probability}
        \label{fig:ablation-vcp-prob_meps_21}
    \end{subfigure}
    \hfill
    \begin{subfigure}[t]{0.24\linewidth}
        \centering
        \includegraphics[width=\linewidth]{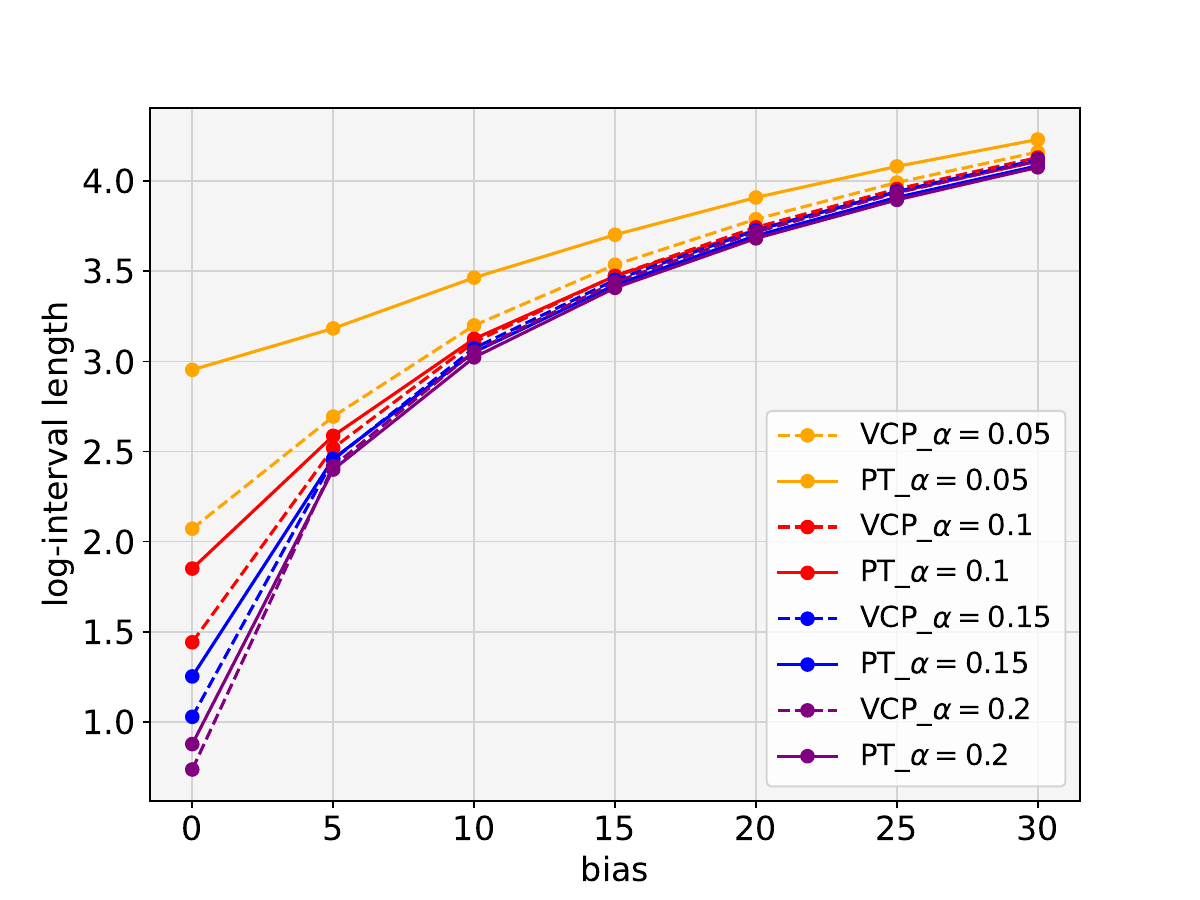}
        \caption{VCP, Misspecification}
        \label{fig:ablation-vcp-mis_meps_21}
    \end{subfigure}
    \hfill
    \begin{subfigure}[t]{0.24\linewidth}
        \centering
        \includegraphics[width=\linewidth]{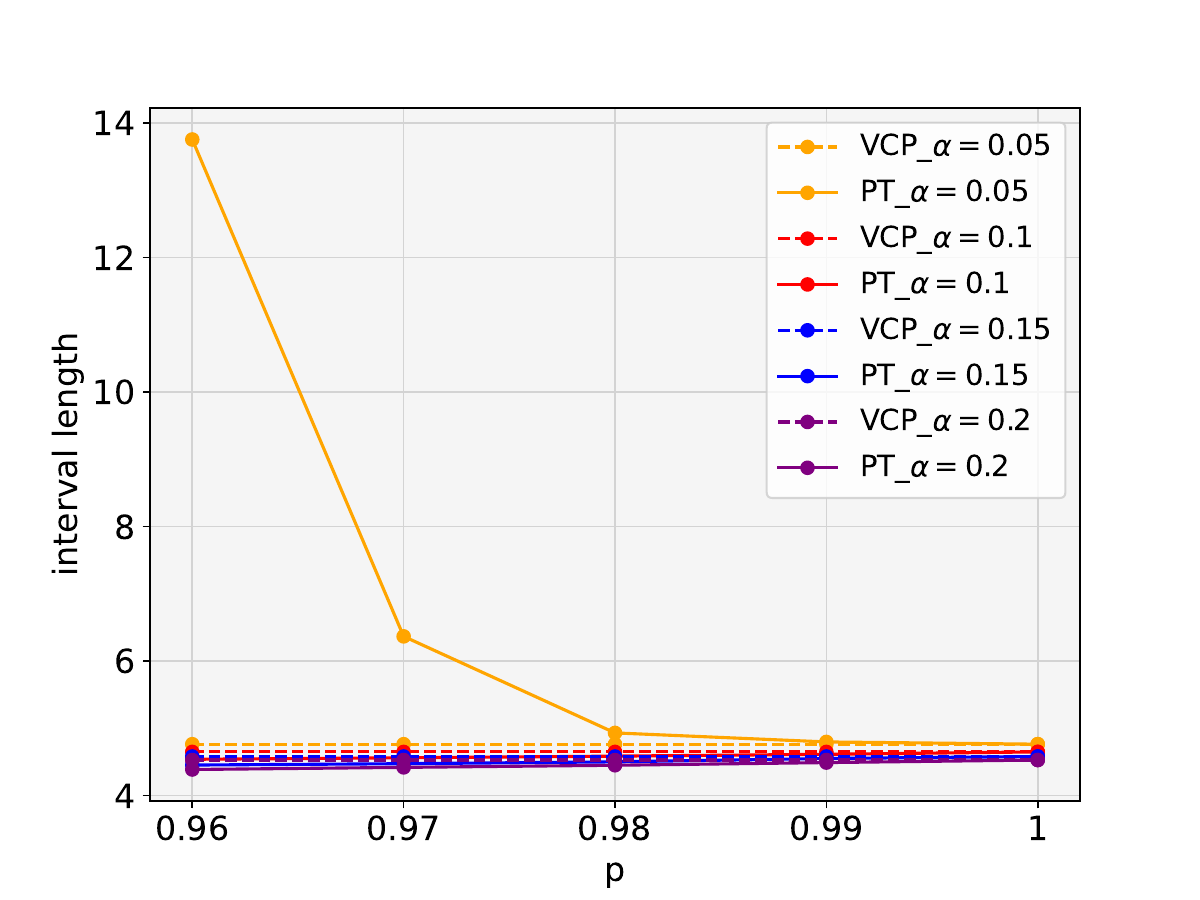}
        \caption{CQR, Probability}
        \label{fig:ablation-cqr-prob_meps_21}
    \end{subfigure}
    \hfill
    \begin{subfigure}[t]{0.24\linewidth}
        \centering
        \includegraphics[width=\linewidth]{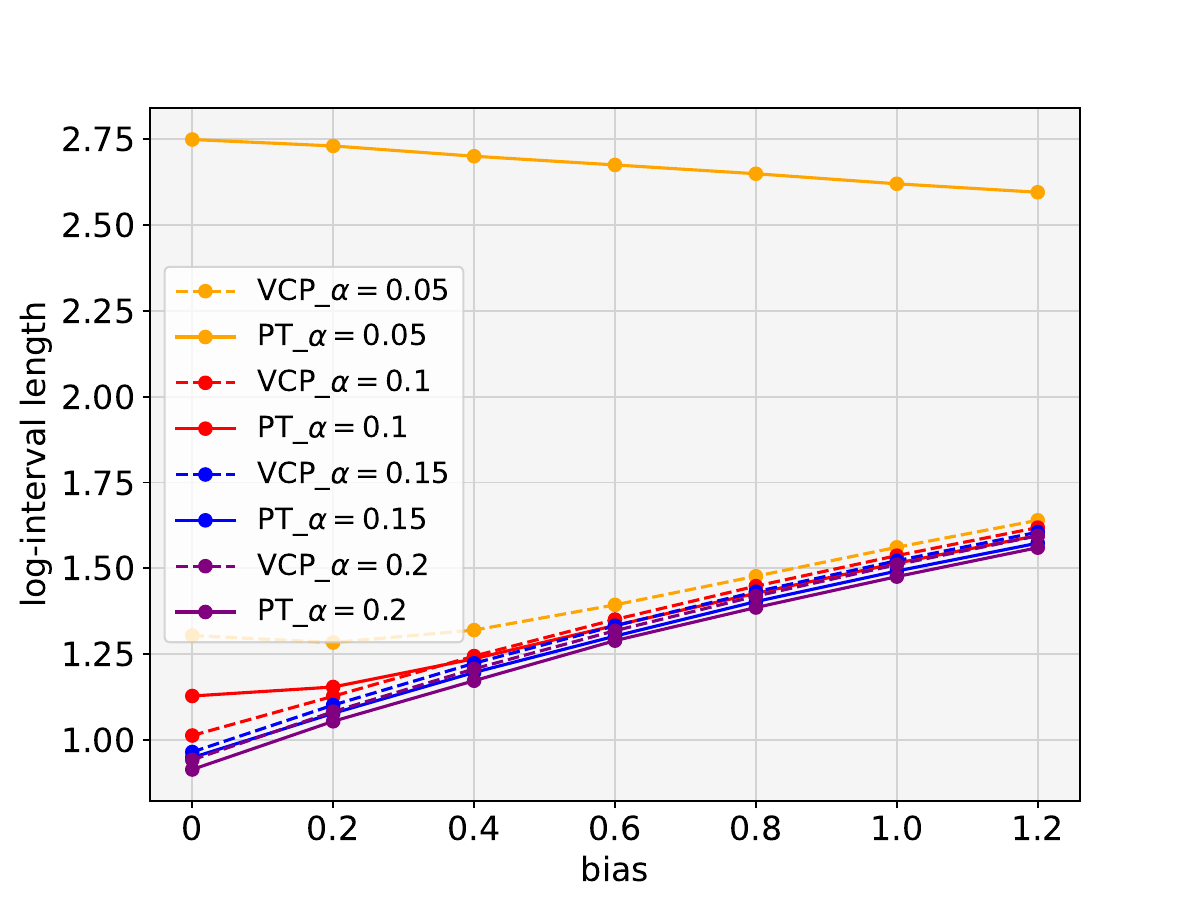}
        \caption{CQR, Misspecification}
        \label{fig:ablation-cqr-mis_meps_21}
    \end{subfigure}
    
    \caption{Ablation studies of dataset MEPS21 on different misspecification level (a, c) and probability hyperparameter (b, d), including the comparison with VCP (a-b) and CQR (c-d).}
    \label{fig:ablation_meps_21}
    
\end{figure*}

\begin{figure*}[t]
    \centering
    \begin{subfigure}[t]{0.24\linewidth}
        \centering
        \includegraphics[width=\linewidth]{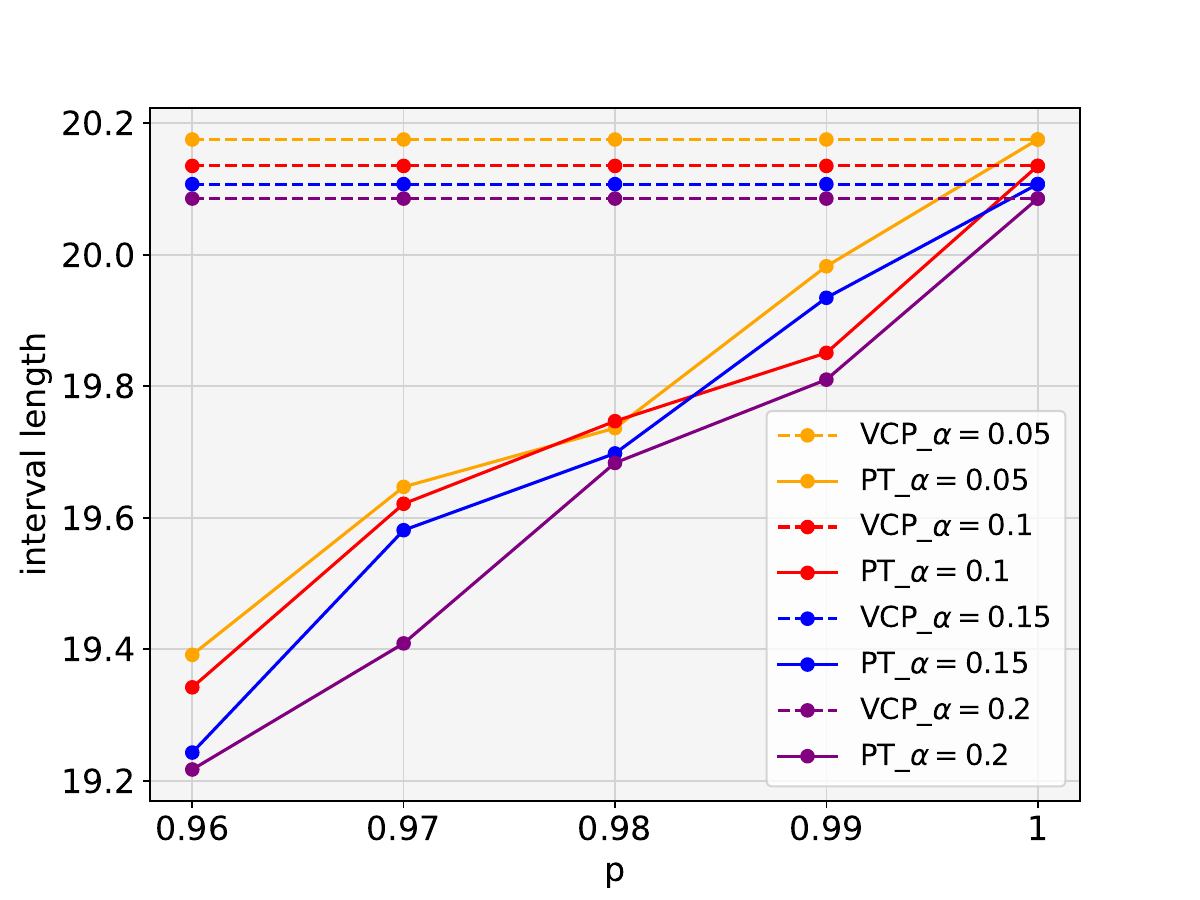}
        \caption{VCP, Probability}
        \label{fig:ablation-vcp-prob_star}
    \end{subfigure}
    \hfill
    \begin{subfigure}[t]{0.24\linewidth}
        \centering
        \includegraphics[width=\linewidth]{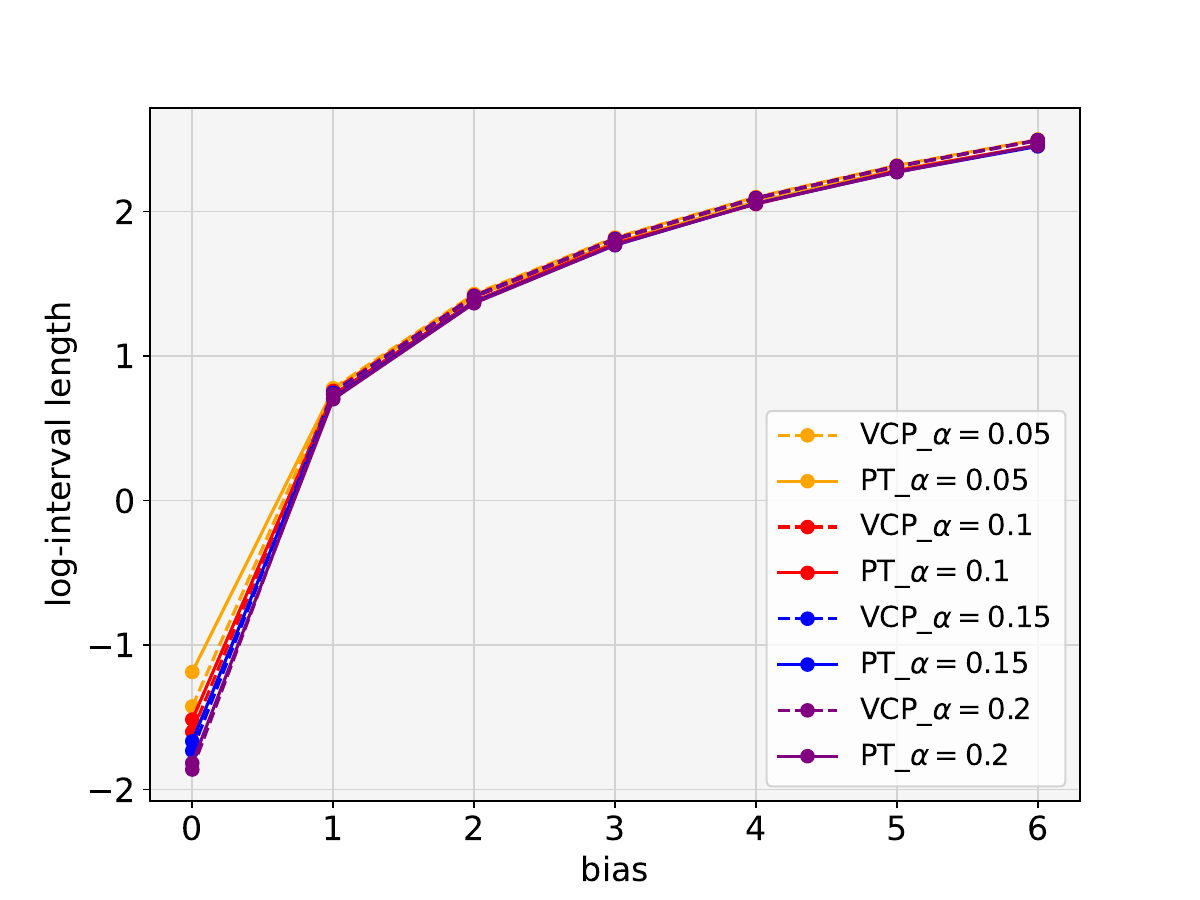}
        \caption{VCP, Misspecification}
        \label{fig:ablation-vcp-mis_star}
    \end{subfigure}
    \hfill
    \begin{subfigure}[t]{0.24\linewidth}
        \centering
        \includegraphics[width=\linewidth]{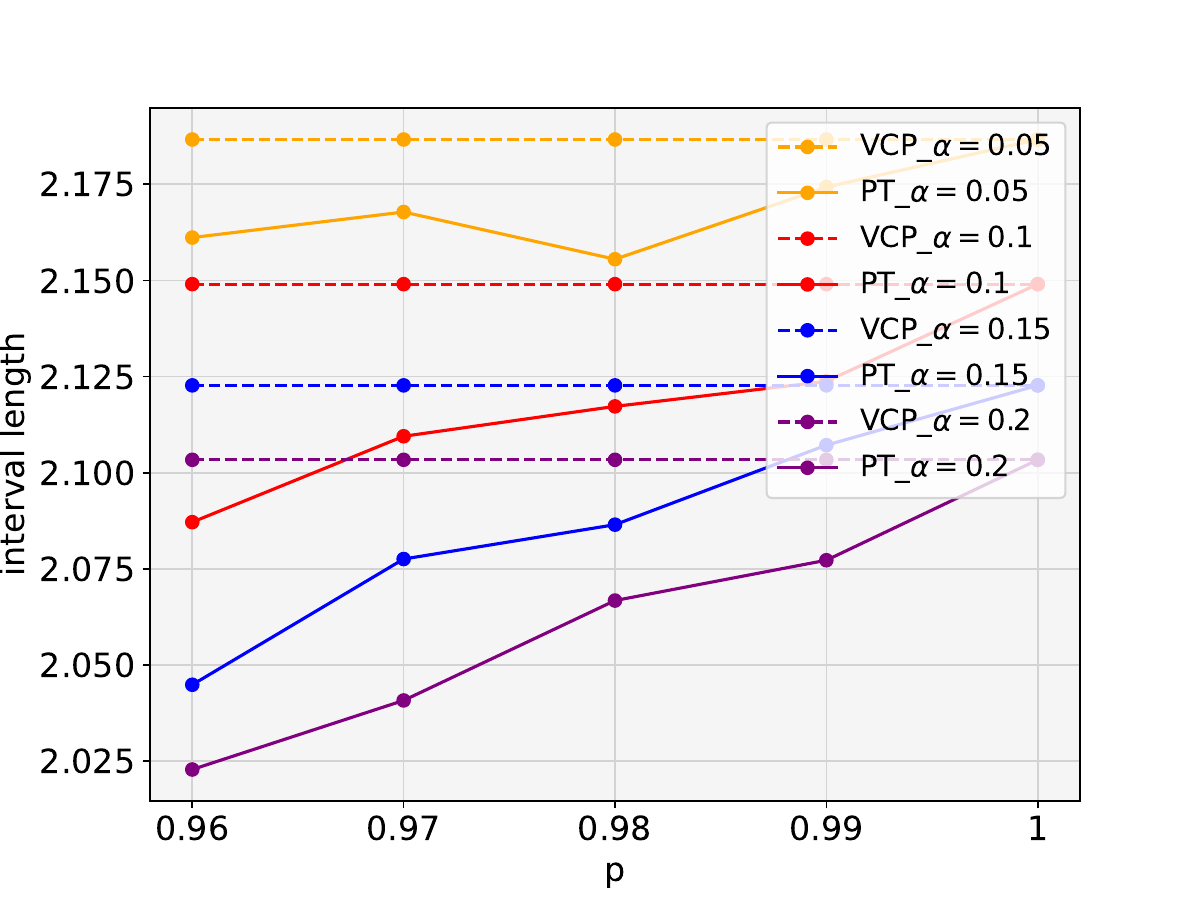}
        \caption{CQR, Probability}
        \label{fig:ablation-cqr-prob_star}
    \end{subfigure}
    \hfill
    \begin{subfigure}[t]{0.24\linewidth}
        \centering
        \includegraphics[width=\linewidth]{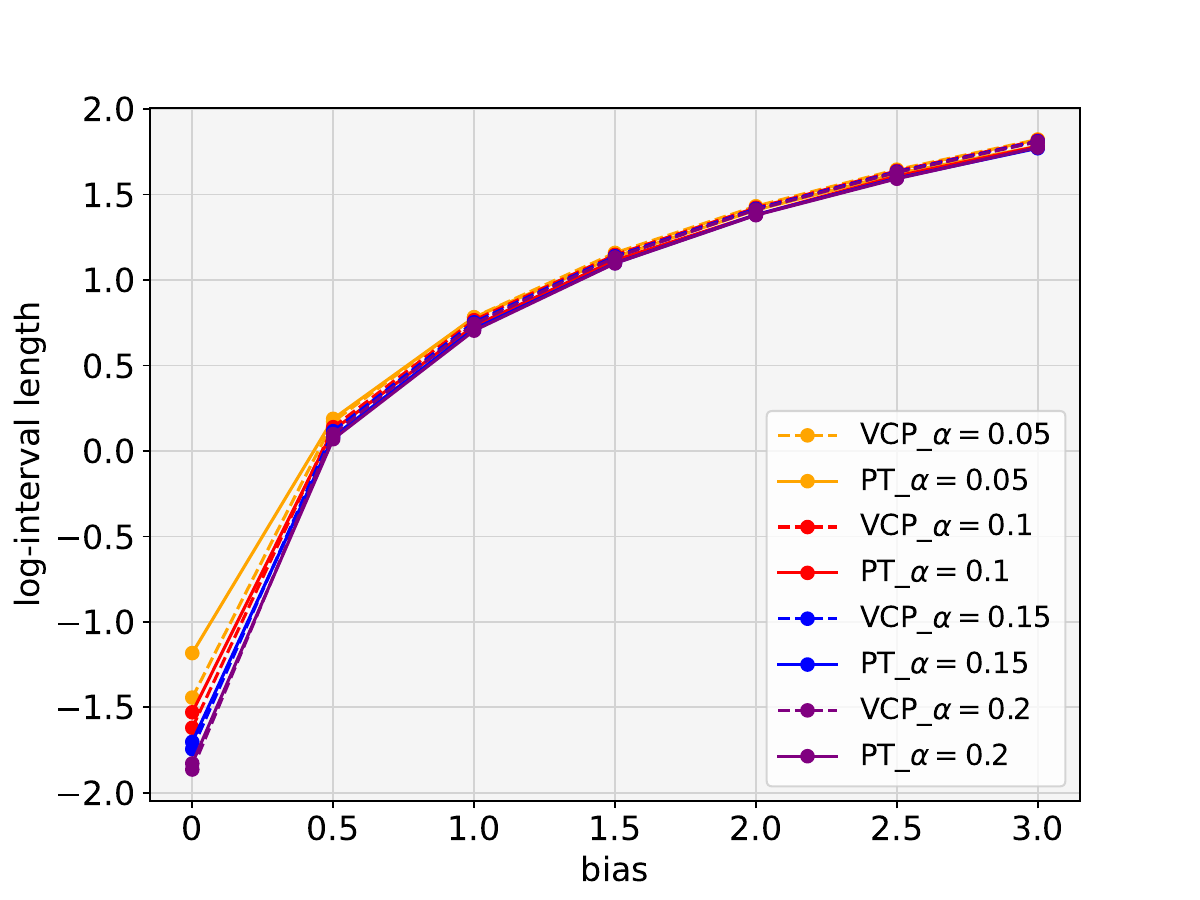}
        \caption{CQR, Misspecification}
        \label{fig:ablation-cqr-mis_star}
    \end{subfigure}
    
    \caption{Ablation studies of dataset STAR on different misspecification level (a, c) and probability hyperparameter (b, d), including the comparison with VCP (a-b) and CQR (c-d).}
    \label{fig:ablation_star}
    
\end{figure*}

\section{Experiment Details}
\label{appendix:experiment_details}
In this section, we provide implementation details of the experiments in this paper, including experiments on synthetic datasets in Appendix~\ref{appendix:motivating example} and experiments on real-world datasets in Appendix~\ref{appendix:real world datasets}.
\subsection{Synthetic Datasets}
\label{appendix:motivating example}
This section presents experiment details about the motivating example~(Section~\ref{sec:PT}) in Appendix~\ref{appendix:motivating detail} and failure case~(Section~\ref{sec:length}) in Appendix~\ref{appendix:failure detail}.
\subsubsection{Motivating Example}
\label{appendix:motivating detail}
In our motivating example, we consider a simple data-generating process where the true underlying model is linear with Gaussian mixture noise:  
\[
Y = \bm{X}^{\top}\bm{\beta} + \epsilon, \quad \bm{X} \sim \mathcal{N}(\bm{0}, I_2).
\]  
The noise term $\epsilon$ follows $\mathcal{N}(\mu, 1)$ with probability $0.5$ and $\mathcal{N}(-\mu, 1)$ with probability $0.5$. The training, calibration, and test folds are all generated from this distribution. To emulate model misspecification, we fit the training fold using a linear model with Gaussian noise. Throughout the experiments, we set $\mu = 20$, $\alpha \in \{0.1, 0.2\}$, and $p \in \{0.96, 0.98\}$. We further average results over 5 random seeds and report the corresponding standard errors. Both VCP (Algorithm~\ref{alg:vcp}) and PT-VCP are evaluated under this setting.

\subsubsection{Failure Case}
\label{appendix:failure detail}
In Figure~\ref{fig:failure case}, we present a failure case where VCP outperforms PT-VCP. Here, we modify the data-generating process to a linear model with Gaussian noise and fit it with the same linear model, so that no model misspecification arises in contrast to our motivating example. In this setting, the distribution of the score function fails to satisfy the sufficient condition in Theorem~\ref{thm:non-smooth-length}, which explains why VCP outperforms PT-VCP.

\subsection{Real World Datasets}
\label{appendix:real world datasets}
In this section, we firstly introduce the model structure in our experiments in Appendix~\ref{Model Structure}. Then we present the experiment details, including experiments on marginal and group coverage~(Appendix~\ref{appendix: marginal&group coverage}, Appendix~\ref{appendix: group detail}), regression tasks~(Appendix~\ref{appendix: regression}), classification tasks~(Appendix~\ref{appendix: classification}), ablation studies~(Appendix~\ref{appendix: ablation detail}) and interval stability~(Appendix~\ref{appendix: interval stability detail}).
\subsubsection{Model Structure}
\label{Model Structure}
In this section, we present the details of the structure of our model on real world datasets. Specifically, our model shares the same structure as that of~\citep{romano2019conformalized}.\\
\textbf{Neural Net.} Our neural network design includes three fully connected layers, with ReLU activation functions applied between each layer. The initial layer accepts an input feature vector $X$ of n dimensions and produces $64$ hidden units. The second layer mirrors this structure, generating another set of 64 hidden units. The final layer is a linear output layer that provides a pointwise prediction for the response variable $Y$. The network's parameters are optimized by minimizing a quadratic loss function. We used the Adam optimization algorithm with a constant learning rate of $5 \times 10^{-4}$, minibatch size of $64$, and a weight decay coefficient of $10^{-6}$. In addition, regularization of dropout is implemented, with a retention probability of $0.1$ for hidden units. To avoid overfitting, early stop is used and the number of training epochs is determined by cross-validation, with a maximum cap of $1000$ epochs.\\
\textbf{CQR Neural Net.} We utilize neural networks to implement CQR for quantile regression. The network structure is consistent with the one described above, with the sole difference being that the output of the quantile regression network is a two-dimensional vector, which indicates the lower and upper conditional quantiles. Additionally, the training process remains the same, except that the pinball loss function is employed instead of the quadratic loss.

\subsubsection{Marginal and Group Coverage}
\label{appendix: marginal&group coverage}
In Figure~\ref{fig:marginal&conditional coverage}, we compare the coverage of VCP and PT-VCP on the BIKE dataset~\citep{bike_sharing_275}. Specifically, we evaluate several choices of $\alpha$ and $p$, and report both the marginal coverage and the group coverage (an empirical indicator of conditional coverage). The group coverage is obtained by partitioning the data according to Day, Month, and Year.

\subsubsection{Regression Tasks}
\label{appendix: regression}
In ordinary regression tasks, we employ the neural network described in Section~\ref{Model Structure} to fit several real-world datasets: MEPS19--21~\citep{Cohen2009TheME}, BIKE~\citep{bike_sharing_275}, BLOG-DATA~\citep{buza2014feedback}, BIO~\citep{physicochemical_properties_of_protein_tertiary_structure_265}, FACEBOOK1--2~\citep{facebook_comment_volume_363}, CONCRETE~\citep{concrete_compressive_strength_165}, and STAR~\citep{DVN/SIWH9F_2008}. To mimic model misspecification, we introduce a bias term that is directly added to the logits output by the neural network. The magnitude of this bias term, which varies across datasets, is reported in Table~\ref{table:reg}. Throughout these experiments, we set $\alpha = 0.1$ and $p = 0.95$, under which both VCP (Algorithm~\ref{alg:vcp}) and PT-VCP are evaluated. We further average results over 5 random seeds and report the corresponding standard errors.

In CQR tasks, we employ the CQR neural network described in Section~\ref{Model Structure} to fit several real-world datasets: MEPS19--21~\citep{Cohen2009TheME}, BIKE~\citep{bike_sharing_275}, BLOG-DATA~\citep{buza2014feedback}, BIO~\citep{physicochemical_properties_of_protein_tertiary_structure_265}, FACEBOOK1--2~\citep{facebook_comment_volume_363}, CONCRETE~\citep{concrete_compressive_strength_165}, and STAR~\citep{DVN/SIWH9F_2008}. To mimic model misspecification, we introduce a bias term by directly adding it to both the lower and upper quantiles estimated by the quantile regression. The magnitude of this bias varies across datasets. Throughout the experiments, we set $\alpha = 0.1$ and $p = 0.95$, under which both CQR and PT-CQR are evaluated, as reported in Table~\ref{table:CQR}. We further average results over 5 random seeds and report the corresponding standard errors.

\subsubsection{Classification Task}
\label{appendix: classification}
In classification tasks, we apply PT to the real-world IMAGENET-VAL dataset~\citep{Deng2009ImageNetAL} using several pre-trained models listed in Table~\ref{table:class}, following a setting similar to~\citet{angelopoulos2020uncertainty}. To simulate model misspecification, we introduce a bias to the logits of several classes before the softmax operation. The magnitude of this bias is scaled according to the outputs, and the number of biased classes is specified by an index range in our experiments. For the classification task, we adopt RAPS~\citep{angelopoulos2020uncertainty} as the score function and set $\alpha = 0.1$ and $p = 0.95$. We further average results over 5 random seeds and report the corresponding standard errors.

\subsubsection{Ablation Studies}
\label{appendix: ablation detail}
The ablation studies mainly focus on regression tasks, including both ordinary regression and CQR. We conduct experiments on all datasets used in the regression setting. For these ablation studies, we set $\alpha \in \{0.05, 0.1, 0.15, 0.2\}$, $p \in \{0.96, 0.97, 0.98, 0.99, 1.00\}$, and apply dataset-specific bias magnitudes.

\subsubsection{Group Coverage}
\label{appendix: group detail}
To demonstrate that PT-VCP does not degrade conditional coverage compared to VCP, we conduct experiments measuring group coverage on the MEPS19--21~\citep{Cohen2009TheME}, BIKE~\citep{bike_sharing_275}, and STAR~\citep{DVN/SIWH9F_2008} datasets. The grouping strategies are summarized in Table~\ref{tab:group_coverage}. In these experiments, we set $\alpha = 0.1$ and $p = 0.95$, and further average results over 5 random seeds, reporting the corresponding standard errors.

\subsubsection{Interval Stability}
\label{appendix: interval stability detail}
To compare the newly proposed metric \emph{interval stability} between VCP and PT-VCP, we conduct experiments on both regression and classification tasks, using different datasets and pre-trained models, respectively. The results are reported in Table~\ref{table:interval_stability_reg}, Table~\ref{table:interval_stability_cqr}, and Table~\ref{table:interval_stability_cls}. We measure \emph{interval stability} by computing the variance of the interval (or set) length (or size) for repeated predictions on the same input, and then averaging this variance over all inputs $X$. Results are further averaged over 5 random seeds, with the corresponding standard errors reported.


\end{document}